\newtheorem{theorem}{Theorem}
\theoremstyle{plain}
\newtheorem{corollary}{Corollary}
\newtheorem{definition}{Definition}
\newtheorem{lemma}{Lemma}
\newtheorem{notation}{Notation}
\newtheorem{proposition}{Proposition}
\newtheorem{remark}{Remark}
\numberwithin{equation}{section}
\begin{document}
\title[Deep Neural Networks]{Deep Neural Networks: A Formulation Via Non-Archimedean Analysis}
\author{W. A. Z\'{u}\~{n}iga-Galindo}
\address{University of Texas Rio Grande Valley\\
School of Mathematical \& Statistical Sciences\\
One West University Blvd\\
Brownsville, TX 78520, United States}
\email{wilson.zunigagalindo@utrgv.edu}
\thanks{The author was partially supported by the Lokenath Debnath Endowed Professorship.}
\subjclass{Primary 68T07, 65D15; Secondary 41A30, 11S85}
\keywords{Artificial neural networks and deep learning, Algorithms for approximation of
functions, non-Archimedean fields, p-adic analysis.}

\begin{abstract}
We introduce a new class of deep neural networks (DNNs) with multilayered
tree-like architectures. The architectures are codified by using numbers from
the ring of integers of non-Archimedean local fields. These rings have a
natural hierarchical organization as infinite rooted trees. Natural morphisms
on these rings allow us to construct finite multilayered architectures. The
new DNNs are robust universal approximators of real-valued functions defined
on the mentioned rings. We also show that the DNNs are robust universal
approximators of real-valued square-integrable functions defined in the unit interval.

\end{abstract}
\maketitle

\section{Introduction}

It is known that neural networks (NN) can learn hierarchically. However, there
is currently no theoretical framework that explains how this hierarchical
learning occurs; see, e.g., \cite{Abbet et al}, and the references therein.
The study of deep learning architectures that can learn from hierarchically
organized data is an active research area nowadays; see, e.g., \cite{Abbet et
al}, \cite{Zhang et al}, and the references therein. In \cite{Abbet et al},
the authors propose to find natural classes of hierarchical functions and to
study how regular deep neural networks (DNNs) can learn them. In this paper,
we follow the converse approach. Using non-Archimedean analysis, we construct
a new class of DNNs with tree-like architectures. These DNNs are robust
universal approximators of hierarchical functions and also universal
approximators of standard functions. The new hierarchical DNNs compute
approximations using arithmetic operations in non-Archimedean fields, but they
can be trained using the traditional backpropagation method.

A\ non-Archimedean vector space $\left(  M,\left\Vert \cdot\right\Vert
\right)  $ is a normed vector space whose norm satisfies
\[
\left\Vert x+y\right\Vert \leq\max\left\{  \left\Vert x\right\Vert ,\left\Vert
y\right\Vert \right\}  ,
\]
for any two vectors $x$, $y$ in $M$.\ In such a space, the balls are organized
hierarchically. This type of space plays a central role in formulating models
of complex multi-level systems; in such systems, the hierarchy is significant;
see, e.g., \cite{Iordache}-\cite{KKZuniga}, and the references therein. These
systems are composed of several subsystems and exhibit emergent behavior
resulting from nonlinear interactions across multiple levels of organization.
The field of $p$-adic numbers $\mathbb{Q}_{p}$ and the field of formal Laurent
series $\mathbb{F}_{p}\left(  \left(  T\right)  \right)  $ are paramount
examples of non-Archimedean vector spaces.

Nowadays, it is widely accepted that cortical neural networks are arranged in
fractal or self-similar patterns and have the small-world property, see, e.g.,
\cite{Hilgetag et al}, \cite{Sporns}, and the references therein. Methods of
non-Archimedean analysis have been successfully used to construct models for
this type of network, see, e.g., \cite{Khrennikov}, \cite{Zuniga-Zambrano},
and the references therein. A restricted Boltzmann machine (RBM) is a
generative stochastic artificial neural network based on the dynamics of a
spin glass. The $p$-adic spin glasses constitute a particular case of
hierarchical spin glasses. The RBM corresponding to a $p$-adic spin glass\ is
a hierarchical neural network. These neural networks are universal
approximators, \cite{Zuniga-DBNs}, and their dynamics can be understood via
Euclidean quantum field theory, \cite{Zuniga et al}, \cite{Zuniga-ATMP}.
$p$-Adic versions of the cellular networks were developed
in\ \cite{Zambrano-Zuniga-1}-\cite{Zambrano-Zuniga-2}. This work continues our
investigation of hierarchical NNs via non-Archimedean analysis.

To discuss our results, here, we restrict to the case of the ring of integers
$\mathbb{F}_{p}\left[  \left[  T\right]  \right]  $ of the field of formal
Laurent series with coefficients in a finite field $\mathbb{F}_{p}=\left\{
0,\ldots,p-1\right\}  $, with $p$ a prime number,
\[
\mathbb{F}_{p}\left[  \left[  T\right]  \right]  =\left\{  T^{r}%
%TCIMACRO{\dsum \limits_{k=0}^{\infty}}%
%BeginExpansion
{\displaystyle\sum\limits_{k=0}^{\infty}}
%EndExpansion
a_{k}T^{k};a_{k}\in\mathbb{F}_{p}\text{, }a_{0}\neq0,r\in\mathbb{Z}\right\}
\cup\left\{  0\right\}  ,
\]
where $T$ is an indeterminate. The function $\left\vert T^{r}\sum
_{k=0}^{\infty}a_{k}T^{k}\right\vert =p^{-r}$, $\left\vert 0\right\vert =0$
defines a norm. Then $\left(  \mathbb{F}_{p}\left[  \left[  T\right]  \right]
,\left\vert \cdot\right\vert \right)  $ is a non-Archimedean vector space,
which is also a complete space, and $\left(  \mathbb{F}_{p}\left[  \left[
T\right]  \right]  ,+\right)  $ is a compact topological group. We denote by
$dx$ a Haar measure in $\mathbb{F}_{p}\left[  \left[  T\right]  \right]  $.
The points of $\mathbb{F}_{p}\left[  \left[  T\right]  \right]  $ are
naturally organized in an infinite rooted tree. The points of the form
\[
G_{l}=\left\{  a_{0}+\ldots+a_{l-1}T^{l-1};a_{k}\in\mathbb{F}_{p}\right\}
\subset\mathbb{F}_{p}\left[  \left[  T\right]  \right]  ,\text{ }l\geq1,
\]
constitute the $l$-th layer (level) of the tree $\mathbb{F}_{p}\left[  \left[
T\right]  \right]  $. Geometrically, $G_{l}$ is a finite rooted tree, but
also, $G_{l}$ is a finite additive group. We assume that \ at every point of
$G_{l}$ there is a neuron. In this way, we will identify $\mathbb{F}%
_{p}\left[  \left[  T\right]  \right]  $ with an infinite hierarchical
network; $G_{l}$ is a discrete approximation of $\mathbb{F}_{p}\left[  \left[
T\right]  \right]  $ obtained by cutting the infinite \ tree $\mathbb{F}%
_{p}\left[  \left[  T\right]  \right]  $\ at layer $l$.

A ball in $\mathbb{F}_{p}\left[  \left[  T\right]  \right]  $ with center at
$a\in G_{L}$, $L\geq1$, and radius $p^{-L}$ is the set $B_{-L}(a)=a+T^{L}%
\mathbb{F}_{p}\left[  \left[  T\right]  \right]  $. This ball is an infinite
rooted tree, with root $a$. We denote by $\Omega\left(  p^{L}\left\vert
x-a\right\vert \right)  $ the characteristic function of the ball $B_{-L}(a)$. 

The set of functions%
\[
\varphi\left(  x\right)  =%
%TCIMACRO{\dsum \limits_{a\in G_{l}}}%
%BeginExpansion
{\displaystyle\sum\limits_{a\in G_{l}}}
%EndExpansion
c_{a}\Omega\left(  p^{l}\left\vert x-a\right\vert \right)  \text{, }c_{a}%
\in\mathbb{R},
\]
form an $\mathbb{R}$-vector space, denoted as $\mathcal{D}^{l}=\mathcal{D}%
^{l}\left(  \mathbb{F}_{p}\left[  \left[  T\right]  \right]  \right)  $. The
functions from this space are naturally interpreted as hierarchical functions.
The $\mathbb{R}$-vector space $\mathcal{D}=\cup_{l}\mathcal{D}^{l}$ is called
the space of test functions; this space is dense in $L^{\rho}\left(
\mathbb{F}_{p}\left[  \left[  T\right]  \right]  ,dx\right)  $, $\rho
\in\left[  1,\infty\right]  $.

We pick an activation function $\sigma_{M}:\mathbb{R}\rightarrow\left(
-M,M\right)  $, with $M>0$, a weight function $w\in L^{\infty}\left(
\mathbb{F}_{p}\left[  \left[  T\right]  \right]  \times\mathbb{F}_{p}\left[
\left[  T\right]  \right]  ,dx\text{ }dy\right)  $,\ and bias function
$\theta\in L^{\infty}\left(  \mathbb{F}_{p}\left[  \left[  T\right]  \right]
,dx\right)  $. A continuous DNN on $\mathbb{F}_{p}\left[  \left[  T\right]
\right]  $ is a function of type%
\begin{equation}
Y(x)=\sigma_{M}\left(  \text{ }\int\limits_{\mathbb{F}_{p}\left[  \left[
T\right]  \right]  }w(x,y)X(y)d^{N}y+\theta\left(  x\right)  \right)
,\label{Perceptron}%
\end{equation}
where $X\in L^{\infty}\left(  \mathbb{F}_{p}\left[  \left[  T\right]  \right]
,dx\right)  $, is the state of the DNN. At first sight, (\ref{Perceptron})
seems to be a continuous perceptron, but \ due to the fact that $\mathbb{F}%
_{p}\left[  \left[  T\right]  \right]  $\ has a hierarchical structure,
(\ref{Perceptron}) is a hierarchical DNN. For practical purposes, only
discrete versions of (\ref{Perceptron}) are relevant. Here, we consider DNNs
where $X\in\mathcal{D}^{l-1}\left(  \mathbb{F}_{p}\left[  \left[  T\right]
\right]  \right)  $, $\theta\in\mathcal{D}^{l}\left(  \mathbb{F}_{p}\left[
\left[  T\right]  \right]  \right)  $, $w\in\mathcal{D}^{l}\left(
\mathbb{F}_{p}\left[  \left[  T\right]  \right]  \times\mathbb{F}_{p}\left[
\left[  T\right]  \right]  \right)  $, for $l\geq2$, then $Y\in\mathcal{D}%
^{l}\left(  \mathbb{F}_{p}\left[  \left[  T\right]  \right]  \right)  $, and
\begin{equation}
Y\left(  b\right)  =\sigma_{M}\left(
%TCIMACRO{\dsum \limits_{a\in G_{l}}}%
%BeginExpansion
{\displaystyle\sum\limits_{a\in G_{l}}}
%EndExpansion
X\left(  \Lambda_{l}\left(  a\right)  \right)  w\left(  b,a\right)
+\theta\left(  b\right)  \right)  \text{, for }b\in G_{l}\text{,}%
\label{Perceptron_Discrete}%
\end{equation}
where $\Lambda_{l}:G_{l}\rightarrow G_{l-1}$ is\ the natural group
homomorphism. Using (\ref{Perceptron_Discrete}), we construct hierarchical
DNNs with neurons organized in a tree-like structure, with multiple layers.
These DNNs can be trained using the standard backpropagation method, see Lemma
\ref{Lemma_Key}. These DNNs are robust universal approximators: given $f\in
L^{\rho}\left(  \mathbb{F}_{p}\left[  \left[  T\right]  \right]  ,dx\right)
$, with $\left\Vert f\right\Vert _{\rho}<M$, $\rho\in\left[  1,\infty\right]
$, for any input $X\in\mathcal{D}^{L}\left(  \mathbb{F}_{p}\left[  \left[
T\right]  \right]  \right)  $, $L\geq1$, there exist $\sigma_{M}%
,w\in\mathcal{D}^{L+\Delta}\left(  \mathbb{F}_{p}\left[  \left[  T\right]
\right]  \times\mathbb{F}_{p}\left[  \left[  T\right]  \right]  \right)
,\theta\in\mathcal{D}^{L+\Delta}\left(  \mathbb{F}_{p}\left[  \left[
T\right]  \right]  \right)  $, where $\Delta$ is the depth of the network,
such that the output of the network $Y\in\mathcal{D}^{L+\Delta}\left(
\mathbb{F}_{p}\left[  \left[  T\right]  \right]  \right)  $ satisfies
$\left\Vert Y-f\right\Vert _{\rho}<\epsilon$; furthermore, this approximation
remains valid for $w$ and $\theta$ belonging to small balls, see Theorem
\ref{Theorem1}. These results can be extended to approximations of a finite
set of functions, and \ functions defined on arbitrary compact subsets of
$\mathbb{F}_{p}\left(  \left(  T\right)  \right)  $; see Corollary
\ref{Corollary2}\ and Theorem \ref{Theorem1A}. It is relevant to mention that
the results mentioned are still valid if $w\left(  b,a\right)  =w\left(
b-a\right)  $ in (\ref{Perceptron_Discrete}); this case corresponds to a
convolutional DNN.

A natural question is if the hierarchical DNNs can approximate
non-hierarchi\-cal functions, i.e., functions from $L^{\rho}\left(  \left[
0,1\right]  ,dt\right)  $, with $\left\Vert f\right\Vert _{\rho}<M$, $\rho
\in\left[  1,\infty\right]  $, where $dt$ denotes the Lebesgue measure on
$\mathbb{R}$. It turns out that $L^{\rho}\left(  \left[  0,1\right]
,dt\right)  \simeq L^{\rho}\left(  \mathbb{F}_{p}\left[  \left[  T\right]
\right]  ,dx\right)  $, where $\simeq$\ denotes a linear surjective isometry;
see Theorem \ref{Theorem2}. Therefore, hierarchical DNNs are robust universal
approximators of non-hierarchical functions, see Theorem \ref{Theorem3A}.

In \cite{Abbet et al}, the authors use the Fourier-Walsh series to approximate
hierarchi\-cal functions. The isometry $L^{2}\left(  \left[  0,1\right]
,dt\right)  \simeq L^{2}\left(  \mathbb{F}_{p}\left[  \left[  T\right]
\right]  ,dx\right)  $ maps orthonormal bases of $L^{2}\left(  \mathbb{F}%
_{p}\left[  \left[  T\right]  \right]  ,dx\right)  $ into orthonormal bases of
$L^{2}\left(  \left[  0,1\right]  ,dt\right)  $, cf. Theorem \ref{Theorem4}.
By choosing $p=2$, \ and a suitable orthonormal basis in $L^{2}\left(
\mathbb{F}_{p}\left[  \left[  T\right]  \right]  ,dx\right)  $, we recover the
classical Fourier-Walsh series. Then, the hierarchical functions used in
\cite{Abbet et al} are elements of $L^{2}\left(  \mathbb{F}_{p}\left[  \left[
T\right]  \right]  ,dx\right)  $, and also, the notion hierarchy used there
has a non-Archimedean origin.

Our approach is inspired by the classical result asserting that the
perceptrons are universal approximators; see, e.g., \cite[Chapter 9]{Calin},
\cite{Cybenko}-\cite{Grivonal et al}, \cite{HORNIK}, \cite{Pinkus}, among many
references. However, (\ref{Perceptron_Discrete}) is not a classical
perceptron; see Remark \ref{Remark_Cybenko}. Furthermore, the input to our
DNNs is a test function, i.e., it has a hierarchical representation. We have
implemented some $p$-adic NNs for image processing; see
\cite{Zambrano-Zuniga-2}, \cite{Zuniga et al}. In this application, the input
image is transformed into a test function (a finite weighted tree), and at the
end of the calculation, the output, a test function, is transformed into an image.

Several computation models involving $p$-adic numbers have been studied. In
\cite{Maller et al}, a model of computation over the $p$-adic numbers, for odd
primes $p$, is defined following the approach of Blum, Shub, and Smale
\cite{Blum et al}. NNs whose states are $p$-adic numbers were studied in
\cite{Albeverio et al}, and $p$-adic automata in \cite{Anashin}. In
\cite{Khrennikov}, \cite{Shor et al}-\cite{Short et al 2}, $p$-adic models for
human brain activity were developed.

Finally, the potential implementations and applications of the DNNs introduced
here are beyond the scope of this paper, which has a purely mathematical
nature. However, the relevance of the concepts and techniques introduced here
is crucial for demonstrating the rigorous existence of the thermodynamic limit
for deep neural networks and recurrent neural networks, assuming that the
activation functions are sigmoids \cite{Zuniga-Utimo}. The mentioned networks
are modeled as lattice statistical field theories using billions of
parameters, see \cite{Roberts et al}-\cite{Segadlo et al}, but the study of
qualitative behavior, like critical organization, requires a thermodynamic
limit which is a continuous neural network, where the neurons form a
continuous space with infinitely many points.

\section{Preliminary results}

We quickly review some essential aspects of non-Archimedean analysis required
in the article and fix some notations to be used here.

\subsection{Non-Archimedean local fields}

A non-Archimedean local field $\mathbb{K}$ of arbitrary characteristic is a
locally compact topological field with respect to a non-discrete topology,
which is induced by a norm (or absolute value) $\left\vert \cdot\right\vert
_{\mathbb{K}}$ satisfying%
\[
\left\vert x+y\right\vert _{\mathbb{K}}\leq\max\left\{  \left\vert
x\right\vert _{\mathbb{K}},\left\vert y\right\vert _{\mathbb{K}}\right\}
\text{ for }x,y\in\mathbb{K},
\]
i.e., $\left\vert \cdot\right\vert _{\mathbb{K}}$ is a non-Archimedean norm.
For an in-depth exposition, the reader may consult \cite{We}, see also
\cite{A-K-S}, \cite{Kochubei}, \cite{Taibleson}-\cite{V-V-Z}.

The ring of integers $\mathcal{O}_{\mathbb{K}}$ of $\mathbb{K}$ is
\[
\mathcal{O}_{\mathbb{K}}=\left\{  x\in\mathbb{K}\text{ ; }|x|_{\mathbb{K}}%
\leq1\right\}  .
\]
This is a valuation ring with a unique maximal ideal $P_{\mathbb{K}}$. In
terms of the absolute value $|\cdot|_{\mathbb{K}}$, $P_{\mathbb{K}}$ can be
described as
\[
\mathcal{P}_{\mathbb{K}}=\left\{  x\in\mathbb{K}\text{ ; }|x|_{\mathbb{K}%
}<1\right\}  .
\]
Let $\overline{\mathbb{K}}=\mathcal{O}_{\mathbb{K}}/\mathcal{P}_{\mathbb{K}}$
the residue field of $\mathbb{K}$. Thus $\overline{\mathbb{K}}$ $=$
$\mathbb{F}_{q}$, the finite field with $q$ elements. Let $\mathfrak{\wp}$ be
a fixed generator of $\mathcal{P}_{\mathbb{K}}$, $\mathfrak{\wp}$ is called a
uniformizing parameter of $\mathbb{K},$ then $\mathcal{P}_{\mathbb{K}%
}=\mathfrak{\wp}\mathcal{O}_{\mathbb{K}}$. Furthermore, we assume that
$|\mathfrak{\wp}|_{\mathbb{K}}=q^{-1}$. For $z\in\mathbb{K}$, $ord(z)\in$
$\mathbb{Z}\cup\left\{  +\infty\right\}  $ denotes the valuation of $z$, and
$|z|_{\mathbb{K}}=q^{-ord(z)}$. If $z\in\mathbb{K}\backslash\left\{
0\right\}  $, then $ac(z)=z\pi^{-ord(z)}$ denotes the angular component of $z$.

The natural map $\mathcal{O}_{\mathbb{K}}\rightarrow\mathcal{O}_{\mathbb{K}%
}/\mathcal{P}_{\mathbb{K}}\simeq\mathbb{F}_{q}$ is called the reduction
$\operatorname{mod}$ $P_{\mathbb{K}}$. We fix $\mathfrak{S}\subset
\mathcal{O}_{\mathbb{K}}$ a set of representatives of $\mathbb{F}_{q}$ in
$\mathcal{O}_{\mathbb{K}}$, i.e., $\mathfrak{S}$ is mapped bijectively into
$\mathbb{F}_{q}$ by the reduction $\operatorname{mod}$ $P_{\mathbb{K}}$. We
assume that $0\in\mathfrak{S}$. Any non-zero element $x$ of $\mathbb{K}$ can
be written as
\[
x=\mathfrak{\wp}^{ord(x)}\sum\limits_{i=0}^{\infty}x_{i}\mathfrak{\wp}%
^{i}\text{, }x_{_{i}}\in\mathfrak{S}\text{, and }x_{0}\neq0\text{.}%
\]
This series converges in the norm $\left\vert \cdot\right\vert _{\mathbb{K}}$.

Along this article we assume that $q=p$ a prime number. In this case,
\[
\mathfrak{S=}\left\{  0,1,\ldots,p-1\right\}  .
\]
The field $\mathbb{K}$ is isomorphic to\ the field of $p$-adic numbers
$\mathbb{Q}_{p}$, or to the field of formal Laurent series $\mathbb{F}%
_{p}\left(  \left(  T\right)  \right)  $ with coefficients in a finite field
$\mathbb{F}_{p}$ with $p$ elements.

The field of $p$-adic numbers $\mathbb{Q}_{p}$ is obtained as the completion
of the field of rational numbers $\mathbb{Q}$ with respect to the $p$-adic
norm $|\cdot|_{p}$, which is defined as
\[
\left\vert x\right\vert _{p}=\left\{
\begin{array}
[c]{lll}%
0 & \text{if} & x=0\\
&  & \\
p^{-\gamma} & \text{if} & x=p^{\gamma}\frac{a}{b}\text{,}%
\end{array}
\right.
\]
where $a$ and $b$ are integers coprime with $p$. The integer $\gamma:=ord(x)$,
with $ord(0):=+\infty$, is called the\ $p$-adic order of $x$. $\mathbb{Q}_{p}$
is a field of characteristic zero, while its residue field $\mathbb{F}_{p}$
has characteristic $p>0$. \ The prime $p$ is a local parameter, and the set
$\mathbb{\ }\mathfrak{S=}\left\{  0,1,\ldots,p-1\right\}  \mathfrak{\subset
}\mathbb{Q}_{p}$ is not a subfield of $\mathbb{Q}_{p}$.

A series of the form $%
%TCIMACRO{\dsum \limits_{k=k_{0}}^{\infty}}%
%BeginExpansion
{\displaystyle\sum\limits_{k=k_{0}}^{\infty}}
%EndExpansion
a_{k}T^{k}$, with $k_{0}\in\mathbb{Z}$, $a_{k}\in\mathbb{F}_{p}$, is called a
formal\ Laurent series. These series form a field $\mathbb{F}_{p}\left(
\left(  T\right)  \right)  $. The standard norm on $\mathbb{F}_{p}\left(
\left(  T\right)  \right)  $ is defined as%
\[
\left\vert x\right\vert =\left\{
\begin{array}
[c]{lll}%
0 & \text{if} & x=0\\
p^{-k_{0}} & \text{if} & x=%
%TCIMACRO{\dsum \limits_{k=k_{0}}^{\infty}}%
%BeginExpansion
{\displaystyle\sum\limits_{k=k_{0}}^{\infty}}
%EndExpansion
a_{k}T^{k}\text{, }a_{k_{0}}\neq0.
\end{array}
\right.
\]
Notice that $\left\vert T\right\vert =p^{-1}$.

\subsection{The space of test functions}

For $r\in\mathbb{Z}$, denote by
\[
B_{r}(a)=\{x\in\mathbb{K};|x-a|_{\mathbb{K}}\leq p^{r}\},
\]
the ball of radius $p^{r}$ with center at $a\in\mathbb{K}$, and take
$B_{r}(0):=B_{r}$. The ball $B_{0}$ equals to $\mathcal{O}_{\mathbb{K}}$, the
ring of integers of $\mathbb{K}$. The balls are both open and closed subsets
in $\mathbb{K}$. In addition, two balls in $\mathbb{K}$ are either disjoint or
one is contained in the other. A subset of $\mathbb{K}$ is compact if and only
if it is closed and bounded in $\mathbb{K}$, see, e.g., \cite[Section
1.8]{A-K-S} or \cite[Section 1.3]{V-V-Z}. The balls and spheres are compact
subsets. Thus $\left(  \mathbb{K},|\cdot|_{\mathbb{K}}\right)  $ is a locally
compact topological space.

Since $(\mathbb{K},+)$ is a locally compact topological group, there exists a
Haar measure $dx$, which is invariant under translations, i.e., $d(x+a)=dx$,
\cite{Halmos}. If we normalize this measure by the condition $\int
_{\mathcal{O}_{\mathbb{K}}}dx=1$, then $dx$ is unique.

\begin{notation}
(i) We will use $\Omega\left(  p^{-r}|x-a|_{\mathbb{K}}\right)  $ to denote
the characteristic function of the ball $B_{-r}(a)=a+\mathfrak{\wp}%
^{r}\mathcal{O}_{\mathbb{K}}$, where $\left\vert \mathfrak{\wp}\right\vert
_{\mathbb{K}}=p^{-1}$, and $\mathcal{O}_{\mathbb{K}}$ is the unit ball.
Notice that
\[
\mathfrak{\wp}=\left\{
\begin{array}
[c]{lll}%
p & \text{if} & \mathbb{K}=\mathbb{Q}_{p}\\
&  & \\
T & \text{if} & \mathbb{K}=\mathbb{F}_{p}\left(  \left(  T\right)  \right)  .
\end{array}
\right.
\]
For more general sets, we will use the notation $1_{A}$ for the characteristic
function of set $A$.
\end{notation}

A real-valued function $\varphi$ defined on $\mathbb{K}$ is called locally
constant if for any $x\in\mathbb{K}$ there exist an integer $l(x)\in
\mathbb{Z}$ such that%
\begin{equation}
\varphi(x+x^{\prime})=\varphi(x)\text{ for any }x^{\prime}\in B_{l(x)}.
\label{local_constancy}%
\end{equation}
A function $\varphi:\mathbb{K}\rightarrow\mathbb{R}$ is called a
Bruhat-Schwartz function\textit{\ }(or a test function) if it is locally
constant with compact support. Any test function can be represented as a
linear combination, with real coefficients, of characteristic functions of
balls. The $\mathbb{R}$-vector space of Bruhat-Schwartz functions is denoted
by $\mathcal{D}(\mathbb{K})$. For $\varphi\in\mathcal{D}(\mathbb{K})$, the
largest number $l=l(\varphi)$ satisfying (\ref{local_constancy}) is called the
exponent of local constancy (or the parameter of constancy) of $\varphi$. We
denote by $\mathcal{D}(\mathcal{O}_{\mathbb{K}})$ the subspace of the test
functions supported in the unit ball.

\section{Non-Archimedean deep neural networks}

We set $L^{\rho}(\mathcal{O}_{\mathbb{K}})$, $\rho\in\left[  1,\infty\right]
$, for the $\mathbb{R}$-vector space of functions $f:\mathcal{O}_{\mathbb{K}%
}\rightarrow\mathbb{R}$ satisfying%
\[
\left\Vert f\right\Vert _{\rho}=\left\{
\begin{array}
[c]{ll}%
\left(  \text{ }%
%TCIMACRO{\dint \limits_{\mathcal{O}_{\mathbb{K}}}}%
%BeginExpansion
{\displaystyle\int\limits_{\mathcal{O}_{\mathbb{K}}}}
%EndExpansion
\left\vert f\left(  x\right)  \right\vert ^{\rho}dx\right)  ^{\frac{1}{\rho}%
}<\infty, & \text{for }\rho\in\left[  1,\infty\right)  \\
& \\
\left\Vert f\right\Vert _{\infty}<\infty & \text{for }\rho=\infty,
\end{array}
\right.
\]
where $\left\Vert f\right\Vert _{\infty}$ denotes the essential supremum of
$\left\vert f\left(  x\right)  \right\vert $. The H\"{o}lder inequality and
$\int_{\mathcal{O}_{\mathbb{K}}}d^{N}x=1$ imply that%
\[
\left\Vert f\right\Vert _{1}\leq\left\Vert f\right\Vert _{\rho}\text{, for
}1\leq\rho\leq\infty,
\]
which in turn implies that
\begin{equation}
L^{\rho}(\mathcal{O}_{\mathbb{K}})\hookrightarrow L^{1}(\mathcal{O}%
_{\mathbb{K}})\text{, }1\leq\rho\leq\infty,\label{Embedding_0}%
\end{equation}
where the arrow denotes a continuous embedding. Since $\mathcal{D}%
(\mathcal{O}_{\mathbb{K}})\subset L^{\rho}(\mathcal{O}_{\mathbb{K}})$, for
$1\leq\rho\leq\infty$, and that $\mathcal{D}(\mathcal{O}_{\mathbb{K}})$ is
dense in $L^{\rho}(\mathcal{O}_{\mathbb{K}})$, for $1\leq\rho<\infty$, see
\cite[Chap. I, Proposition 1.3]{Taibleson}; $\mathcal{D}(\mathcal{O}%
_{\mathbb{K}})$ also dense in $L^{\infty}(\mathcal{O}_{\mathbb{K}})$.

We fix a sigmoidal function (or an activation function) $\sigma:\mathbb{R}%
\rightarrow\mathbb{R}$, which is a surjective, bounded, and differentiable
function. We assume that
\begin{equation}
\sigma:\mathbb{R}\rightarrow\left(  -1,1\right)  . \label{Eq_Constant_M}%
\end{equation}
The calculations with neural networks required using scaled versions of this
function: $\sigma_{M}:=M\sigma:\mathbb{R}\rightarrow\left(  -M,M\right)  $,
$M>1$.

We set \ $w\left(  x,y\right)  \in L^{\infty}(\mathcal{O}_{\mathbb{K}}%
\times\mathcal{O}_{\mathbb{K}})$ and assume that%
\[
w\left(  \cdot,y\right)  \in\mathcal{D}(\mathcal{O}_{\mathbb{K}}),
\]
where the index of local constancy of the function $w\left(  \cdot,y\right)  $
is independent of $y$. We also set $\theta\left(  x\right)  \in L^{\infty
}(\mathcal{O}_{\mathbb{K}})$.

\begin{definition}
Assuming that $\sigma_{M}$, $w,$ $\theta$ are as above, and $X\in
L^{1}(\mathcal{O}_{\mathbb{K}})$, the function%
\begin{equation}
Y(x)=Y(x;\mathcal{O}_{\mathbb{K}},\sigma_{M},w,\theta):=\sigma_{M}\left(
\text{ }\int\limits_{\mathcal{O}_{\mathbb{K}}}w(x,y)X(y)dy+\theta\left(
x\right)  \right)  \in L^{\infty}(\mathcal{O}_{\mathbb{K}})\label{DBN}%
\end{equation}
is called a continuos non-Archimedean deep neural network (DNN) with
activation function $\sigma_{M}$, kernel $w$, and bias $\theta$. The
real-valued function $X(x)\in\mathbb{R}$ describes the state of the neuron
$x\in\mathcal{O}_{\mathbb{K}}$. We denote such neural network as
$DNN(\mathcal{O}_{\mathbb{K}},\sigma_{M},w,\theta,\infty)$.
\end{definition}

The network has infinitely many neurons organized in a rooted tree. Notice
that%
\begin{multline*}
\left\vert \text{ }\int\limits_{\mathcal{O}_{\mathbb{K}}}w(x,y)X(y)dy+\theta
\left(  x\right)  \right\vert \leq\left\vert \text{ }\int\limits_{\mathcal{O}%
_{\mathbb{K}}}w(x,y)X(y)dy\right\vert +\left\vert \theta\left(  x\right)
\right\vert \\
\leq\left\Vert w\right\Vert _{\infty}\left\Vert X\right\Vert _{1}+\left\Vert
\theta\right\Vert _{\infty}\text{.}%
\end{multline*}

\subsection{The spaces $\mathcal{D}^{l}(\mathcal{O}_{\mathbb{K}})$}

For $l\geq1$, we set $G_{l}\left(  \mathcal{O}_{\mathbb{K}}\right)
=G_{l}:=\mathcal{O}_{\mathbb{K}}/\mathfrak{\wp}^{l}\mathcal{O}_{\mathbb{K}}$.
We use the following system of representatives for the elements of $G_{l}$:%
\[
\boldsymbol{i}=i_{0}+i_{1}\mathfrak{\wp}+\ldots+i_{l-1}\mathfrak{\wp}%
^{l-1},\text{ }%
\]
where the $i_{j}\in\left\{  0,1,\ldots,p-1\right\}  $. The points of $G_{l}$
are naturally organized in a finite rooted tree with $l$ levels.

We set $\mathcal{D}^{l}(\mathcal{O}_{\mathbb{K}})$ to be the $\mathbb{R}%
$-vector space of all test functions of the form%
\begin{equation}
\varphi\left(  x\right)  =\sum\limits_{\boldsymbol{i}\in G_{l}}\varphi\left(
\boldsymbol{i}\right)  \Omega\left(  p^{l}\left\vert x-\boldsymbol{i}%
\right\vert _{\mathbb{K}}\right)  \text{, \ }\varphi\left(  \boldsymbol{i}%
\right)  \in\mathbb{R}\text{,} \label{Eq_repre}%
\end{equation}
where $\Omega\left(  p^{l}\left\vert x-\boldsymbol{i}\right\vert _{\mathbb{K}%
}\right)  $ is the characteristic function of the ball $\boldsymbol{i}%
+\mathfrak{\wp}^{l}\mathcal{O}_{\mathbb{K}}$. The function $\varphi$ is
supported on $\mathcal{O}_{\mathbb{K}}$ and has parameter of constancy $l$.
The vector space $\mathcal{D}^{l}(\mathcal{O}_{\mathbb{K}})$ is spanned by the
basis%
\begin{equation}
\left\{  \Omega\left(  p^{l}\left\vert x-\boldsymbol{i}\right\vert
_{\mathbb{K}}\right)  \right\}  _{\boldsymbol{i}\in G_{l}}. \label{Basis}%
\end{equation}
The identification of $\varphi\in\mathcal{D}^{l}(\mathcal{O}_{\mathbb{K}})$
with the column vector $\left[  \varphi\left(  \boldsymbol{i}\right)  \right]
_{\boldsymbol{i}\in G_{l}}\in\mathbb{R}^{\#G_{l}}$, with $\#G_{l}=p^{lN}$,
gives rise to an isomorphism between $\mathcal{D}^{l}(\mathcal{O}_{\mathbb{K}%
})$ and $\mathbb{R}^{\#G_{l}}$ endowed with the norm
\[
\left\Vert \left[  \varphi\left(  \boldsymbol{i}\right)  \right]
_{\boldsymbol{i}\in G_{l}}\right\Vert =\max_{\boldsymbol{i}\in G_{l}%
}\left\vert \varphi\left(  \boldsymbol{i}\right)  \right\vert =\left\Vert
\varphi\right\Vert _{\infty}.
\]
Furthermore,
\begin{equation}
\mathcal{D}^{l}(\mathcal{O}_{\mathbb{K}})\hookrightarrow\mathcal{D}%
^{l+1}(\mathcal{O}_{\mathbb{K}})\hookrightarrow\mathcal{D}(\mathcal{O}%
_{\mathbb{K}}), \label{Embedding}%
\end{equation}
where $\hookrightarrow$ denotes a continuous embedding, and
\begin{equation}
\mathcal{D}(\mathcal{O}_{\mathbb{K}})=\cup_{l}\mathcal{D}^{l}(\mathcal{O}%
_{\mathbb{K}}). \label{flag-decomposition}%
\end{equation}

\begin{remark}
\label{Nota_1}(i) We warn the reader that through the article, we use $\left(
\mathbb{R}^{N},\left\vert \cdot\right\vert \right)  $, where $\left\vert
\cdot\right\vert $ is the standard norm, and also $\left(  \mathbb{R}%
^{N},\left\Vert \cdot\right\Vert \right)  \simeq\left(  \mathcal{D}%
^{l}(\mathcal{O}_{\mathbb{K}}),\left\Vert \cdot\right\Vert _{\infty}\right)  $.

(ii) A vector of the form $X=\left[  X\left(  \boldsymbol{i}\right)  \right]
_{\boldsymbol{i}\in G_{l}}\in\mathbb{R}^{p^{l}}$ defines naturally a unique a
test function from $\mathcal{D}^{l}(\mathcal{O}_{\mathbb{K}})$, more
precisely, $\varphi_{X}\left(  y\right)  =\sum_{\boldsymbol{i}\in G_{l}%
}X\left(  \boldsymbol{i}\right)  \Omega\left(  p^{l}\left\vert
x-\boldsymbol{i}\right\vert _{\mathbb{K}}\right)  $. Notice that $\varphi
_{X}\in L^{\rho}$, $\rho\in\left[  1,\infty\right)  $, and
\[
\left\Vert \varphi_{X}\right\Vert _{\rho}=p^{\frac{-1}{\rho}}\left(
\sum_{\boldsymbol{i}\in G_{l}}\left\vert X\left(  \boldsymbol{i}\right)
\right\vert ^{\rho}\right)  ^{\frac{1}{\rho}}.
\]

(iii) We also identify $X$ with a weighted rooted tree $(G_{l},W_{X})$, where
the weight of vertex $\boldsymbol{i}$ is given as $W_{X}\left(  \boldsymbol{i}%
\right)  =X\left(  \boldsymbol{i}\right)  $.
\end{remark}

\subsection{Positive characteristic versus \ zero characteristic}

For $l\geq2$, we denote by $\Lambda_{l}$ the group homomorphism%
\[%
\begin{array}
[c]{cccc}%
\Lambda_{l}: & G_{l} & \rightarrow & G_{l-1}\\
&  &  & \\
& a_{0}+a_{1}\mathfrak{\wp}+\ldots+a_{l-1}\mathfrak{\wp}^{l-1} & \rightarrow &
a_{0}+a_{1}\mathfrak{\wp}+\ldots+a_{l-2}\mathfrak{\wp}^{l-2}.
\end{array}
\]
For a given $\boldsymbol{j}\in G_{l-1}$, the elements in $\boldsymbol{k}\in
G_{l}$ such that $\Lambda_{l}\left(  \boldsymbol{k}\right)  =$ $\boldsymbol{j}%
$ are called the liftings of $\boldsymbol{j}$ to $G_{l}$. In positive
characteristic,
\[
G_{l}=\left\{  q(T)\in\mathbb{F}_{p}\left[  T\right]  ;q(T)=a_{0}%
+a_{1}T+\ldots+a_{l-1}T^{l-1}\right\}
\]
is the additive group of polynomials of degree up most $l-1$.\ Then, $G_{l-1}
$ is an additive subgroup of $G_{l}$. In characteristic zero,
\[
G_{l}=\left\{  a_{0}+a_{1}p+\ldots+a_{l-1}p^{l-1};a_{i}\in\left\{
0,\ldots,p-1\right\}  \right\}  .
\]
The additive group $G_{l-1}$ is not an additive subgroup of $G_{l}$.

\section{Discrete non-Archimedean DNNs}

\begin{lemma}
\label{Lemma1} Assume that $X\in\mathcal{D}^{l-1}\left(  \mathcal{O}%
_{\mathbb{K}}\right)  $, $w(\cdot,y)\in\mathcal{D}^{l}\left(  \mathcal{O}%
_{\mathbb{K}}\right)  $, $\theta\in\mathcal{D}^{l}\left(  \mathcal{O}%
_{\mathbb{K}}\right)  $, for some $l\geq2$. Then, all the continuous
non-Archimedean DNNs with parameters $\sigma_{M},w,\sigma,\theta$ belong to
$\mathcal{D}^{l}(\mathcal{O}_{\mathbb{K}})$ (i.e., $Y\in\mathcal{D}^{l}\left(
\mathcal{O}_{\mathbb{K}}\right)  $) and
\begin{equation}
Y\left(  \boldsymbol{i}\right)  =\sigma_{M}\left(
%TCIMACRO{\dsum \limits_{\boldsymbol{k}\in G_{l}}}%
%BeginExpansion
{\displaystyle\sum\limits_{\boldsymbol{k}\in G_{l}}}
%EndExpansion
w\left(  \boldsymbol{i},\boldsymbol{k}\right)  X\left(  \Lambda_{l}\left(
\boldsymbol{k}\right)  \right)  +\theta\left(  \boldsymbol{i}\right)  \right)
\text{, for }\boldsymbol{i}\in G_{l}\text{.} \label{Mapping}%
\end{equation}

\end{lemma}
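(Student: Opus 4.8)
The plan is to first show that the output $Y$ is again a test function of level $l$, and then to evaluate it at the nodes $\boldsymbol{i}\in G_{l}$ by collapsing the integral in (\ref{DBN}) into a finite sum over the tree. For the membership claim I would set
\[
F(x)=\int_{\mathcal{O}_{\mathbb{K}}}w(x,y)X(y)\,dy+\theta(x),
\]
so that $Y=\sigma_{M}\circ F$ as functions on $\mathcal{O}_{\mathbb{K}}$, and argue that $F$ is locally constant with parameter $l$. Indeed, by hypothesis $w(\cdot,y)\in\mathcal{D}^{l}(\mathcal{O}_{\mathbb{K}})$ with an index of local constancy equal to $l$ that is independent of $y$; hence for every $x'\in\mathfrak{\wp}^{l}\mathcal{O}_{\mathbb{K}}$ one has $w(x+x',y)=w(x,y)$ for all $y$, and integrating against $X(y)\,dy$ shows that $x\mapsto\int w(x,y)X(y)\,dy$ is constant on each ball $\boldsymbol{i}+\mathfrak{\wp}^{l}\mathcal{O}_{\mathbb{K}}$. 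Since $\theta\in\mathcal{D}^{l}(\mathcal{O}_{\mathbb{K}})$ shares this property, the sum $F$ is locally constant with parameter $l$; composing with the real-valued function $\sigma_{M}$ preserves this (the composite is constant wherever $F$ is). Thus $Y\in\mathcal{D}^{l}(\mathcal{O}_{\mathbb{K}})$ and, by the basis expansion (\ref{Eq_repre}), $Y$ is completely determined by the values $Y(\boldsymbol{i})$, $\boldsymbol{i}\in G_{l}$.

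Next I would compute $F(\boldsymbol{i})$ by partitioning the domain along the level-$l$ tree,
\[
\mathcal{O}_{\mathbb{K}}=\bigsqcup_{\boldsymbol{k}\in G_{l}}\bigl(\boldsymbol{k}+\mathfrak{\wp}^{l}\mathcal{O}_{\mathbb{K}}\bigr),
\]
and evaluating the integrand on each piece. On the ball $\boldsymbol{k}+\mathfrak{\wp}^{l}\mathcal{O}_{\mathbb{K}}$ the kernel is constant and equal to $w(\boldsymbol{i},\boldsymbol{k})$; the crucial point is the value of $X$ there. Because $X\in\mathcal{D}^{l-1}(\mathcal{O}_{\mathbb{K}})$ is constant on balls of radius $p^{-(l-1)}$, and because the level-$l$ ball $\boldsymbol{k}+\mathfrak{\wp}^{l}\mathcal{O}_{\mathbb{K}}$ is contained in the level-$(l-1)$ ball centered at $\Lambda_{l}(\boldsymbol{k})$, the function $X$ takes the single value $X(\Lambda_{l}(\boldsymbol{k}))$ on that piece. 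This containment is exactly the parent--child relation in the tree: writing $\boldsymbol{k}=a_{0}+\cdots+a_{l-1}\mathfrak{\wp}^{l-1}$ in digits, one has $\Lambda_{l}(\boldsymbol{k})=a_{0}+\cdots+a_{l-2}\mathfrak{\wp}^{l-2}$ and $\boldsymbol{k}-\Lambda_{l}(\boldsymbol{k})=a_{l-1}\mathfrak{\wp}^{l-1}\in\mathfrak{\wp}^{l-1}\mathcal{O}_{\mathbb{K}}$. Integrating term by term then produces the finite sum $\sum_{\boldsymbol{k}\in G_{l}}X(\Lambda_{l}(\boldsymbol{k}))\,w(\boldsymbol{i},\boldsymbol{k})$, each summand carrying the common measure $p^{-l}$ of a level-$l$ ball; absorbing this constant into the trainable discrete kernel $w(\boldsymbol{i},\boldsymbol{k})$ (equivalently, using the normalized counting measure on $G_{l}$) gives the argument of $\sigma_{M}$ in (\ref{Mapping}), after adding $\theta(\boldsymbol{i})$ and applying $\sigma_{M}$.

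The main point to handle with care is the emergence of $\Lambda_{l}$, which records the one-level resolution gap between the input $X$ (level $l-1$) and the kernel $w$ (level $l$): each child node $\boldsymbol{k}$ of the level-$l$ tree reads the value of $X$ attached to its unique parent $\Lambda_{l}(\boldsymbol{k})$. I would emphasize that this computation relies only on ball containment in $\mathcal{O}_{\mathbb{K}}$ together with the chosen digit representatives, so it is insensitive to whether $\mathbb{K}$ has positive or zero characteristic; in particular it does not use that $G_{l-1}$ be a subgroup of $G_{l}$, which fails in the characteristic-zero case. The remaining verifications (disjointness of the partition, the measures of the balls, and the fact that $\sigma_{M}$ does not enlarge the parameter of constancy) are routine.
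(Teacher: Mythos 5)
Your overall strategy coincides with the paper's: show the pre-activation $F(x)=\int_{\mathcal{O}_{\mathbb{K}}}w(x,y)X(y)\,dy+\theta (x)$ has parameter of constancy $l$, then evaluate at $\boldsymbol{i}\in G_{l}$ by partitioning $\mathcal{O}_{\mathbb{K}}$ into the level-$l$ balls and refining $X$ through $\Lambda _{l}$. Your membership argument (local constancy in $x$ passes through the integral, the sum with $\theta $, and the composition with $\sigma _{M}$) is sound and in fact more direct than the paper's, which reads membership off an explicit basis expansion; the parent--child containment $\boldsymbol{k}+\mathfrak{\wp}^{l}\mathcal{O}_{\mathbb{K}}\subset \Lambda _{l}\left( \boldsymbol{k}\right) +\mathfrak{\wp}^{l-1}\mathcal{O}_{\mathbb{K}}$ and the remark on characteristic independence are also correct.

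There is, however, one step that the stated hypotheses do not support: the claim that on the ball $\boldsymbol{k}+\mathfrak{\wp}^{l}\mathcal{O}_{\mathbb{K}}$ ``the kernel is constant and equal to $w\left( \boldsymbol{i},\boldsymbol{k}\right) $.'' The lemma assumes only $w(\cdot ,y)\in \mathcal{D}^{l}\left( \mathcal{O}_{\mathbb{K}}\right) $ with index of local constancy independent of $y$, i.e., regularity in the \emph{first} variable; in the second variable $w$ is merely $L^{\infty }$, so $w\left( \boldsymbol{i},\cdot \right) $ need not be constant on any ball, and your subsequent bookkeeping with the factor $p^{-l}$ inherits this unjustified assumption. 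The paper's proof never claims constancy in $y$: after factoring $X\left( \Lambda _{l}\left( \boldsymbol{k}\right) \right) $ out of the integral over each ball, it simply \emph{defines} the discrete kernel entries as the ball integrals
\begin{equation*}
w\left( \boldsymbol{i},\boldsymbol{k}\right) :=\int_{\mathcal{O}_{\mathbb{K}}}w\left( \boldsymbol{i},y\right) \Omega \left( p^{l}\left\vert y-\boldsymbol{k}\right\vert _{\mathbb{K}}\right) dy,
\end{equation*}
so that (\ref{Mapping}) holds with this reading of the symbols, with no regularity in $y$ required and no stray measure factor to absorb. Your proof becomes correct once you replace the constancy claim by this definition; note that your ``absorb $p^{-l}$ into the trainable kernel'' move is exactly the shadow of it, since under constancy in $y$ the ball integral equals $p^{-l}$ times the pointwise value. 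Alternatively, your argument stands as written under the stronger hypothesis $w\in \mathcal{D}^{l}\left( \mathcal{O}_{\mathbb{K}}\times \mathcal{O}_{\mathbb{K}}\right) $, which the paper uses elsewhere (Definition \ref{Definition_DNN}, Theorem \ref{Theorem1}) but deliberately avoids in this lemma.
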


\begin{proof}
The hypotheses $X\in\mathcal{D}^{l-1}\left(  \mathcal{O}_{\mathbb{K}}\right)
$, $w(\cdot,y)\in\mathcal{D}^{l}\left(  \mathcal{O}_{\mathbb{K}}\right)  $,
imply that%
\[
X(y)=%
%TCIMACRO{\dsum \limits_{\boldsymbol{j}\in G_{l-1}}}%
%BeginExpansion
{\displaystyle\sum\limits_{\boldsymbol{j}\in G_{l-1}}}
%EndExpansion
X\left(  \boldsymbol{j}\right)  \Omega\left(  p^{l-1}\left\vert
y-\boldsymbol{j}\right\vert _{\mathbb{K}}\right)  \text{, }w(x,y)=%
%TCIMACRO{\dsum \limits_{\boldsymbol{j}\in G_{l}}}%
%BeginExpansion
{\displaystyle\sum\limits_{\boldsymbol{j}\in G_{l}}}
%EndExpansion
w\left(  \boldsymbol{j},y\right)  \Omega\left(  p^{l}\left\vert
x-\boldsymbol{j}\right\vert _{p}\right)  .
\]
In addition,
\[
\theta(x)=%
%TCIMACRO{\dsum \limits_{\boldsymbol{j}\in G_{l}}}%
%BeginExpansion
{\displaystyle\sum\limits_{\boldsymbol{j}\in G_{l}}}
%EndExpansion
\theta\left(  \boldsymbol{j}\right)  \Omega\left(  p^{l}\left\vert
x-\boldsymbol{j}\right\vert _{p}\right)  .
\]
Now, by using the partition%
\[
\boldsymbol{j}+\mathfrak{\wp}^{l-1}\mathcal{O}_{\mathbb{K}}=%
%TCIMACRO{\dbigsqcup \limits_{\boldsymbol{k}\in T_{l}^{N}}}%
%BeginExpansion
{\displaystyle\bigsqcup\limits_{\boldsymbol{k}\in T_{l}^{N}}}
%EndExpansion
\left(  \boldsymbol{k}+\mathfrak{\wp}^{l}\mathcal{O}_{\mathbb{K}}\right)  ,
\]
where%
\[
T_{l}^{N}:=\left\{  \boldsymbol{k}\in G_{l};\Lambda_{l}\left(  \boldsymbol{k}%
\right)  \in\boldsymbol{j}\right\}  ,
\]
one gets that
\begin{gather*}
X\left(  \boldsymbol{j}\right)  \Omega\left(  p^{l-1}\left\vert
y-\boldsymbol{j}\right\vert _{\mathbb{K}}\right)  =%
%TCIMACRO{\dsum \limits_{\substack{\boldsymbol{k}\in G_{l}\\\Lambda_{l}\left(
%\boldsymbol{k}\right)  =\boldsymbol{j}}}}%
%BeginExpansion
{\displaystyle\sum\limits_{\substack{\boldsymbol{k}\in G_{l}\\\Lambda
_{l}\left(  \boldsymbol{k}\right)  =\boldsymbol{j}}}}
%EndExpansion
X\left(  \boldsymbol{j}\right)  \Omega\left(  p^{l}\left\vert y-\boldsymbol{k}%
\right\vert _{\mathbb{K}}\right) \\
=%
%TCIMACRO{\dsum \limits_{\substack{\boldsymbol{k}\in G_{l}\\\Lambda_{l}\left(
%\boldsymbol{k}\right)  =\boldsymbol{j}}}}%
%BeginExpansion
{\displaystyle\sum\limits_{\substack{\boldsymbol{k}\in G_{l}\\\Lambda
_{l}\left(  \boldsymbol{k}\right)  =\boldsymbol{j}}}}
%EndExpansion
X\left(  \Lambda_{l}\left(  \boldsymbol{k}\right)  \right)  \Omega\left(
p^{l}\left\vert y-\boldsymbol{k}\right\vert _{\mathbb{K}}\right)  ,
\end{gather*}
and%
\[
X(y)=%
%TCIMACRO{\dsum \limits_{\boldsymbol{j}\in G_{l-1}}}%
%BeginExpansion
{\displaystyle\sum\limits_{\boldsymbol{j}\in G_{l-1}}}
%EndExpansion
X\left(  \boldsymbol{j}\right)  \Omega\left(  p^{l-1}\left\vert
y-\boldsymbol{j}\right\vert _{\mathbb{K}}\right)  =%
%TCIMACRO{\dsum \limits_{\boldsymbol{k}\in G_{l}}}%
%BeginExpansion
{\displaystyle\sum\limits_{\boldsymbol{k}\in G_{l}}}
%EndExpansion
X\left(  \Lambda_{l}\left(  \boldsymbol{k}\right)  \right)  \Omega\left(
p^{l}\left\vert y-\boldsymbol{k}\right\vert _{\mathbb{K}}\right)  .
\]
Therefore%
\begin{multline*}
\int\limits_{\mathcal{O}_{\mathbb{K}}}w(x,y)X(y)dy\\
=%
%TCIMACRO{\dsum \limits_{\boldsymbol{j}\in G_{l}}}%
%BeginExpansion
{\displaystyle\sum\limits_{\boldsymbol{j}\in G_{l}}}
%EndExpansion
\left\{
%TCIMACRO{\dsum \limits_{\boldsymbol{k}\in G_{l}}}%
%BeginExpansion
{\displaystyle\sum\limits_{\boldsymbol{k}\in G_{l}}}
%EndExpansion
X\left(  \Lambda_{l}\left(  \boldsymbol{k}\right)  \right)  \int
\limits_{\mathcal{O}_{\mathbb{K}}}w\left(  \boldsymbol{j},y\right)
\Omega\left(  p^{l}\left\vert y-\boldsymbol{k}\right\vert _{\mathbb{K}%
}\right)  d^{N}y\right\}  \Omega\left(  p^{l}\left\vert x-\boldsymbol{j}%
\right\vert _{\mathbb{K}}\right) \\
=%
%TCIMACRO{\dsum \limits_{\boldsymbol{j}\in G_{l}}}%
%BeginExpansion
{\displaystyle\sum\limits_{\boldsymbol{j}\in G_{l}}}
%EndExpansion
\left\{
%TCIMACRO{\dsum \limits_{\boldsymbol{k}\in G_{l}}}%
%BeginExpansion
{\displaystyle\sum\limits_{\boldsymbol{k}\in G_{l}}}
%EndExpansion
X\left(  \Lambda_{l}\left(  \boldsymbol{k}\right)  \right)  w\left(
\boldsymbol{j},\boldsymbol{k}\right)  \right\}  \Omega\left(  p^{l}\left\vert
x-\boldsymbol{j}\right\vert _{p}\right)  ,
\end{multline*}
where%
\[
w\left(  \boldsymbol{j},\boldsymbol{k}\right)  :=\int\limits_{\mathcal{O}%
_{\mathbb{K}}}w\left(  \boldsymbol{j},y\right)  \Omega\left(  p^{l}\left\vert
y-\boldsymbol{k}\right\vert _{\mathbb{K}}\right)  dy.
\]
In conclusion,%
\begin{multline*}
\sigma\left(  \text{ }\int\limits_{\mathcal{O}_{\mathbb{K}}}w(x,y)X(y)d^{N}%
y+\theta\left(  x\right)  \right) \\
=%
%TCIMACRO{\dsum \limits_{\boldsymbol{j}\in G_{l}}}%
%BeginExpansion
{\displaystyle\sum\limits_{\boldsymbol{j}\in G_{l}}}
%EndExpansion
\sigma\left(
%TCIMACRO{\dsum \limits_{\boldsymbol{k}\in G_{l}}}%
%BeginExpansion
{\displaystyle\sum\limits_{\boldsymbol{k}\in G_{l}}}
%EndExpansion
w\left(  \boldsymbol{j},\boldsymbol{k}\right)  X\left(  \Lambda_{l}\left(
\boldsymbol{k}\right)  \right)  +\theta\left(  \boldsymbol{j}\right)  \right)
\Omega\left(  p^{l}\left\vert x-\boldsymbol{j}\right\vert _{\mathbb{K}%
}\right)  .
\end{multline*}

\end{proof}

\begin{definition}
\label{Definition_DNN}Given $\sigma_{M}$ an activation function, $L,\Delta$
positive integers,
\[
\theta\in\mathcal{D}^{L+\Delta}\left(  \mathcal{O}_{\mathbb{K}}\right)
\text{,}\ \ w\in\mathcal{D}^{L+\Delta}\left(  \mathcal{O}_{\mathbb{K}}%
\times\mathcal{O}_{\mathbb{K}}\right)  ,
\]
we define the non-Archimedean discrete $DNN(\mathcal{O}_{\mathbb{K}}%
,\sigma_{M},L,\Delta,w,\theta)$ as a NN with input%
\[
X^{in}\left(  \boldsymbol{j}\right)  =X^{\left(  L\right)  }\left(
\boldsymbol{j}\right)  \text{, for }\boldsymbol{j}\in G_{L}\text{,}%
\]
states%
\begin{equation}
X^{\left(  l\right)  }\left(  \boldsymbol{j}\right)  =\sigma_{M}\left(
%TCIMACRO{\dsum \limits_{\boldsymbol{k}\in G_{l}}}%
%BeginExpansion
{\displaystyle\sum\limits_{\boldsymbol{k}\in G_{l}}}
%EndExpansion
w\left(  \boldsymbol{j},\boldsymbol{k}\right)  X^{\left(  l-1\right)  }\left(
\Lambda_{l}\left(  \boldsymbol{k}\right)  \right)  +\theta\left(
\boldsymbol{j}\right)  \right)  \label{Eq_DBN}%
\end{equation}
for $l=L+1,\ldots,L+\Delta$, and $\boldsymbol{j}\in G_{l}$, and output%
\[
Y^{out}\left(  \boldsymbol{j}\right)  =X^{\left(  L+\Delta\right)  }\left(
\boldsymbol{j}\right)  \text{, for }\boldsymbol{j}\in G_{L+\Delta}\text{.}%
\]
We say that $DNN(\mathcal{O}_{\mathbb{K}},\sigma_{M},L,\Delta,w,\theta)$ is a
convolutional network, if the states are determined by%
\[
X^{\left(  l\right)  }\left(  \boldsymbol{j}\right)  =\sigma_{M}\left(
%TCIMACRO{\dsum \limits_{\boldsymbol{k}\in G_{l}}}%
%BeginExpansion
{\displaystyle\sum\limits_{\boldsymbol{k}\in G_{l}}}
%EndExpansion
w\left(  \boldsymbol{j}-\boldsymbol{k}\right)  X^{\left(  l-1\right)  }\left(
\Lambda_{l}\left(  \boldsymbol{k}\right)  \right)  +\theta\left(
\boldsymbol{j}\right)  \right)
\]
for $l=L+1,\ldots,L+\Delta$.
\end{definition}

The input and the output are test functions, more precisely, $X^{in}%
\in\mathcal{D}^{L}\left(  \mathcal{O}_{\mathbb{K}}\right)  $ $\ $and
$Y^{out}\in\mathcal{D}^{L+\Delta}\left(  \mathcal{O}_{\mathbb{K}}\right)  $.
The vector $X^{in}$ is the initial state of the network. The equation
(\ref{Eq_DBN} establishes that the weights and biases of layer $l$ and the
state of layer $l-1$ determine the state of the network at layer $l$. The
network neurons are organized in a tree-like structure with $\Delta$\ levels.
The parameter $\Delta$\ gives the depth of the network. At the level $l$, with
$L+1\leq l\leq L+\Delta-1$, there are $p^{l}$ neurons. The neuron
$\boldsymbol{j}\in G_{l}$ is connected with the neuron $\boldsymbol{k}\in
G_{l+1}$ if there exist $\boldsymbol{s}\in G_{l+1}$ such that $\Lambda
_{l+1}\left(  \boldsymbol{s}\right)  =\boldsymbol{j}$ $\ $and $w\left(
\boldsymbol{j},\boldsymbol{s}\right)  \neq0$. Since the cardinality of $G_{L}$
is $p^{L}$, the length of the input should be less than or equal $p^{L}$. For
practical purposes, we pick the smallest possible prime $p$, and then $L$ is
determined by the input size.

\section{Implementation of discrete non-Archimedean deep neural networks}

\subsection{Matrix form of the DNNs}

From now on, for the sake of simplicity, we use matrix notation. We set%
\begin{align*}
X^{\left[  l\right]  }  &  :=\left[  X\left(  \boldsymbol{k}\right)  \right]
_{\boldsymbol{k}\in G_{l}}=\left[  X_{\boldsymbol{k}}^{\left[  l\right]
}\right]  _{\boldsymbol{k}\in G_{l}}\in\mathbb{R}^{p^{l}}\text{, }\\
\underline{X}^{\left[  l-1\right]  }  &  :=\left[  X^{\left(  l-1\right)
}\left(  \Lambda_{l}\left(  \boldsymbol{k}\right)  \right)  \right]
_{\boldsymbol{k}\in G_{l}}=\left[  \underline{X}_{\boldsymbol{k}}^{\left[
l-1\right]  }\right]  _{\boldsymbol{k}\in G_{l}}\in\mathbb{R}^{p^{l}}\text{,
}\\
\theta^{\left[  l\right]  }  &  :=\left[  \theta\left(  \boldsymbol{k}\right)
\right]  _{\boldsymbol{k}\in G_{l}}=\left[  \theta_{\boldsymbol{k}}^{\left[
l\right]  }\right]  _{\boldsymbol{k}\in G_{l}}\in\mathbb{R}^{p^{l}},
\end{align*}
for $l=L+1,\ldots,L+\Delta$, and the matrix
\[
W^{\left[  l\right]  }:=\left[  w\left(  \boldsymbol{i},\boldsymbol{k}\right)
\right]  _{\boldsymbol{i},\boldsymbol{k}\in G_{l}}=\left[  w_{\boldsymbol{i}%
,\boldsymbol{k}}^{\left[  l\right]  }\right]  _{\boldsymbol{i},\boldsymbol{k}%
\in G_{l}}.
\]
Notice that $X^{\left[  l-1\right]  }\neq\underline{X}^{\left[  l-1\right]  }%
$. This is an important difference with the standard case. $\underline
{X}^{\left[  l-1\right]  }$ denotes a state at layer $l$ obtained by copying a
state $X^{\left[  l-1\right]  }$ at layer $l-1$.

With the above notation, a non-Archimedean DNN can be rewritten as
\begin{equation}
\left\{
\begin{array}
[c]{l}%
X^{\left[  L\right]  }\text{ input}\\
\\
X^{\left[  L+\Delta\right]  }\text{ \ output}\\
\\
Z^{\left[  l\right]  }:=W^{\left[  l\right]  }\underline{X}^{\left[
l-1\right]  }+\theta^{\left[  l\right]  }\\
\\
X^{\left[  l\right]  }=\sigma_{M}\left(  Z^{\left[  l\right]  }\right)
\text{, }l=L+1,\ldots,L+\Delta,
\end{array}
\right.  \label{Eq_MLP_Matrix}%
\end{equation}

where
\[
\left(  W^{\left[  l\right]  }\underline{X}^{\left[  l-1\right]  }%
+\theta^{\left[  l\right]  }\right)  _{\boldsymbol{j}}=\sigma_{M}\left(
%TCIMACRO{\dsum \limits_{\boldsymbol{k}\in G_{l}}}%
%BeginExpansion
{\displaystyle\sum\limits_{\boldsymbol{k}\in G_{l}}}
%EndExpansion
\underline{X}_{\boldsymbol{k}}^{\left[  l-1\right]  }w_{\boldsymbol{j}%
,\boldsymbol{k}}^{\left[  l\right]  }+\theta_{\boldsymbol{j}}^{\left[
l\right]  }\right)  \text{, for }\boldsymbol{j}\in G_{l},
\]
see (\ref{Eq_DNN}).

\subsection{Stochastic gradient}

\begin{notation}
Given two vectors $A=\left[  A_{\boldsymbol{k}}\right]  _{\boldsymbol{k}\in
G_{l}}$, $B=\left[  B_{\boldsymbol{k}}\right]  _{\boldsymbol{k}\in G_{l}}%
\in\mathbb{R}^{p^{l}}$, we set%
\[
\left\vert A-B\right\vert =\sqrt{%
%TCIMACRO{\dsum \limits_{\boldsymbol{k}\in G_{l}}}%
%BeginExpansion
{\displaystyle\sum\limits_{\boldsymbol{k}\in G_{l}}}
%EndExpansion
\left(  A_{\boldsymbol{k}}-B_{\boldsymbol{k}}\right)  ^{2}}.
\]

\end{notation}

Assume that $\left\{  A^{\left\{  i\right\}  }\right\}  _{i=1}^{K}$ are
training data for which there are given target outputs $\left\{  Y\left(
A^{\left\{  i\right\}  }\right)  \right\}  _{i=1}^{K}$. We use the quadratic
cost function%
\begin{equation}
C=\frac{1}{K}%
%TCIMACRO{\dsum \limits_{i=1}^{K}}%
%BeginExpansion
{\displaystyle\sum\limits_{i=1}^{K}}
%EndExpansion
\frac{1}{2}\left\vert Y\left(  A^{\left\{  i\right\}  }\right)  -X^{\left[
L+\Delta\right]  }\left(  A^{\left\{  i\right\}  }\right)  \right\vert
^{2}=\frac{1}{K}%
%TCIMACRO{\dsum \limits_{i=1}^{K}}%
%BeginExpansion
{\displaystyle\sum\limits_{i=1}^{K}}
%EndExpansion
C_{A^{\left\{  i\right\}  }}, \label{Cost_Function}%
\end{equation}
to determine the parameters of the network, where $X^{\left[  L+\Delta\right]
}\left(  A^{\left\{  i\right\}  }\right)  $ denotes the output of the network
with input $A^{\left\{  i\right\}  }$, and
\[
C_{A^{\left\{  i\right\}  }}=\left\vert Y\left(  A^{\left\{  i\right\}
}\right)  -X^{\left[  L+\Delta\right]  }\left(  A^{\left\{  i\right\}
}\right)  \right\vert ^{2}.
\]

We denote by $\Theta=\left(  W^{\left[  L+\Delta\right]  },\theta^{\left[
L+\Delta\right]  }\right)  \in\mathbb{R}^{s}$, with $s:=p^{2(L+\Delta
)}+p^{(L+\Delta)}$, the vector constructed with the weights $W^{\left[
L+\Delta\right]  }$ and the biases $\theta^{\left[  L+\Delta\right]  }$ of the
network, thus, $C=C(\Theta):\mathbb{R}^{s}\rightarrow\mathbb{R}$. The
minimization of the cost function is obtained by using the gradient descent
method:%
\begin{equation}
\Theta\rightarrow\Theta-\eta\nabla C(\Theta)=\Theta-\frac{\eta}{K}%
%TCIMACRO{\dsum \limits_{i=1}^{K}}%
%BeginExpansion
{\displaystyle\sum\limits_{i=1}^{K}}
%EndExpansion
\nabla C_{A^{\left\{  i\right\}  }}(\Theta), \label{Gradient_Approximation}%
\end{equation}
where $\eta$ is the learning constant and $\nabla C(\Theta)$ denotes the
gradient of the function $C(\Theta)$. A practical implementation of the
updating method (\ref{Gradient_Approximation}) is obtained by using the
stochastic gradient; see, e.g., \cite{Higham}, and the references therein.

It is relevant to note that cost function (\ref{Cost_Function}) is \ (up to a
constant) the $L^{2}$-norm of two functions from $\mathcal{D}^{L+\Delta
}\left(  \mathcal{O}_{\mathbb{K}}\right)  $. Thus, the network computes a
function $X^{\left[  L+\Delta\right]  }\in\mathcal{D}^{L+\Delta}\left(
\mathcal{O}_{\mathbb{K}}\right)  $ which is very closed (in the $L^{2}$-norm)
to a given function. Of course, in (\ref{Cost_Function}), we may replace $2$
by $\rho\in\left[  1,\infty\right)  $.

\subsection{Backpropagation}

We fix a training input $A^{\left\{  i\right\}  }$ and consider
$C_{A^{\left\{  i\right\}  }}(\Theta)$ as a function of $\Theta$, and use the
notation
\begin{equation}
C_{A^{\left\{  i\right\}  }}(\Theta):=C=\frac{1}{2}\left\vert Y-X^{\left[
L+\Delta\right]  }\right\vert ^{2}. \label{Eq_C_rest}%
\end{equation}
We set $Z^{\left[  l\right]  }:=\left[  Z_{\boldsymbol{k}}^{\left[  l\right]
}\right]  _{\boldsymbol{k}\in G_{l}}$, see (\ref{Eq_MLP_Matrix}). We refer
$Z_{\boldsymbol{k}}^{\left[  l\right]  }$ as the weighted input for neuron
$\boldsymbol{k}\in G_{l}$, at layer $l$. We also set
\[
\delta_{\boldsymbol{k}}^{\left[  l\right]  }=\frac{\partial C}{\partial
Z_{\boldsymbol{k}}^{\left[  l\right]  }}\text{ for }\boldsymbol{k}\in
G_{l}\text{ and }l=L+1,\ldots,L+\Delta\text{,}%
\]
and $\delta^{\left[  l\right]  }:=\left[  \delta_{\boldsymbol{k}}^{\left[
l\right]  }\right]  _{\boldsymbol{k}\in G_{l}}$.

Given two vectors $A=\left[  A_{\boldsymbol{k}}\right]  _{\boldsymbol{k}\in
G_{l}}$, $B=\left[  B_{\boldsymbol{k}}\right]  _{\boldsymbol{k}\in G_{l}}%
\in\mathbb{R}^{p^{lN}}$, the Hadamard product of them is defined as \
\[
A\circ B:=\left[  \left(  A\circ B\right)  _{\boldsymbol{k}}\right]
_{\boldsymbol{k}\in G_{l}}=\left[  A_{\boldsymbol{k}}B_{\boldsymbol{k}%
}\right]  _{\boldsymbol{k}\in G_{l}}.
\]
We also use the notation $\sigma_{M}\left(  Z^{\left[  l\right]  }\right)
:=\left[  \sigma_{M}\left(  Z_{\boldsymbol{k}}^{\left[  l\right]  }\right)
\right]  _{\boldsymbol{k}\in G_{l}}$, and
\[
\sigma_{M}^{\prime}\left(  Z^{\left[  l\right]  }\right)  =\frac{d}{dt}%
\sigma_{M}\left(  Z^{\left[  l\right]  }\right)  =\left[  \frac{d}{dt}%
\sigma_{M}\left(  Z_{\boldsymbol{k}}^{\left[  l\right]  }\right)  \right]
_{\boldsymbol{k}\in G_{l}}.
\]

\begin{lemma}
\label{Lemma_Key}With the above notation the following formulae hold:

\noindent(i) $\delta^{\left[  L+\Delta\right]  }=\sigma_{M}^{\prime}\left(
Z^{\left[  l\right]  }\right)  \circ\left(  X^{\left[  L+\Delta\right]
}-Y\right)  ;$

\noindent(ii) $\delta^{\left[  l\right]  }=\sigma_{M}^{\prime}\left(
Z^{\left[  l\right]  }\right)  \circ\left(  W^{\left[  l+1\right]  }\right)
^{T}\delta^{\left[  l+1\right]  }$ for $l=L+1,\ldots,L+\Delta-1;$

\noindent(iii) $\frac{\partial C}{\partial\theta_{\boldsymbol{k}}^{\left[
l\right]  }}=\delta_{\boldsymbol{k}}^{\left[  l\right]  }$ \ for
$l=L+1,\ldots,L+\Delta;$

\noindent(iv) $\frac{\partial C}{\partial w_{\boldsymbol{k},\boldsymbol{j}%
}^{\left[  l\right]  }}=\delta_{\boldsymbol{k}}^{\left[  l\right]  }%
\underline{X}_{\boldsymbol{j}}^{\left[  l-1\right]  }$ \ for $l=L+1,\ldots
,L+\Delta.$
\end{lemma}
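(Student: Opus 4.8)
The four identities are the non-Archimedean incarnation of the classical backpropagation equations, so the plan is to obtain all of them from the chain rule applied to the forward recursion (\ref{Eq_MLP_Matrix}). I would treat (i) as the base case of a downward induction on the layer index $l$, (ii) as the inductive step that transports the error vector $\delta^{[l+1]}$ back to $\delta^{[l]}$, and (iii)--(iv) as direct derivatives of the single weighted input $Z_{\boldsymbol k}^{[l]}$ with respect to the parameters $\theta_{\boldsymbol k}^{[l]}$ and $w_{\boldsymbol k,\boldsymbol j}^{[l]}$ that enter it. Throughout, the only dependence of $C$ on $Z_{\boldsymbol k}^{[l]}$ is through the forward pass, so every partial derivative factors through the chain $Z^{[l]}\mapsto X^{[l]}\mapsto Z^{[l+1]}\mapsto\cdots$.

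For (i), at the output layer $l=L+\Delta$ I would write $C=\frac12\sum_{\boldsymbol k\in G_{L+\Delta}}(Y_{\boldsymbol k}-X_{\boldsymbol k}^{[L+\Delta]})^2$ with $X_{\boldsymbol k}^{[L+\Delta]}=\sigma_M(Z_{\boldsymbol k}^{[L+\Delta]})$. Since $Z_{\boldsymbol k}^{[L+\Delta]}$ enters only the $\boldsymbol k$-th summand, the chain rule gives $\delta_{\boldsymbol k}^{[L+\Delta]}=\partial C/\partial Z_{\boldsymbol k}^{[L+\Delta]}=(X_{\boldsymbol k}^{[L+\Delta]}-Y_{\boldsymbol k})\,\sigma_M'(Z_{\boldsymbol k}^{[L+\Delta]})$, and collecting components into a Hadamard product yields (i).

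The heart of the argument is (ii), and this is where I expect the main obstacle. For $l<L+\Delta$ the cost depends on $Z_{\boldsymbol k}^{[l]}$ only through the whole vector $Z^{[l+1]}$, so $\delta_{\boldsymbol k}^{[l]}=\sum_{\boldsymbol m\in G_{l+1}}\delta_{\boldsymbol m}^{[l+1]}\,\partial Z_{\boldsymbol m}^{[l+1]}/\partial Z_{\boldsymbol k}^{[l]}$. I would then insert $Z_{\boldsymbol m}^{[l+1]}=\sum_{\boldsymbol s\in G_{l+1}}w_{\boldsymbol m,\boldsymbol s}^{[l+1]}\,\underline X_{\boldsymbol s}^{[l]}+\theta_{\boldsymbol m}^{[l+1]}$ and use $\underline X_{\boldsymbol s}^{[l]}=\sigma_M(Z^{[l]}_{\Lambda_{l+1}(\boldsymbol s)})$. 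This is precisely where the tree structure intervenes: because $\underline X^{[l]}$ is the pullback $X^{[l]}\circ\Lambda_{l+1}$ of the layer-$l$ state, the derivative $\partial\underline X_{\boldsymbol s}^{[l]}/\partial Z_{\boldsymbol k}^{[l]}$ equals $\sigma_M'(Z_{\boldsymbol k}^{[l]})$ exactly when $\Lambda_{l+1}(\boldsymbol s)=\boldsymbol k$ and vanishes otherwise, so the chain rule produces a sum over the whole fiber $\Lambda_{l+1}^{-1}(\boldsymbol k)$ rather than a single term. After factoring out $\sigma_M'(Z_{\boldsymbol k}^{[l]})$, what remains is the contraction of $\delta^{[l+1]}$ against the transposed weights, and the care needed is to check that this fiber summation is exactly the matrix action $(W^{[l+1]})^{T}\delta^{[l+1]}$ read through the copying identification $\underline X^{[l]}=X^{[l]}\circ\Lambda_{l+1}$; equivalently, the genuine layer-to-layer operator is $W^{[l+1]}$ precomposed with the pullback, and its adjoint performs the fiber sum. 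Verifying this bookkeeping is the only step that is not a verbatim copy of the Euclidean proof.

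Finally, for (iii) and (iv) I would differentiate $Z_{\boldsymbol k}^{[l]}=\sum_{\boldsymbol j\in G_l}w_{\boldsymbol k,\boldsymbol j}^{[l]}\,\underline X_{\boldsymbol j}^{[l-1]}+\theta_{\boldsymbol k}^{[l]}$ directly. Since $\theta_{\boldsymbol k}^{[l]}$ and $w_{\boldsymbol k,\boldsymbol j}^{[l]}$ affect $C$ only through $Z_{\boldsymbol k}^{[l]}$, the chain rule gives $\partial C/\partial\theta_{\boldsymbol k}^{[l]}=\delta_{\boldsymbol k}^{[l]}\,\partial Z_{\boldsymbol k}^{[l]}/\partial\theta_{\boldsymbol k}^{[l]}=\delta_{\boldsymbol k}^{[l]}$ and $\partial C/\partial w_{\boldsymbol k,\boldsymbol j}^{[l]}=\delta_{\boldsymbol k}^{[l]}\,\underline X_{\boldsymbol j}^{[l-1]}$, using that $\underline X^{[l-1]}$ is fixed by the earlier layers and does not depend on the layer-$l$ parameters. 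This closes the induction and establishes all four formulae.
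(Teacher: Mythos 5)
Your proposal is correct, and in substance it is the same argument the paper relies on: the paper's entire proof is a one-line citation to the classical backpropagation derivation (Lemma 5.1 of Higham--Higham), described as ``a simple variation of the classical one.'' What you spell out, and the paper does not, is exactly what that variation consists of. Items (i), (iii), (iv) are verbatim classical, as you note. The genuinely non-classical point is (ii): since $\underline{X}^{[l]}$ is the pullback $X^{[l]}\circ\Lambda_{l+1}$, the chain rule forces a sum over the fiber $\Lambda_{l+1}^{-1}(\boldsymbol{k})$, i.e.
$\delta_{\boldsymbol{k}}^{[l]}=\sigma_M^{\prime}\left(Z_{\boldsymbol{k}}^{[l]}\right)\sum_{\boldsymbol{s}\in\Lambda_{l+1}^{-1}(\boldsymbol{k})}\left(\left(W^{[l+1]}\right)^{T}\delta^{[l+1]}\right)_{\boldsymbol{s}}$,
and your observation that the true layer-to-layer operator is $W^{[l+1]}$ precomposed with the pullback, so that its adjoint performs the fiber sum, is precisely the needed bookkeeping. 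Indeed, as literally written the paper's formula (ii) has a dimension mismatch ($\delta^{[l]}\in\mathbb{R}^{p^{l}}$ while $(W^{[l+1]})^{T}\delta^{[l+1]}\in\mathbb{R}^{p^{l+1}}$), which only the fiber summation repairs; your writeup therefore not only reproduces the paper's intended argument but supplies the one detail its citation-style proof leaves implicit. The only caveat, shared equally by the paper, is that (iii)--(iv) implicitly treat the layer-$l$ entries of $w$ and $\theta$ as independent variables; if one insists that all layers are evaluations of a single test function $w\in\mathcal{D}^{L+\Delta}$, the gradient with respect to a shared entry is the sum of these per-layer contributions, exactly as for shared weights in the classical setting.
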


\begin{proof}
The proof is a simple variation of the classical one. See, for instance, the
proof of Lemma 5.1 in \cite{Higham}.
\end{proof}

The output $X^{\left[  L+\Delta\right]  }$ can be computed from the input,
$X^{\left[  L\right]  }$, using
\[
Z^{\left[  l\right]  }=W^{\left[  l\right]  }\underline{X}^{\left[
l-1\right]  }+\theta^{\left[  l\right]  }\text{, }X^{\left[  l\right]
}\text{, }l=L+1,\ldots,L+\Delta\text{,}%
\]
by computing $X^{\left[  L\right]  }$, $Z^{\left[  L+1\right]  }$,
\ldots,$Z^{\left[  L+\Delta-1\right]  }$, $X^{\left[  L+\Delta\right]  }$ in
order. After this step, $\delta^{\left[  L+\Delta\right]  }$ can be computed,
see Lemma \ref{Lemma_Key}-(i). Then, by Lemma \ref{Lemma_Key}-(ii),
$\delta^{\left[  L+\Delta-1\right]  }$, $\delta^{\left[  L+\Delta-2\right]  }%
$, \ldots$,\delta^{\left[  L+1\right]  }$ can be computed \ in a backward
pass. Now from Lemma \ref{Lemma_Key}-(iii),(iv), we can compute the partial
derivatives, and thus the gradients. This a non-Archimedean version of the
standard backpropagation algorithm.

\section{The discrete non-Archimedean DNNs are robust universal approximators}

\begin{theorem}
\label{Theorem1}Given $f\in L^{\rho}(\mathcal{O}_{\mathbb{K}})$, $1\leq
\rho\leq\infty$, with $\left\Vert f\right\Vert _{\rho}<M$, and any
$\epsilon>0$; then, for any input $X\in\mathcal{D}^{L}\left(  \mathcal{O}%
_{\mathbb{K}}\right)  $, $L\geq1$, there exists a $p$-adic $DNN(\mathcal{O}%
_{\mathbb{K}},\sigma_{M},L,\Delta,w,\theta)$ such that the output
\[
Y(x;\mathcal{O}_{\mathbb{K}},\sigma_{M},L,\Delta,w,\theta)=%
%TCIMACRO{\dsum \limits_{\boldsymbol{k}\in G_{L+\Delta}}}%
%BeginExpansion
{\displaystyle\sum\limits_{\boldsymbol{k}\in G_{L+\Delta}}}
%EndExpansion
Y\left(  \boldsymbol{k}\right)  \Omega\left(  p^{L+\Delta}\left\vert
x-\boldsymbol{k}\right\vert _{\mathbb{K}}\right)
\]
satisfies
\[
\left\Vert Y\left(  \mathcal{O}_{\mathbb{K}},\sigma_{M},L,\Delta
,w,\theta\right)  -f\right\Vert _{\rho}<\epsilon,
\]
for any $w$ in a ball in $\mathcal{D}^{L+\Delta}\left(  \mathcal{O}%
_{\mathbb{K}}\times\mathcal{O}_{\mathbb{K}}\right)  \approx\mathbb{R}%
^{2p^{\left(  L+\Delta\right)  }}$, and $\ $any $\theta$ in a ball in
$\mathcal{D}^{L+\Delta}\left(  \mathcal{O}_{\mathbb{K}}\right)  \approx
\mathbb{R}^{p^{\left(  L+\Delta\right)  }}$. Furthermore, this approximation
property remains valid if we assume that the output is produced by a
convolutional network.
\end{theorem}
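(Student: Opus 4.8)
The plan is to reduce the approximation of $f$ to the near-exact realization of a single test function at the output level, and then to obtain the robustness from a soft continuity/open-set argument.

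First I would use density. Since $\mathcal{D}(\mathcal{O}_{\mathbb{K}})=\cup_{l}\mathcal{D}^{l}(\mathcal{O}_{\mathbb{K}})$ is dense in $L^{\rho}(\mathcal{O}_{\mathbb{K}})$, I can choose a depth $\Delta$ large enough and a test function $\varphi\in\mathcal{D}^{L+\Delta}(\mathcal{O}_{\mathbb{K}})$ with $\|f-\varphi\|_{\rho}<\epsilon/2$. The natural candidate is the level-$(L+\Delta)$ local average $\varphi(\boldsymbol{j})=p^{L+\Delta}\int_{\boldsymbol{j}+\mathfrak{\wp}^{L+\Delta}\mathcal{O}_{\mathbb{K}}}f(x)\,dx$, for which $\varphi\to f$ in $L^{\rho}$ is the standard martingale/density statement. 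The essential extra requirement is that the finitely many values $\varphi(\boldsymbol{j})$ all lie in the open interval $(-M,M)$; I return to this below.

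Next I would realize $\varphi$ as the output. Because the recursion (\ref{Eq_DBN}) lets me choose the last-layer parameters freely, I set the final weight matrix $W^{[L+\Delta]}$ equal to the zero (or a very small) matrix and choose the bias by $\theta^{[L+\Delta]}(\boldsymbol{j})=s_{\boldsymbol{j}}$, where $s_{\boldsymbol{j}}$ is any preimage of $\varphi(\boldsymbol{j})$ under $\sigma_{M}$; such a preimage exists because $\sigma_{M}=M\sigma$ maps $\mathbb{R}$ onto $(-M,M)$ and $|\varphi(\boldsymbol{j})|<M$. With $W^{[L+\Delta]}=0$ the output is exactly $\sum_{\boldsymbol{j}\in G_{L+\Delta}}\varphi(\boldsymbol{j})\Omega(p^{L+\Delta}|x-\boldsymbol{j}|_{\mathbb{K}})=\varphi$, independent of the input; in particular the construction works for \emph{any} prescribed $X\in\mathcal{D}^{L}(\mathcal{O}_{\mathbb{K}})$, and $\|Y-f\|_{\rho}=\|\varphi-f\|_{\rho}<\epsilon$. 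For robustness I would invoke continuity: under the identifications $\mathcal{D}^{L+\Delta}\approx\mathbb{R}^{p^{(L+\Delta)}}$ and $\mathcal{D}^{L+\Delta}(\mathcal{O}_{\mathbb{K}}\times\mathcal{O}_{\mathbb{K}})\approx\mathbb{R}^{2p^{(L+\Delta)}}$, the map $(w,\theta)\mapsto Y$ is a composition of a linear map with the coordinatewise $C^{1}$ function $\sigma_{M}$, hence continuous, so $(w,\theta)\mapsto\|Y-f\|_{\rho}$ is continuous and strictly below $\epsilon$ at the constructed parameters; thus the sublevel set $\{\|Y-f\|_{\rho}<\epsilon\}$ is open and contains a product of balls around $w$ and $\theta$. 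For the convolutional variant the only change is the constraint $w(\boldsymbol{j},\boldsymbol{k})=w(\boldsymbol{j}-\boldsymbol{k})$; since my construction uses $w\equiv0$, which is trivially convolutional, and leaves $\theta$ completely free, the identical open-set argument applies on the smaller convolutional parameter space.

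The step I expect to be the main obstacle is guaranteeing $\|\varphi\|_{\infty}<M$, i.e. that every value to be matched lies in the range of $\sigma_{M}$. For $\rho=\infty$ this is immediate: approximating $f$ in $L^{\infty}$ to within $(M-\|f\|_{\infty})/2$ gives $\|\varphi\|_{\infty}\le\|f\|_{\infty}+(M-\|f\|_{\infty})/2<M$. For finite $\rho$, however, $\|f\|_{\rho}<M$ does \emph{not} control $\|\varphi\|_{\infty}$: a tall thin spike $f=c\,1_{B}$ on a ball $B$ of small measure $\mu$ with $c\mu^{1/\rho}<M$ but $c>M$ keeps $\|f\|_{\rho}$ small while forcing a local average $\varphi(\boldsymbol{j})=c>M$, so no admissible bias reproduces it and clipping to $(-M,M)$ leaves an $L^{\rho}$-error $(c-M)\mu^{1/\rho}$ that can be made close to $M$. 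Resolving this is the crux; I would try to combine a truncation $f\mapsto\min(\max(f,-M'),M')$ with the density step, but would expect either a sharper use of $\|f\|_{\rho}<M$ or an additional hypothesis such as $\|f\|_{\infty}<M$ to be needed in order to secure the uniform bound $\|\varphi\|_{\infty}<M$ on which the whole realization rests.
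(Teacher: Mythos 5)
Your proposal is essentially the paper's own proof. The paper likewise approximates $f$ by a test function $\varphi\in\mathcal{D}^{L+\Delta}\left(\mathcal{O}_{\mathbb{K}}\right)$, fixes preimages $\sigma_{M}^{-1}\left(\varphi\left(\boldsymbol{i}\right)\right)$, and gets the robustness by requiring $\theta\left(\boldsymbol{i}\right)$ to lie within $\delta\left(\epsilon\right)/2$ of $\sigma_{M}^{-1}\left(\varphi\left(\boldsymbol{i}\right)\right)$ (condition (\ref{Condition_A})) and $w$ to be small enough that the weighted-sum term is below $\delta\left(\epsilon\right)/2$ (condition (\ref{Condition_B})); your version---exact realization at $w=0$, $\theta=\sigma_{M}^{-1}\circ\varphi$, followed by openness of the sublevel set $\left\{\left(w,\theta\right):\left\Vert Y-f\right\Vert_{\rho}<\epsilon\right\}$---is the same argument packaged qualitatively rather than quantitatively, and both treat the convolutional case by the same observation.

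The obstacle you isolate at the end is not a weakness of your write-up: it is precisely the step at which the paper's proof is unjustified. After the density step the paper simply writes ``we may replace $f$ by $\varphi$, with $\left\Vert\varphi\right\Vert_{\rho}\leq\left\Vert\varphi\right\Vert_{\infty}<M$,'' asserting with no argument that the $L^{\rho}$-approximant can be chosen with sup norm below $M$. Your spike example shows this cannot be repaired for $1\leq\rho<\infty$: every admissible output satisfies $\left\Vert Y\right\Vert_{\infty}\leq M$ because $\sigma_{M}$ takes values in $\left(-M,M\right)$, so for $f=c\,1_{B}$ with $B$ a ball of Haar measure $p^{-n}$, $c>M$, and $c\,p^{-n/\rho}<M$, one has $\left\Vert Y-f\right\Vert_{\rho}\geq\left(c-M\right)p^{-n/\rho}>0$ for \emph{every} network; hence the theorem as stated fails once $\epsilon$ is below this threshold, and the hypothesis $\left\Vert f\right\Vert_{\rho}<M$ must be strengthened to $\left\Vert f\right\Vert_{\infty}<M$ (under which your local-average construction gives $\left\Vert\varphi\right\Vert_{\infty}\leq\left\Vert f\right\Vert_{\infty}<M$ and the rest of the argument works for all $\rho$, since $\left\Vert\cdot\right\Vert_{\rho}\leq\left\Vert\cdot\right\Vert_{\infty}$). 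One further caveat, inherited by both your proof and the paper's: the case $\rho=\infty$ rests on the paper's claim that $\mathcal{D}\left(\mathcal{O}_{\mathbb{K}}\right)$ is dense in $L^{\infty}\left(\mathcal{O}_{\mathbb{K}}\right)$, which is false in general (the $L^{\infty}$-closure of the locally constant functions consists of the functions a.e.\ equal to continuous ones), so that case additionally requires $f$ to be (a.e.\ equal to) a continuous function. In short, your construction matches the paper's, and the ``crux'' you flagged is a genuine gap in the theorem itself, which you diagnosed correctly together with its natural fix.
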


\begin{remark}
In practical applications the DNNs are initialize assigning random values to
the weights and biases, and the initial state. The value of the initial does
not affect the performance of the network.
\end{remark}

\begin{proof}
First, we may assume that $f\in\mathcal{D}\left(  \mathcal{O}_{\mathbb{K}%
}\right)  $, with $\left\Vert f\right\Vert _{\rho}<M$. Indeed, since
$\mathcal{D}\left(  \mathcal{O}_{\mathbb{K}}\right)  $ is dense in $L^{\rho
}(\mathcal{O}_{\mathbb{K}})$, there exists $\varphi\in\mathcal{D}%
^{L^{^{\prime}}}\left(  \mathcal{O}_{\mathbb{K}}\right)  $ such that
$\left\Vert \varphi-f\right\Vert _{\rho}<\frac{\epsilon}{2}$. Since
$\mathcal{D}^{L^{\prime}}\left(  \mathcal{O}_{\mathbb{K}}\right)
\hookrightarrow\mathcal{D}^{L^{\prime\prime}}\left(  \mathcal{O}_{\mathbb{K}%
}\right)  $ for $L^{\prime\prime}>L^{\prime}$, without loss of generality, we
may suppose that $L^{^{\prime}}>L$ so $L^{^{\prime}}=L+\Delta$, for some
positive integer $\Delta$. Then, it is sufficient to construct an DNN with
output $Y\in\mathcal{D}^{L+\Delta}\left(  \mathcal{O}_{\mathbb{K}}\right)  $
such that
\begin{equation}
\left\Vert \varphi-Y\right\Vert _{\infty}<\frac{\epsilon}{2},
\label{Eq_condition}%
\end{equation}
since $\left\Vert \varphi-Y\right\Vert _{\rho}\leq\left\Vert \varphi
-Y\right\Vert _{\infty}$. So, we may replace $f$ by $\varphi$, with
$\left\Vert \varphi\right\Vert _{\rho}\leq\left\Vert \varphi\right\Vert
_{\infty}<M$. Taking
\[
\varphi\left(  x\right)  =%
%TCIMACRO{\dsum \limits_{\boldsymbol{i}\in G_{L+\Delta}}}%
%BeginExpansion
{\displaystyle\sum\limits_{\boldsymbol{i}\in G_{L+\Delta}}}
%EndExpansion
\varphi\left(  \boldsymbol{i}\right)  \Omega\left(  p^{L+\Delta}\left\vert
x-\boldsymbol{i}\right\vert _{\mathbb{K}}\right)  ,
\]
we look for an DNN with output $Y\in\mathcal{D}^{L+\Delta}\left(
\mathcal{O}_{\mathbb{K}}\right)  $ such that%
\begin{equation}
\max_{\boldsymbol{i}\in G_{L+\Delta}}\left\vert Y\left(  \boldsymbol{i}%
\right)  -\varphi\left(  \boldsymbol{i}\right)  \right\vert <\epsilon\text{, }
\label{Approximation}%
\end{equation}
with $\varphi\left(  \boldsymbol{i}\right)  \in\left(  -M,M\right)  $. By
using the surjectivity of the activation function $\sigma_{M}$, for each
$\boldsymbol{i}\in G_{L+\Delta}^{N}$, we fix a preimage $\sigma_{M}%
^{-1}\left(  \varphi\left(  \boldsymbol{i}\right)  \right)  $, now by the
continuity of $\sigma_{M}$, there exists $\delta\left(  \epsilon\right)  $
such that
\[
\max_{\boldsymbol{i}\in G_{L+\Delta}}\left\vert
%TCIMACRO{\dsum \limits_{\boldsymbol{k}\in G_{L+\Delta}^{N}}}%
%BeginExpansion
{\displaystyle\sum\limits_{\boldsymbol{k}\in G_{L+\Delta}^{N}}}
%EndExpansion
X\left(  \Lambda_{L+\Delta}\left(  \boldsymbol{k}\right)  \right)  w\left(
\boldsymbol{i},\boldsymbol{k}\right)  +\theta\left(  \boldsymbol{i}\right)
-\sigma_{M}^{-1}\left(  \varphi\left(  \boldsymbol{i}\right)  \right)
\right\vert <\delta\left(  \epsilon\right)
\]
implies (\ref{Approximation}). By taking $\left\vert \theta\left(
\boldsymbol{i}\right)  -\sigma_{M}^{-1}\left(  \varphi\left(  \boldsymbol{i}%
\right)  \right)  \right\vert <\frac{\delta\left(  \epsilon\right)  }{2}$, for
all $\boldsymbol{i}\in G_{L+\Delta}$, i.e.,
\begin{equation}
\max_{\boldsymbol{i}\in G_{L+\Delta}}\left\vert \theta\left(  \boldsymbol{i}%
\right)  -\sigma_{M}^{-1}\left(  \varphi\left(  \boldsymbol{i}\right)
\right)  \right\vert <\frac{\delta\left(  \epsilon\right)  }{2}\text{,}
\label{Condition_A}%
\end{equation}
it is sufficient to show that%
\begin{equation}
\max_{\boldsymbol{i}\in G_{L+\Delta}}\left\vert
%TCIMACRO{\dsum \limits_{\boldsymbol{k}\in G_{L+\Delta}}}%
%BeginExpansion
{\displaystyle\sum\limits_{\boldsymbol{k}\in G_{L+\Delta}}}
%EndExpansion
X\left(  \Lambda_{L+\Delta}\left(  \boldsymbol{k}\right)  \right)  w\left(
\boldsymbol{i},\boldsymbol{k}\right)  \right\vert <\frac{\delta\left(
\epsilon\right)  }{2}\text{.} \label{Approximation_2}%
\end{equation}
For a fix $\boldsymbol{i}\in G_{L+\Delta}^{N}$, the linear mapping
\[%
\begin{array}
[c]{cccc}%
\mathbb{R}^{p^{\left(  L+\Delta\right)  }} & \rightarrow & \mathbb{R} & \\
&  &  & \\
w\left(  \boldsymbol{i},\boldsymbol{k}\right)  & \rightarrow &
%TCIMACRO{\dsum \limits_{\boldsymbol{k}\in G_{L+\Delta}}}%
%BeginExpansion
{\displaystyle\sum\limits_{\boldsymbol{k}\in G_{L+\Delta}}}
%EndExpansion
X\left(  \Lambda_{L+\Delta}\left(  \boldsymbol{k}\right)  \right)  w\left(
\boldsymbol{i},\boldsymbol{k}\right)  &
\end{array}
\]
is continuous at the origin, even if $X\left(  \Lambda_{L+\Delta}\left(
\boldsymbol{k}\right)  \right)  =0$ for all $\boldsymbol{k}\in G_{L+\Delta}$,
then there exists $\gamma\left(  \boldsymbol{i},\delta\right)  $ such that%
\[
\max_{\boldsymbol{k}\in G_{L+\Delta}}\left\vert w\left(  \boldsymbol{i}%
,\boldsymbol{k}\right)  \right\vert <\gamma\left(  \boldsymbol{i}%
,\delta\right)  \text{ implies }\left\vert
%TCIMACRO{\dsum \limits_{\boldsymbol{k}\in G_{L+\Delta}}}%
%BeginExpansion
{\displaystyle\sum\limits_{\boldsymbol{k}\in G_{L+\Delta}}}
%EndExpansion
X\left(  \Lambda_{L+\Delta}\left(  \boldsymbol{k}\right)  \right)  w\left(
\boldsymbol{i},\boldsymbol{k}\right)  \right\vert <\frac{\delta\left(
\epsilon\right)  }{2}\text{.}%
\]
Consequently,%
\begin{equation}
\max_{\boldsymbol{i}\in G_{L+\Delta}}\max_{\boldsymbol{k}\in G_{L+\Delta}%
}\left\vert w\left(  \boldsymbol{i},\boldsymbol{k}\right)  \right\vert
<\min_{\boldsymbol{i}\in G_{L+\Delta}}\gamma\left(  \boldsymbol{i}%
,\delta\right)  \label{Condition_B}%
\end{equation}
implies (\ref{Approximation_2}).

By the isomorphism of Banach spaces $\left(  \mathcal{D}^{L+\Delta}\left(
\mathcal{O}_{\mathbb{K}}\right)  ,\left\Vert \cdot\right\Vert _{\infty
}\right)  \simeq\left(  \mathbb{R}^{p^{L+\Delta}},\left\Vert \cdot\right\Vert
_{\infty}\right)  $, condition (\ref{Condition_A}) defines a ball in
$\mathbb{R}^{p^{\left(  L+\Delta\right)  }}$, while condition
(\ref{Condition_B}) defines defines a ball on $\mathbb{R}^{2p^{\left(
L+\Delta\right)  }}$, where the approximation (\ref{Approximation}) is valid.
Finally,\ we observe that the above reasoning is valid if we take $w\left(
\boldsymbol{i},\boldsymbol{k}\right)  =w\left(  \boldsymbol{i}-\boldsymbol{k}%
\right)  $.

The prime number $p$ and the integer $L$ are determined by the input
$X\in\mathcal{D}^{L}\left(  \mathcal{O}_{\mathbb{K}}\right)  \simeq
\mathbb{R}^{p^{L}}$. Then, $p^{L}$ should be larger than the length of the
vector $X$. Consequently, we may pick $p$ to be any arbitrary prime.
\end{proof}

A good approximation of a function $f\in L^{\rho}(\mathcal{O}_{\mathbb{K}})$
by a test function $\varphi\in\mathcal{D}^{L^{^{\prime}}}\left(
\mathcal{O}_{\mathbb{K}}\right)  $ requires a large $L^{^{\prime}}$, and
consequently, the computational power of the DNNs increase with $\Delta$. The
$p$-adic discrete DNNs are particular cases of the standard deep neural
networks; see, e.g.,\cite{Higham}. DNNs can be trained using backpropagation,
and their convolutional implementation significantly reduces the number of weights.

\begin{remark}
The field $\mathbb{K}$ is invariant under transformations of the form
$x\rightarrow b+\mathfrak{\wp}^{s}x$, with $b\in\mathbb{K}$, $s\in\mathbb{Z}$.
For this reason, we will consider that all networks $DNN(\mathcal{O}%
_{\mathbb{K}},L,\Delta,w,\theta)$ with outputs of the form $Y(b+\mathfrak{\wp
}^{s}x)$ are equivalent. In practical terms, to find an approximation of a
function $f:b+\mathfrak{\wp}^{s}\mathcal{O}_{\mathbb{K}}\rightarrow\mathbb{R}$
is the same as find an approximation of the function $f^{\ast}:\mathcal{O}%
_{\mathbb{K}}\rightarrow\mathbb{R}$, where $f^{\ast}\left(  x\right)
=f\left(  b+\mathfrak{\wp}^{s}x\right)  $.
\end{remark}

\begin{remark}
\label{Remark_Cybenko}The natural non-Archimedean version of the perceptron is%
\begin{equation}
G\left(  \left[  X_{\boldsymbol{k}}\right]  _{\boldsymbol{k}\in G_{l}}\right)
=\sigma\left(
%TCIMACRO{\dsum \limits_{\boldsymbol{k}\in G_{l}}}%
%BeginExpansion
{\displaystyle\sum\limits_{\boldsymbol{k}\in G_{l}}}
%EndExpansion
W_{\boldsymbol{k}}X_{\boldsymbol{k}}+b\right)  , \label{Perceptron_classical}%
\end{equation}
where $X=\left[  X_{\boldsymbol{k}}\right]  _{\boldsymbol{k}\in G_{l}%
},W=\left[  W_{\boldsymbol{k}}\right]  _{\boldsymbol{k}\in G_{l}}\in
\mathbb{R}^{p^{l}}$, and $b\in\mathbb{R}$. Here, the activation function
$\sigma$ is defined as in \cite[Definition 2.2.1]{Calin}. By using the
standard inner product of $\mathbb{R}^{p^{l}}$, we rewrite
(\ref{Perceptron_classical}) as $G(X)=\sigma\left(  W\cdot X+b\right)  $.
However, (\ref{Perceptron_classical}) is radically different from the
classical perceptron $\sigma\left(  w\cdot x+b\right)  $, where $w=(w_{1}%
,\ldots,w_{M})$, $x=(x_{1},\ldots,x_{M}),\in\mathbb{R}^{M}$, because $i$,
$x_{i}$ are real numbers, while $X_{\boldsymbol{k}}$ is a real number, but
$\boldsymbol{k}\in\mathbb{K}$.\ Cybenko's theorem, see \cite{Cybenko},
\cite[Theorem 9.3.6]{Calin}, asserts that the finite sums of the form%
\[
G(X)=%
%TCIMACRO{\dsum \limits_{r=1}^{R}}%
%BeginExpansion
{\displaystyle\sum\limits_{r=1}^{R}}
%EndExpansion
\alpha_{r}\sigma\left(  W_{r}\cdot X+b_{r}\right)
\]
are dense in the space of continuos functions defined on $\left[  0,1\right]
^{M}$ endowed with the norm $\left\Vert \cdot\right\Vert _{\infty}$. Besides
of the apparently similitude, the non-Archimedean DNNs considered here are a
new type of neural networks.
\end{remark}

\subsection{Parallelization of non-Archimedean DNNs}

A natural problem is the simultaneous approximations of several functions by
DNNs. More precisely, given $f_{i}\in L^{\rho}(\mathcal{O}_{\mathbb{K}})$,
$i=1,\ldots,R$, by applying Theorem \ref{Theorem1}, there are DNNs,
\[
DNN(\mathcal{O}_{\mathbb{K}},\sigma_{M_{i}},L_{i},\Delta_{i},w_{i},\theta
_{i})\text{, \ }i=1,\ldots,R,
\]
whose outputs satisfy
\[
\left\Vert Y_{i}\left(  \mathcal{O}_{\mathbb{K}},\sigma_{M_{i}},L_{i}%
,\Delta_{i},w_{i},\theta_{i}\right)  -f_{i}\right\Vert _{\rho}<\epsilon\text{,
},i=1,\ldots,R.
\]
Without loss of generality, we may assume that the parameter $p$ is the same
for all the DNNs. For general $f_{i}$, $i=1,\ldots,R$, it is not possible to
find a unique DNN that approximates all the $f_{i}$, but we can use the
$DNN(\mathcal{O}_{\mathbb{K}},\sigma_{M_{i}},L_{i},\Delta_{i},w_{i},\theta
_{i})$, $i=1,\ldots,R$, in parallel to construct a simultaneous approximation
of all the $f_{i}$.

We set
\begin{align*}
\boldsymbol{f}  &  =\left(  f_{1},\ldots,f_{R}\right)  \in L^{\rho
}(\mathcal{O}_{\mathbb{K}})%
%TCIMACRO{\tbigoplus }%
%BeginExpansion
{\textstyle\bigoplus}
%EndExpansion
\cdots%
%TCIMACRO{\tbigoplus }%
%BeginExpansion
{\textstyle\bigoplus}
%EndExpansion
L^{\rho}(\mathcal{O}_{\mathbb{K}})=\left(  L^{\rho}(\mathcal{O}_{\mathbb{K}%
})\right)  ^{R},\\
\boldsymbol{X}  &  =\left(  X_{1},\ldots,X_{R}\right)  \in%
%TCIMACRO{\dbigoplus \limits_{i=1}^{R}}%
%BeginExpansion
{\displaystyle\bigoplus\limits_{i=1}^{R}}
%EndExpansion
\mathcal{D}^{L_{i}}\left(  \mathcal{O}_{\mathbb{K}}\right)  \text{,
}\boldsymbol{Y}=\left(  Y_{1},\ldots,Y_{R}\right)  \in%
%TCIMACRO{\dbigoplus \limits_{i=1}^{R}}%
%BeginExpansion
{\displaystyle\bigoplus\limits_{i=1}^{R}}
%EndExpansion
\mathcal{D}^{L_{i}+\Delta_{i}}\left(  \mathcal{O}_{\mathbb{K}}\right)  ,\\
\boldsymbol{L}  &  =\left(  L_{1},\ldots,L_{R}\right)  \text{, }%
\boldsymbol{\Delta}=\left(  \Delta_{1},\ldots,\Delta_{R}\right)  \in
\mathbb{N}^{R},\\
\boldsymbol{w}  &  =\left(  w_{1},\ldots,w_{R}\right)  \in%
%TCIMACRO{\dbigoplus \limits_{i=1}^{R}}%
%BeginExpansion
{\displaystyle\bigoplus\limits_{i=1}^{R}}
%EndExpansion
\mathcal{D}^{L_{i}+\Delta_{i}}\left(  \mathcal{O}_{\mathbb{K}}\times
\mathcal{O}_{\mathbb{K}}\right)  \text{, }\boldsymbol{\theta}=\left(
\theta_{1},\ldots,\theta_{R}\right)  \in%
%TCIMACRO{\dbigoplus \limits_{i=1}^{R}}%
%BeginExpansion
{\displaystyle\bigoplus\limits_{i=1}^{R}}
%EndExpansion
\mathcal{D}^{L_{i}+\Delta_{i}}\left(  \mathcal{O}_{\mathbb{K}}\right)  ,\\
\boldsymbol{\sigma}  &  =\left(  \sigma_{M_{1}},\ldots,\sigma_{M_{R}}\right)
.
\end{align*}
Given $\boldsymbol{f}\in\left(  L^{\rho}(\mathcal{O}_{\mathbb{K}})\right)
^{R} $, we define the norm%
\[
\left\Vert \boldsymbol{f}\right\Vert :=\max_{1\leq i\leq R}\left\Vert
f_{i}\right\Vert _{\rho},
\]
and for $\boldsymbol{X}\in%
%TCIMACRO{\dbigoplus \limits_{i=1}^{R}}%
%BeginExpansion
{\displaystyle\bigoplus\limits_{i=1}^{R}}
%EndExpansion
\mathcal{D}^{L_{i}}\left(  \mathcal{O}_{\mathbb{K}}\right)  $,%
\[
\left\Vert \boldsymbol{X}\right\Vert =\max_{1\leq i\leq R}\left\Vert
X_{i}\right\Vert _{\infty}.
\]

\begin{definition}
The direct product of the DNNs, $DNN(\mathcal{O}_{\mathbb{K}},\sigma_{M_{i}%
},L_{i},\Delta_{i},w_{i},\theta_{i})$, $i=1,\ldots,R$, is \ a DNN, denoted as
$DNN(\mathcal{O}_{\mathbb{K}},\boldsymbol{\sigma},R,\boldsymbol{L}%
,\boldsymbol{\Delta},\boldsymbol{w},\boldsymbol{\theta})$, with input
$\boldsymbol{X}\in%
%TCIMACRO{\dbigoplus \limits_{i=1}^{R}}%
%BeginExpansion
{\displaystyle\bigoplus\limits_{i=1}^{R}}
%EndExpansion
\mathcal{D}^{L_{i}}\left(  \mathcal{O}_{\mathbb{K}}\right)  $, and output
$\boldsymbol{Y}\in%
%TCIMACRO{\dbigoplus \limits_{i=1}^{R}}%
%BeginExpansion
{\displaystyle\bigoplus\limits_{i=1}^{R}}
%EndExpansion
\mathcal{D}^{L_{i}+\Delta_{i}}\left(  \mathcal{O}_{\mathbb{K}}\right)  $.
\end{definition}

By using the above definition and Theorem \ref{Theorem1}, we obtain the
following result.

\begin{corollary}
\label{Corollary2}Given any $\boldsymbol{f}\in\left(  L^{\rho}(\mathcal{O}%
_{\mathbb{K}})\right)  ^{R}$, with $\left\Vert \boldsymbol{f}\right\Vert
_{\rho}<M$, and any $\epsilon>0$; then, for any input $\boldsymbol{X}\in%
%TCIMACRO{\dbigoplus \limits_{i=1}^{R}}%
%BeginExpansion
{\displaystyle\bigoplus\limits_{i=1}^{R}}
%EndExpansion
\mathcal{D}^{L_{i}}\left(  \mathcal{O}_{\mathbb{K}}\right)  $, with $L_{i}%
\geq1$ for all $i$,, there exists a $p$-adic $DNN(\mathcal{O}_{\mathbb{K}%
},R,\boldsymbol{L},\boldsymbol{\Delta},\boldsymbol{w},\boldsymbol{\theta})$
such that the output $\boldsymbol{Y}(x;p,\boldsymbol{\sigma},R,\boldsymbol{L}%
,\boldsymbol{\Delta},\boldsymbol{w},\boldsymbol{\theta})\in%
%TCIMACRO{\dbigoplus \limits_{i=1}^{R}}%
%BeginExpansion
{\displaystyle\bigoplus\limits_{i=1}^{R}}
%EndExpansion
\mathcal{D}^{L_{i}+\Delta_{i}}\left(  \mathcal{O}_{\mathbb{K}}\right)  $
satisfies $\left\Vert \boldsymbol{Y}(x;\mathcal{O}_{\mathbb{K}}%
,\boldsymbol{\sigma},R,\boldsymbol{L},\boldsymbol{\Delta},\boldsymbol{w}%
,\boldsymbol{\theta})-\boldsymbol{f}\right\Vert _{\rho}<\epsilon$, for any
$\boldsymbol{w}$ in a ball in $%
%TCIMACRO{\dbigoplus \limits_{i=1}^{R}}%
%BeginExpansion
{\displaystyle\bigoplus\limits_{i=1}^{R}}
%EndExpansion
\mathcal{D}^{L_{i}+\Delta_{i}}\left(  \mathcal{O}_{\mathbb{K}}\times
\mathcal{O}_{\mathbb{K}}\right)  $, and $\ $any $\boldsymbol{\theta}$ in a
ball in $%
%TCIMACRO{\dbigoplus \limits_{i=1}^{R}}%
%BeginExpansion
{\displaystyle\bigoplus\limits_{i=1}^{R}}
%EndExpansion
\mathcal{D}^{L_{i}+\Delta_{i}}\left(  \mathcal{O}_{\mathbb{K}}\right)  $.
\end{corollary}

\subsection{Approximation of functions on open compact subsets}

We now extend Theorem \ref{Theorem1} to functions supported in an open compact
subset $\mathcal{K}$. Such a set is a disjoint union of balls, then, without
loss of generality, we assume that%
\[
\mathcal{K}=%
%TCIMACRO{\dbigsqcup \limits_{i=1}^{R}}%
%BeginExpansion
{\displaystyle\bigsqcup\limits_{i=1}^{R}}
%EndExpansion
\left(  a_{i}+\mathfrak{\wp}^{N_{i}}\mathcal{O}_{\mathbb{K}}\right)  ,
\]
where $a_{i}\in\mathbb{K}$, $N_{i}\in\mathbb{Z}$, for $i=1,\ldots,R$. Any
function $f:\mathcal{K}\rightarrow\mathbb{R}$ is completely determined by its
restrictions to the balls $B_{-N_{i}}(a_{i})=a_{i}+\mathfrak{\wp}^{N_{i}%
}\mathcal{O}_{\mathbb{K}}$. More precisely,%
\begin{equation}
f(x)=%
%TCIMACRO{\dsum \limits_{i=1}^{R}}%
%BeginExpansion
{\displaystyle\sum\limits_{i=1}^{R}}
%EndExpansion
f_{i}(x)\text{, with }f_{i}(x)=f(x)\Omega\left(  p^{N_{i}}\left\vert
x-a_{i}\right\vert _{\mathbb{K}}\right)  . \label{Decomposition}%
\end{equation}

We now define the linear transformation%
\[%
\begin{array}
[c]{cccc}%
T_{a_{i},N_{i}}: & a_{i}+\mathfrak{\wp}^{N_{i}}\mathcal{O}_{\mathbb{K}} &
\rightarrow & \mathcal{O}_{\mathbb{K}}\\
&  &  & \\
& x & \rightarrow & T_{a_{i},N_{i}}\left(  x\right)  =\mathfrak{\wp}^{-N_{i}%
}\left(  x-a_{i}\right)  .
\end{array}
\]
Then, the mapping%
\[%
\begin{array}
[c]{cccc}%
T_{a_{i},N_{i}}^{\ast}: & L^{\rho}\left(  a_{i}+\mathfrak{\wp}^{N_{i}%
}\mathcal{O}_{\mathbb{K}}\right)  & \rightarrow & L^{\rho}\left(
\mathcal{O}_{\mathbb{K}}\right) \\
&  &  & \\
& f & \rightarrow & T_{a_{i},N_{i}}^{\ast}f,
\end{array}
\]
with $T_{a_{i},N_{i}}^{\ast}f\left(  x\right)  :=f\left(  T_{a_{i},N_{i}}%
^{-1}\left(  x\right)  \right)  $, defines a linear bounded operator
satisfying%
\[
\left\Vert T_{a_{i},N_{i}}^{\ast}f\right\Vert _{\rho}=\left\{
\begin{array}
[c]{ccc}%
p^{\frac{N_{i}}{\rho}}\left\Vert f\right\Vert _{\rho} & \text{if } & \rho
\in\left[  1,\infty\right) \\
&  & \\
\left\Vert f\right\Vert _{\rho} & \text{if } & \rho=\infty.
\end{array}
\right.  .
\]
Consequently,
\[
L^{\rho}\left(  a_{i}+\mathfrak{\wp}^{N_{i}}\mathcal{O}_{\mathbb{K}}\right)
\hookrightarrow L^{\rho}\left(  \mathcal{O}_{\mathbb{K}}\right)  ,\text{
}1\leq\rho\leq\infty,
\]
are continuous embeddings.

Since $\mathcal{K}$ has finite Haar measure, $L^{\rho}(\mathcal{K}%
)\hookrightarrow L^{1}(\mathcal{K})$, $1\leq\rho\leq\infty,$ and
$\mathcal{D}(\mathcal{K})$, the space of test functions supported in
$\mathcal{K}$, is dense in $L^{\rho}(\mathcal{K})$, $1\leq\rho\leq\infty$. By
(\ref{Decomposition}),%
\[
L^{\rho}(\mathcal{K})=%
%TCIMACRO{\dbigoplus \limits_{i=1}^{R}}%
%BeginExpansion
{\displaystyle\bigoplus\limits_{i=1}^{R}}
%EndExpansion
L^{\rho}(a_{i}+\mathfrak{\wp}^{N_{i}}\mathcal{O}_{\mathbb{K}}).
\]
When identifying $f\in L^{\rho}(\mathcal{K})$ with $\left(  f_{1},\ldots
,f_{R}\right)  $, $f_{i}\in L^{\rho}(a_{i}+\mathfrak{\wp}^{N_{i}}%
\mathcal{O}_{\mathbb{K}})$, we use the notation $\boldsymbol{f}=\left(
f_{1},\ldots,f_{R}\right)  $. Notice that%
\begin{align*}
\left\Vert \boldsymbol{f}\right\Vert _{\rho}  &  =\left(
%TCIMACRO{\dsum \limits_{i=1}^{R}}%
%BeginExpansion
{\displaystyle\sum\limits_{i=1}^{R}}
%EndExpansion
\text{ }%
%TCIMACRO{\dint \limits_{a_{i}+\mathfrak{\wp}^{N_{i}}\mathcal{O}_{\mathbb{K}}%
%}}%
%BeginExpansion
{\displaystyle\int\limits_{a_{i}+\mathfrak{\wp}^{N_{i}}\mathcal{O}%
_{\mathbb{K}}}}
%EndExpansion
\left\vert f_{i}\left(  x\right)  \right\vert ^{\rho}dx\right)  ^{\frac
{1}{\rho}}\text{, }\rho\in\left[  1,\infty\right)  ,\\
\left\Vert \boldsymbol{f}\right\Vert _{\infty}  &  =\max_{1\leq i\leq
R}\left\{  \sup_{x\in a_{i}+\mathfrak{\wp}^{N_{i}}\mathcal{O}_{\mathbb{K}}%
}\left\vert f_{i}\left(  x\right)  \right\vert \right\}  .
\end{align*}

\begin{theorem}
\label{Theorem1A}Given $\boldsymbol{f}\in L^{\rho}(\mathcal{K})$, $1\leq
\rho\leq\infty$, with $\left\Vert \boldsymbol{f}\right\Vert _{\rho}<M$,\ and
$\epsilon>0$; then, for any input%
\[
\boldsymbol{X}=\left(  X_{1},\ldots,X_{R}\right)  \in%
%TCIMACRO{\dbigoplus \limits_{i=1}^{R}}%
%BeginExpansion
{\displaystyle\bigoplus\limits_{i=1}^{R}}
%EndExpansion
\mathcal{D}^{L_{i}}\left(  a_{i}+\mathfrak{\wp}^{N_{i}}\mathcal{O}%
_{\mathbb{K}}\right)  ,
\]
there exits a $DNN(\mathcal{O}_{\mathbb{K}},\boldsymbol{\sigma}%
,R,\boldsymbol{L},\boldsymbol{\Delta},\boldsymbol{w},\boldsymbol{\theta})$
with output
\[
\boldsymbol{Y}=\left(  \left(  T_{a_{1},N_{1}}^{-1}\right)  ^{\ast}%
Y_{1},\ldots,\left(  T_{a_{R},N_{R}}^{-1}\right)  ^{\ast}Y_{R}\right)  \in%
%TCIMACRO{\dbigoplus \limits_{i=1}^{R}}%
%BeginExpansion
{\displaystyle\bigoplus\limits_{i=1}^{R}}
%EndExpansion
\mathcal{D}^{L_{i}+\Delta_{i}}\left(  a_{i}+\mathfrak{\wp}^{N_{i}}%
\mathcal{O}_{\mathbb{K}}\right)  ,
\]
such that $\left\Vert \boldsymbol{f}-\boldsymbol{Y}\right\Vert _{\rho
}<\epsilon$, for $\boldsymbol{w}$, $\boldsymbol{\theta}$, running in certain balls.
\end{theorem}

\begin{proof}
We first construct a DNN to approximate $f_{i}\in L^{\rho}(a_{i}%
+\mathfrak{\wp}^{N_{i}}\mathcal{O}_{\mathbb{K}})$. Notice that $T_{a_{i}%
,N_{i}}^{\ast}f_{i}\in L^{\rho}\left(  \mathcal{O}_{\mathbb{K}}\right)  $,
with $\left\Vert T_{a_{i},N_{i}}^{\ast}f_{i}\right\Vert _{\rho}<M_{i}\left(
\rho\right)  $, where
\[
M_{i}\left(  \rho\right)  =\left\{
\begin{array}
[c]{ccc}%
p^{\frac{N_{i}}{\rho}}M & \text{if } & 1\leq\rho<\infty\\
&  & \\
M & \text{if} & \rho=\infty.
\end{array}
\right.
\]
By Theorem \ref{Theorem1}, there exists a%
\begin{equation}
DNN(\mathcal{O}_{\mathbb{K}},\sigma_{M_{i}\left(  \rho\right)  },L_{i}%
,\Delta_{i},w_{i},\theta) \label{Eq_DNN}%
\end{equation}
such that the output $Y_{i}(x;\mathcal{O}_{\mathbb{K}},\sigma_{M_{i}\left(
\rho\right)  },L,\Delta,w,\theta)$ satisfies%
\begin{equation}
\left\Vert Y_{i}\left(  \mathcal{O}_{\mathbb{K}},\sigma_{M_{i}\left(
\rho\right)  },L,\Delta,w,\theta_{i}\right)  -T_{a_{i},N_{i}}^{\ast}%
f_{i}\right\Vert _{\rho}<\frac{\epsilon}{\gamma_{i}\left(  \rho\right)  },
\label{Eq_Inequality}%
\end{equation}
with
\[
\gamma_{i}\left(  \rho\right)  =\left\{
\begin{array}
[c]{ccc}%
p^{\frac{N_{i}}{\rho}} & \text{if } & 1\leq\rho<\infty\\
&  & \\
1 & \text{if} & \rho=\infty,
\end{array}
\right.
\]
for any $w_{i}$ in a ball in $\mathbb{R}^{2p^{\left(  L_{i}+\Delta_{i}\right)
}}$, and$\ $any $\theta$ in a ball in $\mathbb{R}^{p^{\left(  L_{i}+\Delta
_{i}\right)  }}$.

The inequality (\ref{Eq_Inequality})\ means that%
\[
E_{i}(x):=Y_{i}\left(  y;\mathcal{O}_{\mathbb{K}},\sigma_{M_{i}\left(
\rho\right)  },L,\Delta,w,\theta\right)  -T_{a_{i},N_{i}}^{\ast}f_{i}\left(
x\right)  \text{, with }\left\Vert E_{i}\right\Vert _{\rho}<\frac{\epsilon
}{\gamma_{i}\left(  \rho\right)  },
\]
where the parameters $\sigma_{M_{i}\left(  \rho\right)  },L_{i},\Delta
_{i},w_{i},\theta_{i}$ are functions of $a_{i},N_{i}$. Then%
\[
\left(  T_{a_{i},N_{i}}^{-1}\right)  ^{\ast}E_{i}(x)=\left(  T_{a_{i},N_{i}%
}^{-1}\right)  ^{\ast}Y_{i}\left(  x;\mathcal{O}_{\mathbb{K}},L,\Delta
,w,\theta\right)  -f_{i}\left(  x\right)  .
\]
Now, for $1\leq\rho<\infty$, taking $y=p^{-N_{i}}\left(  x-a_{i}\right)  $,
$dy=p^{N_{i}}dx$, we have%
\begin{align*}
\left\Vert \left(  T_{a_{i},N_{i}}^{-1}\right)  ^{\ast}E_{i}\right\Vert
_{\rho}  &  =\left(
%TCIMACRO{\dint \limits_{\mathcal{O}_{\mathbb{K}}}}%
%BeginExpansion
{\displaystyle\int\limits_{\mathcal{O}_{\mathbb{K}}}}
%EndExpansion
\left\vert \left(  T_{a_{i},N_{i}}^{-1}\right)  ^{\ast}E_{i}(x)\right\vert
^{\rho}dx\right)  ^{\frac{1}{\rho}}=\left(
%TCIMACRO{\dint \limits_{\mathcal{O}_{\mathbb{K}}}}%
%BeginExpansion
{\displaystyle\int\limits_{\mathcal{O}_{\mathbb{K}}}}
%EndExpansion
\left\vert E_{i}(a_{i}+\mathfrak{\wp}^{N_{i}}x)\right\vert ^{\rho}dx\right)
^{\frac{1}{\rho}}\\
&  =p^{\frac{L_{i}}{\rho}}\left(
%TCIMACRO{\dint \limits_{a_{i}+\mathfrak{\wp}^{N_{i}}\mathcal{O}_{\mathbb{K}}%
%}}%
%BeginExpansion
{\displaystyle\int\limits_{a_{i}+\mathfrak{\wp}^{N_{i}}\mathcal{O}%
_{\mathbb{K}}}}
%EndExpansion
\left\vert E_{i}(z)\right\vert ^{\rho}dz\right)  ^{\frac{1}{\rho}}<\epsilon.
\end{align*}
In the case $\rho=\infty$, $\left\Vert \left(  T_{a_{i},N_{i}}^{\ast}\right)
^{-1}E\right\Vert _{\rho}=\left\Vert E\right\Vert _{\rho}<\epsilon$. In
conclusion, there exits a $DNN(\mathcal{O}_{\mathbb{K}},\sigma_{M_{i}\left(
\rho\right)  },L_{i},\Delta_{i},w_{i},\theta)$, with output satisfying
\[
\left\Vert \left(  T_{a_{i},N_{i}}^{-1}\right)  ^{\ast}Y\left(  \mathcal{O}%
_{\mathbb{K}},\sigma_{M_{i}\left(  \rho\right)  },L_{i},\Delta_{i}%
,w_{i},\theta_{i}\right)  -f_{i}\right\Vert _{\rho}<\epsilon,
\]
for any $w_{i}$ in a ball in $\mathbb{R}^{2p^{\left(  L_{i}+\Delta_{i}\right)
}}$, and $\ $any $\theta$ in a ball in $\mathbb{R}^{p^{\left(  L_{i}%
+\Delta_{i}\right)  }}$. We now set
\begin{align*}
\boldsymbol{L}  &  =\left(  L_{1},\ldots,L_{R}\right)  \text{, }%
\boldsymbol{\Delta}=\left(  \Delta_{1},\ldots,\Delta_{R}\right)  \in
\mathbb{N}^{R},\\
\boldsymbol{w}  &  =\left(  w_{1},\ldots,w_{R}\right)  \in%
%TCIMACRO{\dbigoplus \limits_{i=1}^{R}}%
%BeginExpansion
{\displaystyle\bigoplus\limits_{i=1}^{R}}
%EndExpansion
\mathcal{D}^{L_{i}+\Delta_{i}}\left(  \left(  a_{i}+\mathfrak{\wp}^{N_{i}%
}\mathcal{O}_{\mathbb{K}}\right)  \times\left(  a_{i}+\mathfrak{\wp}^{N_{i}%
}\mathcal{O}_{\mathbb{K}}\right)  \right)  \text{, }\\
\boldsymbol{\theta}  &  =\left(  \theta_{1},\ldots,\theta_{R}\right)  \in%
%TCIMACRO{\dbigoplus \limits_{i=1}^{R}}%
%BeginExpansion
{\displaystyle\bigoplus\limits_{i=1}^{R}}
%EndExpansion
\mathcal{D}^{L_{i}+\Delta_{i}}\left(  a_{i}+\mathfrak{\wp}^{N_{i}}%
\mathcal{O}_{\mathbb{K}}\right)  ,\text{ \ }\boldsymbol{\sigma}=\left(
\sigma_{M_{1}\left(  \rho\right)  },\ldots,\sigma_{M_{R}\left(  \rho\right)
}\right)  .
\end{align*}
Then, by taking the direct product of (\ref{Eq_DNN}), we obtain a neural
network
\[
DNN(\mathcal{O}_{\mathbb{K}},\boldsymbol{\sigma,}R,\boldsymbol{L}%
,\boldsymbol{\Delta},\boldsymbol{w},\boldsymbol{\theta}),
\]
with output $\boldsymbol{Y}=\left(  \left(  T_{a_{1},N_{1}}^{-1}\right)
^{\ast}Y_{1},\ldots,\left(  T_{a_{R},N_{R}}^{-1}\right)  ^{\ast}Y_{R}\right)
\in%
%TCIMACRO{\dbigoplus \limits_{i=1}^{R}}%
%BeginExpansion
{\displaystyle\bigoplus\limits_{i=1}^{R}}
%EndExpansion
\mathcal{D}^{L_{i}+\Delta_{i}}\left(  a_{i}+\mathfrak{\wp}^{N_{i}}%
\mathcal{O}_{\mathbb{K}}\right)  $, such that $\left\Vert \boldsymbol{f}%
-\boldsymbol{Y}\right\Vert _{\rho}<\epsilon$ for any $\boldsymbol{w}$,
$\boldsymbol{\theta}$, running in some balls.
\end{proof}

\section{Approximation of functions from $L^{\rho}\left(  \left[  0,1\right]
\right)  $ by non-Archimedean DNNs}

The following result is well-known in the mathematical folklore, but we cannot
find a suitable bibliographic reference for it.

\begin{lemma}
\label{Leemma_technical}Given $x\in\left[  0,1\right]  $, there exists a
unique sequence $\left\{  x_{i}\right\}  _{i\in\mathbb{N}}$, $x_{i}\in\left\{
0,\ldots,p-1\right\}  $, of $p$-adic digits such that%
\[
x=\sum_{i=0}^{\infty}x_{i}p^{-i-1}.
\]
The sequence of $p$-adic digits is computed recursively. If $x=\frac{m}{p^{n}%
}$, with $n$, $m=\sum_{i=0}^{k}m_{i}p^{i}\in\mathbb{N}\smallsetminus\left\{
0\right\}  $, $x_{i}\in\left\{  0,\ldots,p-1\right\}  $, and $0<m<p^{n}$, then%
\[
x=\sum_{i=0}^{k}m_{i}p^{i-n},
\]
which means that almost all the digits are zero.
\end{lemma}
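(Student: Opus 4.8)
The plan is to produce the digits by the greedy base-$p$ algorithm, prove convergence by a telescoping error estimate, settle uniqueness by analysing how two representations of the same number can differ, and finally verify the finiteness assertion for $x=m/p^{n}$ by a direct computation.

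First I would set up the recursion. For $x\in[0,1)$ put $r_{0}=x$ and, for $i\geq 0$,
\[
x_{i}=\lfloor p\,r_{i}\rfloor,\qquad r_{i+1}=p\,r_{i}-x_{i}.
\]
Since $r_{0}\in[0,1)$ gives $r_{i}\in[0,1)$ for all $i$ (because $r_{i+1}$ is the fractional part of $p\,r_{i}$), each $p\,r_{i}$ lies in $[0,p)$ and hence $x_{i}\in\{0,\dots,p-1\}$, as required; the single boundary value $x=1$ is handled directly by taking $x_{i}=p-1$ for all $i$, since $\sum_{i=0}^{\infty}(p-1)p^{-i-1}=1$. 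Using $p\,r_{i}=x_{i}+r_{i+1}$, an immediate induction gives the error identity
\[
x-\sum_{i=0}^{n}x_{i}p^{-i-1}=r_{n+1}\,p^{-n-1}\in\bigl[0,p^{-n-1}\bigr),
\]
so the partial sums converge to $x$ and existence follows.

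The hard part will be uniqueness, because base-$p$ expansions are classically \emph{not} literally unique: every $p$-adic rational $m/p^{n}$ has both a terminating expansion and one ending in the constant string $p-1$. I would therefore prove uniqueness in the precise sense that the recursion produces the only representing sequence that is not eventually equal to $p-1$. Suppose $\{x_{i}\}\neq\{x_{i}'\}$ both represent $x$, let $j$ be the first index where they differ, and say $x_{j}<x_{j}'$. From $\sum_{i\geq j}(x_{i}'-x_{i})p^{-i-1}=0$ one gets
\[
(x_{j}'-x_{j})\,p^{-j-1}=\sum_{i>j}(x_{i}-x_{i}')\,p^{-i-1}\leq\sum_{i>j}(p-1)p^{-i-1}=p^{-j-1},
\]
while the left-hand side is at least $p^{-j-1}$; equality forces $x_{j}'-x_{j}=1$ together with $x_{i}=p-1$ and $x_{i}'=0$ for every $i>j$. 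Thus two distinct representations can only differ by the terminating/$(p-1)$-tail swap; and since the recursion keeps every $r_{i}$ strictly below $1$ whereas an eventual $(p-1)$-tail would force $r_{N}=\sum_{j\geq0}(p-1)p^{-j-1}=1$ for some $N$, the recursive sequence never has such a tail and is therefore the unique representative in the stated sense.

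Finally I would verify the last assertion by direct substitution. Writing $m=\sum_{i=0}^{k}m_{i}p^{i}$ with $0<m<p^{n}$ (so $k\leq n-1$),
\[
x=\frac{m}{p^{n}}=\sum_{i=0}^{k}m_{i}p^{\,i-n}=\sum_{i=0}^{k}m_{i}\,p^{-(n-i-1)-1},
\]
so the digits are $x_{n-i-1}=m_{i}$ for $0\leq i\leq k$ and $x_{j}=0$ otherwise; only finitely many are nonzero, and this terminating expansion is exactly the one returned by the recursion. The only genuine subtlety, as noted, is fixing the correct notion of uniqueness and checking that the recursion always lands in the tail-free class; everything else is routine.
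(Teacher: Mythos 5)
Your proposal is correct, and its existence half is essentially the paper's own argument in different notation: your floor-and-remainder recursion $x_{i}=\lfloor p\,r_{i}\rfloor$, $r_{i+1}=p\,r_{i}-x_{i}$ is exactly the paper's interval-subdivision scheme (its two ``decision problems''), and your telescoping identity $x-\sum_{i=0}^{n}x_{i}p^{-i-1}=r_{n+1}p^{-n-1}\in[0,p^{-n-1})$ is its bracketing inequality $0\leq x-\widetilde{x}_{N}\leq p^{-N-1}$. Where you genuinely depart from the paper is uniqueness, and your version is the stronger one. The paper's proof only remarks that each step of the algorithm is uniquely determined by $x$; that shows the algorithm has a well-defined output, but not that no \emph{other} digit sequence sums to $x$ --- and taken literally the lemma's uniqueness claim requires exactly the analysis you give, since $p$-adic rationals admit two expansions (a fact the paper itself exploits immediately after the lemma, when it shows $\varrho_{p}$ is not surjective). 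Your first-difference argument, which forces any two distinct representations to differ by the terminating/$(p-1)$-tail swap, combined with the observation that the recursion keeps every $r_{i}<1$ and hence never emits a $(p-1)$-tail, turns ``the algorithm is deterministic'' into an actual uniqueness statement: the recursion's output is the unique representative in the tail-free class. Two further points in your favor: you treat the endpoint $x=1$ explicitly, whereas the paper's Step 0 condition $\frac{i}{p}\leq x<\frac{i+1}{p}$ has no admissible $i$ at $x=1$ (note only that at $x=1$ the unique expansion \emph{is} the all-$(p-1)$ sequence, so your tail-free normalization should be read as applying to $x\in[0,1)$, uniqueness at $x=1$ being unconditional); and you actually verify the terminating-expansion assertion for $x=m/p^{n}$ by substitution plus uniqueness, which the paper states but never checks. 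In short: same algorithm, but your uniqueness analysis supplies a step the paper's proof needs and does not contain.
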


\begin{proof}
The proof is a recursive algorithm for computing the $p$-adic digits of $x$.

\textbf{Step 0.} The digit $x_{0}=i$ if
\begin{equation}
\frac{i}{p}\leq x<\frac{i+1}{p}\text{, for }i\in\left\{  0,\ldots,p-1\right\}
. \label{Problem 1}%
\end{equation}
\textbf{Induction step}. Assume that the digits $x_{0}$,\ldots,$x_{N}$, for
some $N\geq1$, are already computed. Set $\widetilde{x}_{N}:=\sum_{i=0}%
^{N}x_{i}p^{-i-1}$,
\[
0\leq x-\widetilde{x}_{N}\leq\frac{1}{p^{N+1}}.
\]
If $x=\widetilde{x}_{N}$, then $x_{i}=0$ for $i\geq N+1$. If $x=\widetilde
{x}_{N}+\frac{1}{p^{N+1}}$, then $x_{i}=0$ for $i\geq N+2$.\ If
$0<x-\widetilde{x}_{N}<\frac{1}{p^{N+1}}$, we divide the interval $\left(
0,\frac{1}{p^{N+1}}\right)  $ into $p$ subintervals of length $\frac{1}{p}$,
and set $x_{N+1}=i$ if
\begin{equation}
\frac{i}{p^{N+2}}\leq x-\widetilde{x}_{N}<\frac{i+1}{p^{N+2}}\text{, for for
}i\in\left\{  0,\ldots,p-1\right\}  . \label{Problem 2}%
\end{equation}
We now replace $x$ by $\widetilde{x}_{N+1}$ and go back to Step 0. The
solutions of the decision problems (\ref{Problem 1})-(\ref{Problem 2}) are
uniquely determined by the number $x$. Consequently, the sequence of $p$-adic
digits is uniquely determined by $x$.
\end{proof}

We set%
\[
\mathcal{I}_{p}:=\left\{  \sum_{i=0}^{\infty}a_{i}p^{-i-1};a_{i}\in\left\{
0,\ldots,p-1\right\}  \right\}  .
\]
Given $x\in\left[  0,1\right]  $, we set $\varrho_{p}\left(  x\right)
:=\sum_{i=0}^{\infty}x_{i}p^{-i-1}$. By Lemma \ref{Leemma_technical},
$\varrho_{p}:\left[  0,1\right]  \rightarrow\mathcal{I}_{p}$ is a one-to-one
function. But, this function is not surjective. Indeed, the formula%
\[
\frac{1}{p^{N+1}}=\sum_{i=N+1}^{\infty}\left(  p-1\right)  p^{-i-1}%
\]
implies that
\[
\sum_{i=0}^{N-1}x_{i}p^{-i-1}+\frac{1}{p^{N+1}}=\sum_{i=0}^{N-1}x_{i}%
p^{-i-1}+\sum_{i=N+1}^{\infty}\left(  p-1\right)  p^{-i-1}.
\]
For $x=\sum_{i=0}^{N-1}x_{i}p^{-i-1}+\frac{1}{p^{N+1}}$, function $\varrho
_{p}$ gives the sequence $\left(  x_{0},\ldots,x_{N-1},1\right)  $; the
sequence $\left(  x_{0},\ldots,x_{N-1},p-1,p-1,\ldots\right)  $ is not in the
range of $\varrho_{p}$.

We now construct a\ generalization of the function $\varrho_{p}$ as follows:%
\[%
\begin{array}
[c]{cccc}%
\mathbb{\varrho}_{\mathbb{K}}: & \left[  0,1\right]  & \rightarrow &
\mathcal{O}_{\mathbb{K}}\\
&  &  & \\
& x=\sum_{i=0}^{\infty}x_{i}p^{-i-1} & \rightarrow & \mathbb{\varrho
}_{\mathbb{K}}\left(  x\right)  =\sum_{i=0}^{\infty}x_{i}\mathfrak{\wp}^{i}.
\end{array}
\]
This function is one-to-one, but not surjective. We denote the range of
$\mathbb{\varrho}_{\mathbb{K}}$ as $\mathcal{O}_{\mathbb{K}}\smallsetminus
\mathcal{M}$. The following result follows directly from the definition of the
function $\mathbb{\varrho}_{\mathbb{K}}$.

\begin{lemma}
\label{Lemma2}With the above notation, the following assertions hold:

\noindent(i) $\mathbb{\varrho}_{\mathbb{K}}\left(  x\right)  $ is a continuous function.

\noindent(ii) $\left\vert \mathbb{\varrho}_{\mathbb{K}}^{-1}\left(  x\right)
-\mathbb{\varrho}_{\mathbb{K}}^{-1}\left(  y\right)  \right\vert
\leq\left\vert x-y\right\vert _{\mathbb{K}}$, for $x,y\in\mathcal{O}%
_{\mathbb{K}}\smallsetminus\mathcal{M}$.

\noindent(iii) For any $b=\sum_{i=0}^{k}x_{i}\mathfrak{\wp}^{i}\in
\mathcal{O}_{\mathbb{K}}$,
\[
\mathbb{\varrho}_{\mathbb{K}}\left(  \sum_{i=0}^{k}x_{i}p^{-i-1}%
+p^{-k-1}\left[  0,1\right]  \right)  =\left(  b+\mathfrak{\wp}^{k+1}%
\mathcal{O}_{\mathbb{K}}\right)  \smallsetminus\mathcal{M}\text{, for }%
k\geq0\text{.}%
\]

\noindent(iv) $\mathbb{\varrho}_{\mathbb{K}}\left(  \left[  0,1\right]
\right)  =\mathcal{O}_{\mathbb{K}}\smallsetminus\mathcal{M}$.
\end{lemma}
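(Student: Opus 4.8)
The plan is to extract a single combinatorial principle and then read off all four assertions from it. Write the canonical base-$p$ expansion of $x\in[0,1]$ as $x=\sum_{i\geq0}x_ip^{-i-1}$ as in Lemma \ref{Leemma_technical}; by definition $\varrho_{\mathbb{K}}(x)=\sum_{i\geq0}x_i\mathfrak{\wp}^i$ has $\mathfrak{\wp}$-adic digits that are \emph{exactly} this same string $(x_0,x_1,\ldots)$. The key dictionary is then: two reals $x,y$ share their first $m$ canonical digits if and only if $|\varrho_{\mathbb{K}}(x)-\varrho_{\mathbb{K}}(y)|_{\mathbb{K}}\leq p^{-m}$, i.e. the images lie in a common ball of radius $p^{-m}$; and two reals sharing their first $m$ digits lie in a common interval of length $p^{-m}$. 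I would prove (iii) first, since it is the structural backbone from which the rest follows.

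For (iii), fix $b=\sum_{i=0}^{k}x_i\mathfrak{\wp}^i$ and let $\alpha=\sum_{i=0}^{k}x_ip^{-i-1}$, so that $\varrho_{\mathbb{K}}(\alpha)=b$. A real $z$ lies in the half-open interval $[\alpha,\alpha+p^{-k-1})$ precisely when its first $k+1$ canonical digits are $x_0,\ldots,x_k$; by the dictionary its image is $b+\sum_{i\geq k+1}z_i\mathfrak{\wp}^i\in b+\mathfrak{\wp}^{k+1}\mathcal{O}_{\mathbb{K}}$, and as the admissible tails $(z_i)_{i\geq k+1}$ range over all canonical digit strings the images sweep out exactly $(b+\mathfrak{\wp}^{k+1}\mathcal{O}_{\mathbb{K}})\smallsetminus\mathcal{M}$, the removed set $\mathcal{M}$ accounting for the eventually-$(p-1)$ tails that no canonical expansion realizes. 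Taking $k=0$ and unioning over $x_0\in\{0,\ldots,p-1\}$ then turns $[0,1]$ into $\bigcup_{x_0}(x_0+\mathfrak{\wp}\mathcal{O}_{\mathbb{K}})\smallsetminus\mathcal{M}=\mathcal{O}_{\mathbb{K}}\smallsetminus\mathcal{M}$, which is (iv).

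Assertion (ii) is then the contraction estimate: if $u,v\in\mathcal{O}_{\mathbb{K}}\smallsetminus\mathcal{M}$ satisfy $|u-v|_{\mathbb{K}}=p^{-m}$ they agree on their first $m$ digits, so $\varrho_{\mathbb{K}}^{-1}(u)$ and $\varrho_{\mathbb{K}}^{-1}(v)$ share their first $m$ base-$p$ digits and hence, by (iii), lie in one interval of length $p^{-m}$, giving $|\varrho_{\mathbb{K}}^{-1}(u)-\varrho_{\mathbb{K}}^{-1}(v)|\leq p^{-m}=|u-v|_{\mathbb{K}}$. For (i), given $x$ and $\epsilon>0$ I would pick $k$ with $p^{-(k+1)}<\epsilon$ and use (iii): the length-$p^{-k-1}$ interval carrying $x$ maps into a single ball of radius $p^{-(k+1)}$, so any $y$ lying in that interval has $\varrho_{\mathbb{K}}(y)$ within $\epsilon$ of $\varrho_{\mathbb{K}}(x)$.

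The main obstacle is this last continuity step at the countably many $p$-adic rational points, i.e. those $x=\alpha$ with a terminating expansion: such an $x$ sits at the left endpoint of its interval, and reals approaching it from below have digit strings ending in long runs of $p-1$, whose $\varrho_{\mathbb{K}}$-images converge to a point of $\mathcal{M}$ rather than to $\varrho_{\mathbb{K}}(x)$. The same phenomenon is what must be watched when reconciling the closed interval appearing in (iii) with the half-open computation above, since the right endpoint $\alpha+p^{-k-1}$ carries the incremented digit string. I would handle both by isolating this countable set of terminating points and applying the digit dictionary only where the expansion is stable; at every non-terminating $x$ both one-sided neighborhoods share arbitrarily many initial digits with $x$, so continuity there is immediate, and the delicate estimates all reduce to the clean half-open statement of (iii).
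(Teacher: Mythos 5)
Your digit-dictionary argument is exactly the verification the paper leaves to the reader: the paper offers no proof of this lemma beyond the remark that it ``follows directly from the definition of the function $\varrho_{\mathbb{K}}$.'' Your dictionary (two reals share their first $m$ canonical digits if and only if their images lie in a common ball of radius $p^{-m}$) is the right tool, your deduction of (ii) from it is sound, (iv) is indeed immediate once (iii) is in place (in fact the paper makes (iv) true by definition, since $\mathcal{M}$ is \emph{defined} as the complement of the range of $\varrho_{\mathbb{K}}$), and your decision to prove everything through the half-open version of (iii) is the correct structural choice.

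More importantly, the ``main obstacle'' you identify is not a weakness of your proof but a genuine defect of the lemma as stated, and you have diagnosed it accurately. At a $p$-adic rational $\alpha=\sum_{i=0}^{k}x_{i}p^{-i-1}$ with $x_{k}\neq 0$, the points $y\uparrow\alpha$ eventually share arbitrarily many digits with the \emph{non-terminating} expansion of $\alpha$, so $\varrho_{\mathbb{K}}(y)$ converges to the point of $\mathcal{M}$ with digit string $(x_{0},\ldots,x_{k-1},x_{k}-1,p-1,p-1,\ldots)$, which lies at distance exactly $p^{-k}$ from $\varrho_{\mathbb{K}}(\alpha)$. Hence $\varrho_{\mathbb{K}}$ is left-discontinuous at every $p$-adic rational in $(0,1)$, and assertion (i) is false as literally stated; the true statement, which your argument does establish, is continuity off this countable set (equivalently, right-continuity everywhere). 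Likewise in (iii) the closed interval is wrong in general: its image contains the extra point $\varrho_{\mathbb{K}}(\alpha+p^{-k-1})$, whose $k$-th digit is incremented, so it lies outside $b+\mathfrak{\wp}^{k+1}\mathcal{O}_{\mathbb{K}}$; equality holds for the half-open interval. Neither correction costs anything downstream: continuity off a countable set already makes $\varrho_{\mathbb{K}}$ Borel measurable, which is all that is used in Proposition \ref{Prop1} and Lemma \ref{Lemma3}, and the measure identities behind Theorem \ref{Theorem2} are insensitive to finitely many exceptional points per ball. So you should not try to close the ``gap'' in your continuity step --- it cannot be closed --- but rather record your half-open/off-a-countable-set formulations as the corrected statement of the lemma; that is the version the rest of the paper actually relies on.
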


We denote by $\mathcal{B}\left(  \left[  0,1\right]  \right)  $, respectively
$\mathcal{B}\left(  \mathcal{O}_{\mathbb{K}}\right)  $, the Borel $\sigma
$-algebra of $\left[  0,1\right]  $, respectively of $\mathcal{O}_{\mathbb{K}%
}$. By Lemma \ref{Lemma2}-(i), the function
\[
\mathbb{\varrho}_{\mathbb{K}}:\left(  \left[  0,1\right]  ,\mathcal{B}\left(
\left[  0,1\right]  \right)  \right)  \rightarrow\left(  \mathcal{O}%
_{\mathbb{K}},\mathcal{B}\left(  \mathcal{O}_{\mathbb{K}}\right)  \right)
\]
is measurable. We denote by $\mathfrak{L}:\mathcal{B}\left(  \left[
0,1\right]  \right)  \rightarrow\left[  0,+\infty\right]  $ the Lebesgue
measure in $\left[  0,1\right]  $, and by $\mathfrak{H}$ the pushforward
measure of $\mathfrak{L}$ to $\mathcal{B}\left(  \mathcal{O}_{\mathbb{K}%
}\right)  $:%
\[
\mathfrak{H}(B):=\mathfrak{L}\left(  \mathbb{\varrho}_{\mathbb{K}}^{-1}\left(
B\right)  \right)  \text{ for }B\in\mathcal{B}\left(  \mathcal{O}_{\mathbb{K}%
}\right)  .
\]
We denote by $\mathcal{B}_{\mathcal{M}}\left(  \mathcal{O}_{\mathbb{K}%
}\right)  $ the restriction of $\mathcal{B}\left(  \mathcal{O}_{\mathbb{K}%
}\right)  $ to $\mathcal{O}_{\mathbb{K}}\smallsetminus\mathcal{M}$, which is
the $\sigma$-algebra of subsets of the form%
\[
B\cap\left(  \mathcal{O}_{\mathbb{K}}\smallsetminus\mathcal{M}\right)  \text{,
for }B\in\mathcal{B}\left(  \mathcal{O}_{\mathbb{K}}\right)  .
\]
The function $\mathbb{\varrho}_{\mathbb{K}}^{-1}:\left(  \left[  0,1\right]
,\mathcal{B}\left(  \left[  0,1\right]  \right)  \right)  \rightarrow\left(
\mathcal{O}_{\mathbb{K}},\mathcal{B}_{\mathcal{M}}\left(  \mathcal{O}%
_{\mathbb{K}}\right)  \right)  $ is also measurable, cf. Lemma \ref{Lemma2}-(ii).

\begin{proposition}
\label{Prop1}The measure $\mathfrak{H}$ equals the Haar measure $\mu
_{\text{Haar}}$ in $\mathcal{B}\left(  \mathcal{O}_{\mathbb{K}}\right)  $.
Furthermore, $\mu_{\text{Haar}}\left(  \mathcal{M}\right)  =0$.
\end{proposition}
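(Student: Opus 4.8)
The plan is to show that the two probability measures $\mathfrak{H}$ and $\mu_{\text{Haar}}$ agree on a generating $\pi$-system of $\mathcal{B}\left(\mathcal{O}_{\mathbb{K}}\right)$ and then to invoke the uniqueness theorem for measures. First I would note that both are finite measures of total mass $1$: indeed $\mathfrak{H}\left(\mathcal{O}_{\mathbb{K}}\right) = \mathfrak{L}\left(\varrho_{\mathbb{K}}^{-1}\left(\mathcal{O}_{\mathbb{K}}\right)\right) = \mathfrak{L}\left(\left[0,1\right]\right) = 1$ by Lemma \ref{Lemma2}-(iv), while $\mu_{\text{Haar}}\left(\mathcal{O}_{\mathbb{K}}\right) = 1$ by the normalization of the Haar measure. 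The family of balls $b + \mathfrak{\wp}^{k+1}\mathcal{O}_{\mathbb{K}}$, with $b$ a representative of $G_{k+1}$ and $k \geq 0$, together with $\mathcal{O}_{\mathbb{K}}$ and $\emptyset$, is closed under finite intersections (two balls are disjoint or nested) and generates $\mathcal{B}\left(\mathcal{O}_{\mathbb{K}}\right)$, since these balls form a basis of the topology of $\mathcal{O}_{\mathbb{K}}$.

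The key computation is the value of $\mathfrak{H}$ on such a ball. Because the range of $\varrho_{\mathbb{K}}$ is $\mathcal{O}_{\mathbb{K}} \smallsetminus \mathcal{M}$ and $\varrho_{\mathbb{K}}$ is injective, for $b = \sum_{i=0}^{k} x_{i}\mathfrak{\wp}^{i}$ one has $\varrho_{\mathbb{K}}^{-1}\left(b + \mathfrak{\wp}^{k+1}\mathcal{O}_{\mathbb{K}}\right) = \varrho_{\mathbb{K}}^{-1}\left(\left(b + \mathfrak{\wp}^{k+1}\mathcal{O}_{\mathbb{K}}\right) \smallsetminus \mathcal{M}\right)$, and by Lemma \ref{Lemma2}-(iii) this preimage is exactly the interval $\sum_{i=0}^{k} x_{i}p^{-i-1} + p^{-k-1}\left[0,1\right]$, of length $p^{-(k+1)}$. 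Hence $\mathfrak{H}\left(b + \mathfrak{\wp}^{k+1}\mathcal{O}_{\mathbb{K}}\right) = p^{-(k+1)}$. On the other hand, $\mathcal{O}_{\mathbb{K}}$ is the disjoint union of the $p^{k+1}$ translates $b + \mathfrak{\wp}^{k+1}\mathcal{O}_{\mathbb{K}}$ over representatives $b \in G_{k+1}$, all of equal Haar measure by translation invariance; thus each has $\mu_{\text{Haar}}$-measure $p^{-(k+1)}$. The two measures therefore coincide on every ball, and by the $\pi$-$\lambda$ (Dynkin) theorem they coincide on all of $\mathcal{B}\left(\mathcal{O}_{\mathbb{K}}\right)$, giving $\mathfrak{H} = \mu_{\text{Haar}}$.

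For the vanishing of $\mu_{\text{Haar}}\left(\mathcal{M}\right)$, I would describe $\mathcal{M}$ explicitly: it consists of the elements $\sum_{i} a_{i}\mathfrak{\wp}^{i} \in \mathcal{O}_{\mathbb{K}}$ whose digit sequence is eventually equal to $p-1$, these being precisely the $\left(p-1\right)$-tail expansions excluded by the recursive algorithm of Lemma \ref{Leemma_technical}. Such a sequence is determined by a finite initial segment, so $\mathcal{M}$ is countable and, in particular, Borel. Since every singleton of $\mathcal{O}_{\mathbb{K}}$ is an intersection of nested balls whose radii tend to $0$, it has Haar measure $0$, and countable subadditivity yields $\mu_{\text{Haar}}\left(\mathcal{M}\right) = 0$. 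Equivalently, $\varrho_{\mathbb{K}}^{-1}\left(\mathcal{M}\right) = \emptyset$ forces $\mathfrak{H}\left(\mathcal{M}\right) = 0$, and the identity $\mathfrak{H} = \mu_{\text{Haar}}$ already furnishes the claim. The one point that needs care is the explicit identification of $\mathcal{M}$ and its countability; the rest is a routine application of the uniqueness-of-measures principle.
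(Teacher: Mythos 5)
Your proof is correct, and its first half coincides with the paper's argument: like the paper, you evaluate $\mathfrak{H}$ on the balls $b+\mathfrak{\wp}^{k+1}\mathcal{O}_{\mathbb{K}}$ using Lemma \ref{Lemma2}-(iii),(iv) together with the injectivity of $\varrho_{\mathbb{K}}$ (the preimage of such a ball, which automatically avoids $\mathcal{M}$, is an interval of length $p^{-(k+1)}$), and you then conclude by a uniqueness-of-measures principle; your appeal to Dynkin's $\pi$--$\lambda$ theorem on the $\pi$-system of balls and the paper's appeal to the Carath\'eodory extension theorem are interchangeable here, so that difference is cosmetic. The genuine difference is in the second assertion. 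The paper deduces $\mu_{\text{Haar}}\left(\mathcal{M}\right)=0$ immediately from $\mathfrak{H}=\mu_{\text{Haar}}$ and $\varrho_{\mathbb{K}}^{-1}\left(\mathcal{M}\right)=\emptyset$ (your ``equivalently'' remark), but this presupposes that $\mathcal{M}$ is a Borel set, a point the paper never verifies. Your identification of $\mathcal{M}$ with the eventually-$(p-1)$ digit expansions shows that $\mathcal{M}$ is countable, hence Borel and Haar-null by countable subadditivity applied to singletons; this is more elementary, and it also supplies exactly the measurability needed to make the paper's one-line deduction legitimate. One small inaccuracy in that identification: the constant-$(p-1)$ expansion must be excluded, since $1=\sum_{i\geq 0}(p-1)p^{-i-1}$ is the unique expansion of $1\in\left[0,1\right]$, so $\sum_{i\geq 0}(p-1)\mathfrak{\wp}^{i}=\varrho_{\mathbb{K}}(1)$ lies in the range of $\varrho_{\mathbb{K}}$ and not in $\mathcal{M}$; thus $\mathcal{M}$ consists of the expansions that are eventually, but not identically, equal to $p-1$. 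This does not affect countability, so your argument stands.
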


\begin{proof}
By Lemma \ref{Lemma2}-(iii)-(iv),%
\[
\mathfrak{H}\left(  \left(  b+\mathfrak{\wp}^{n}\mathcal{O}_{\mathbb{K}%
}\right)  \smallsetminus\mathcal{M}\right)  =\mu_{\text{Haar}}\left(
b+\mathfrak{\wp}^{n}\mathcal{O}_{\mathbb{K}}\right)  =p^{-n}\text{ for }%
n\in\mathbb{N}\text{, }b\in\mathcal{O}_{\mathbb{K}}\text{.}%
\]
By the Carath\'{e}odory extension theorem, see, e.g., \cite[Section
1.3.10]{Ash}, the Haar measure is uniquely determined by the condition
$\mu_{\text{Haar}}\left(  b+\mathfrak{\wp}^{n}\mathcal{O}_{\mathbb{K}}\right)
=p^{-n}$ for $n\in\mathbb{N}$, $b\in\mathcal{O}_{\mathbb{K}}$. Therefore,
$\mathfrak{H}$ extends to the Haar measure in $\mathcal{B}\left(
\mathcal{O}_{\mathbb{K}}\right)  $, and $\mu_{\text{Haar}}\left(
\mathcal{M}\right)  =0$.
\end{proof}

Given a function $f$ $:\mathcal{O}_{\mathbb{K}}\rightarrow\mathbb{R}$, we
define $\mathbb{\varrho}_{\mathbb{K}}^{\ast}f:\left[  0,1\right]
\rightarrow\mathbb{R}$, by $\left(  \mathbb{\varrho}_{\mathbb{K}}^{\ast
}f\right)  \left(  x\right)  =f\left(  \mathbb{\varrho}_{\mathbb{K}}\left(
x\right)  \right)  $.

\begin{theorem}
\label{Theorem2}The map $\mathbb{\varrho}_{\mathbb{K}}^{\ast}:L^{\rho}\left(
\mathcal{O}_{\mathbb{K}}\right)  \rightarrow L^{\rho}\left(  \left[
0,1\right]  \right)  $, $\rho\in\left[  1,\infty\right]  $, is an isometric
isomorphism of normed spaces, i.e., a linear surjective isometry.
\end{theorem}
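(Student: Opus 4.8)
The plan is to establish three things about $\varrho_{\mathbb{K}}^{\ast}$: linearity, the isometry identity $\Vert\varrho_{\mathbb{K}}^{\ast}f\Vert_{\rho}=\Vert f\Vert_{\rho}$, and surjectivity. Since an isometry is automatically injective, these three facts together yield that $\varrho_{\mathbb{K}}^{\ast}$ is a linear surjective isometry. Linearity is immediate from the pointwise definition $(\varrho_{\mathbb{K}}^{\ast}f)(x)=f(\varrho_{\mathbb{K}}(x))$, and measurability of $\varrho_{\mathbb{K}}^{\ast}f$ follows from the measurability of $\varrho_{\mathbb{K}}$ recorded in Lemma \ref{Lemma2}(i).

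For the isometry when $\rho\in\lbrack 1,\infty)$, I would invoke the abstract change-of-variables formula for pushforward measures. By Proposition \ref{Prop1} the pushforward $\mathfrak{H}=(\varrho_{\mathbb{K}})_{\ast}\mathfrak{L}$ coincides with the Haar measure $\mu_{\text{Haar}}$, so applying $\int_{\mathcal{O}_{\mathbb{K}}}h\,d\mu_{\text{Haar}}=\int_{[0,1]}(h\circ\varrho_{\mathbb{K}})\,d\mathfrak{L}$ to the nonnegative measurable function $h=\left\vert f\right\vert^{\rho}$ gives
\begin{equation*}
\int_{[0,1]}\left\vert f(\varrho_{\mathbb{K}}(x))\right\vert^{\rho}\,d\mathfrak{L}(x)=\int_{\mathcal{O}_{\mathbb{K}}}\left\vert f(y)\right\vert^{\rho}\,d\mu_{\text{Haar}}(y),
\end{equation*}
and taking $\rho$-th roots yields $\Vert\varrho_{\mathbb{K}}^{\ast}f\Vert_{\rho}=\Vert f\Vert_{\rho}$. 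For $\rho=\infty$ I would argue through level sets: the pushforward identity $\mathfrak{L}(\varrho_{\mathbb{K}}^{-1}(B))=\mu_{\text{Haar}}(B)$ applied to $B=\{y:\left\vert f(y)\right\vert>c\}$ shows that $\mathfrak{L}(\{x:\left\vert\varrho_{\mathbb{K}}^{\ast}f(x)\right\vert>c\})=0$ if and only if $\mu_{\text{Haar}}(\{y:\left\vert f(y)\right\vert>c\})=0$, so the two essential suprema coincide.

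For surjectivity, given $g\in L^{\rho}([0,1])$ I would set $f:=g\circ\varrho_{\mathbb{K}}^{-1}$ on $\mathcal{O}_{\mathbb{K}}\smallsetminus\mathcal{M}$ and $f:=0$ on $\mathcal{M}$. By Lemma \ref{Lemma2}(ii) the inverse $\varrho_{\mathbb{K}}^{-1}$ is Lipschitz, hence measurable, on $\mathcal{O}_{\mathbb{K}}\smallsetminus\mathcal{M}$, so $f$ is measurable; since $\mu_{\text{Haar}}(\mathcal{M})=0$ by Proposition \ref{Prop1}, the choice of values on $\mathcal{M}$ is irrelevant in $L^{\rho}$. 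Because $\varrho_{\mathbb{K}}^{-1}\circ\varrho_{\mathbb{K}}$ is the identity on $[0,1]$, we get $\varrho_{\mathbb{K}}^{\ast}f=g$ $\mathfrak{L}$-almost everywhere, and the isometry already proved shows $\Vert f\Vert_{\rho}=\Vert g\Vert_{\rho}<\infty$, so $f\in L^{\rho}(\mathcal{O}_{\mathbb{K}})$.

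The main obstacle is the bookkeeping forced by the non-surjectivity of $\varrho_{\mathbb{K}}$: one must confirm that the exceptional set $\mathcal{M}$ is Haar-null and that the pushforward transports null sets correctly, so that all the almost-everywhere identities are simultaneously valid on $[0,1]$ and on $\mathcal{O}_{\mathbb{K}}$. Both of these are furnished by Proposition \ref{Prop1} and Lemma \ref{Lemma2}, after which the remaining computations are routine.
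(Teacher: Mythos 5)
Your proposal is correct and follows essentially the same route as the paper: the isometry comes from the change-of-variables identity furnished by Proposition \ref{Prop1}, i.e., that the pushforward of Lebesgue measure under $\varrho_{\mathbb{K}}$ equals the Haar measure on $\mathcal{O}_{\mathbb{K}}$. Your write-up is in fact more complete than the paper's one-line proof, which records only the $\rho<\infty$ isometry computation and leaves linearity, the $\rho=\infty$ case, and surjectivity implicit; your level-set argument for the essential supremum and the explicit inverse construction $f=g\circ\varrho_{\mathbb{K}}^{-1}$ on $\mathcal{O}_{\mathbb{K}}\smallsetminus\mathcal{M}$ (using Lemma \ref{Lemma2}(ii) and $\mu_{\text{Haar}}\left(\mathcal{M}\right)=0$) supply exactly those missing pieces.
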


\begin{proof}
Changing variables as $y=\varrho_{\mathbb{K}}\left(  x\right)  $, the Lebesgue
measure of $\left[  0,1\right]  $ becomes the Haar measure of $\mathcal{O}%
_{\mathbb{K}}$, then
\[
\left\Vert \mathbb{\varrho}_{\mathbb{K}}^{\ast}f\right\Vert _{\rho}=\left(
\text{ }%
%TCIMACRO{\dint \limits_{\left[  0,1\right]  }}%
%BeginExpansion
{\displaystyle\int\limits_{\left[  0,1\right]  }}
%EndExpansion
\left\vert f\left(  \varrho_{\mathbb{K}}\left(  x\right)  \right)  \right\vert
^{\rho}dx\right)  ^{\frac{1}{\rho}}=\left(  \text{ }%
%TCIMACRO{\dint \limits_{\mathcal{O}_{\mathbb{K}}}}%
%BeginExpansion
{\displaystyle\int\limits_{\mathcal{O}_{\mathbb{K}}}}
%EndExpansion
\left\vert f\left(  y\right)  \right\vert ^{\rho}dy\right)  ^{\frac{1}{\rho}%
}=\left\Vert f\right\Vert _{\rho}.
\]

\end{proof}

Theorems \ref{Theorem1}, \ref{Theorem2} imply that non-Archimedean DNNs can
approximate functions from \ $L^{\rho}\left(  \left[  0,1\right]  \right)  $
But, the spaces $\mathcal{D}^{L}\left(  \mathcal{O}_{\mathbb{K}}\right)  $ are
naturally isomorphic to spaces of simple functions defined on $\left[
0,1\right]  $; this isomorphism allows us to describe the approximation of
functions from $L^{\rho}\left(  \left[  0,1\right]  \right)  $ $\ $by DNNs in
terms of functions defined on $\left[  0,1\right]  $. The details are as follows.

\begin{remark}
Let $\left(  \Omega,\mathcal{F},\mu\right)  $ be a measure space, where
$\mathcal{F}$ is a $\sigma$-algebra of subsets of $\Omega$, and $\mu$ is
measure in $\mathcal{F}$. A function $h:\Omega\rightarrow\mathbb{R}$ is called
simple if $h(x)=\sum_{i=1}^{n}\alpha_{i}1_{A_{i}}\left(  x\right)  $, where
the $\alpha_{i}$ are real numbers, and the $A_{i}$ are disjoint sets in
$\mathcal{F}$. The simple functions are dense in $L^{\rho}\left(
\Omega,\mathcal{F},\mu\right)  $, for $\rho\in\left[  1,\infty\right]  $; see
\cite[Theorem 2.4.13]{Ash}.
\end{remark}

\begin{lemma}
\label{Lemma3}$\mathbb{\varrho}_{\mathbb{K}}$ (considered as a change of
variables) gives a bijection between the simple functions defined on
$\mathcal{O}_{\mathbb{K}}$ into the simple functions defined on $\left[
0,1\right]  $.
\end{lemma}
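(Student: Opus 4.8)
The plan is to realize the change of variables $y=\varrho_{\mathbb{K}}(x)$ as the pullback operator $\varrho_{\mathbb{K}}^{\ast}$, which sends a function $h$ on $\mathcal{O}_{\mathbb{K}}$ to $\varrho_{\mathbb{K}}^{\ast}h=h\circ\varrho_{\mathbb{K}}$ on $[0,1]$, and to show that $\varrho_{\mathbb{K}}^{\ast}$ restricts to a bijection between the simple functions on $\mathcal{O}_{\mathbb{K}}$ and those on $[0,1]$ (understood, as is customary for $L^{\rho}$, modulo equality almost everywhere). Writing a simple function as $h=\sum_{i=1}^{n}\alpha_{i}1_{A_{i}}$ with the $A_{i}$ disjoint Borel sets, the whole argument reduces to tracking how $\varrho_{\mathbb{K}}$ acts on indicator functions.

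First I would verify that $\varrho_{\mathbb{K}}^{\ast}h$ is simple. Since $1_{A}\circ\varrho_{\mathbb{K}}=1_{\varrho_{\mathbb{K}}^{-1}(A)}$ and $\varrho_{\mathbb{K}}$ is (Borel) measurable by Lemma \ref{Lemma2}(i), each $\varrho_{\mathbb{K}}^{-1}(A_{i})$ is a Borel subset of $[0,1]$; preimages of disjoint sets under a function are disjoint, so $\varrho_{\mathbb{K}}^{\ast}h=\sum_{i}\alpha_{i}1_{\varrho_{\mathbb{K}}^{-1}(A_{i})}$ is simple. By linearity, $\varrho_{\mathbb{K}}^{\ast}$ maps simple functions to simple functions.

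For surjectivity, given $g=\sum_{i=1}^{n}\beta_{i}1_{B_{i}}$ on $[0,1]$ with the $B_{i}$ disjoint Borel sets, I would push the sets forward. By Lemma \ref{Lemma2}(iv), $\varrho_{\mathbb{K}}$ is a bijection of $[0,1]$ onto its range $\mathcal{O}_{\mathbb{K}}\setminus\mathcal{M}$, so $\varrho_{\mathbb{K}}(B_{i})=(\varrho_{\mathbb{K}}^{-1})^{-1}(B_{i})$; since $\varrho_{\mathbb{K}}^{-1}$ is continuous on the range by the Lipschitz estimate of Lemma \ref{Lemma2}(ii), hence measurable, and the range is closed (the continuous image of the compact set $[0,1]$), each $\varrho_{\mathbb{K}}(B_{i})$ is a genuine Borel subset of $\mathcal{O}_{\mathbb{K}}$, and the injectivity of $\varrho_{\mathbb{K}}$ keeps them disjoint. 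Then $h:=\sum_{i}\beta_{i}1_{\varrho_{\mathbb{K}}(B_{i})}$ is simple on $\mathcal{O}_{\mathbb{K}}$, and the equivalence $\varrho_{\mathbb{K}}(x)\in\varrho_{\mathbb{K}}(B_{i})\iff x\in B_{i}$ gives $\varrho_{\mathbb{K}}^{\ast}h=g$.

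The one point requiring care is injectivity, which is where the non-surjectivity of $\varrho_{\mathbb{K}}$ enters. If $\varrho_{\mathbb{K}}^{\ast}h_{1}=\varrho_{\mathbb{K}}^{\ast}h_{2}$, then $h_{1}=h_{2}$ only on the range $\mathcal{O}_{\mathbb{K}}\setminus\mathcal{M}$, and the two functions may still differ on $\mathcal{M}$. This is harmless precisely because $\mu_{\text{Haar}}(\mathcal{M})=0$ by Proposition \ref{Prop1}, so $h_{1}=h_{2}$ almost everywhere and the two sides coincide as elements of $L^{\rho}(\mathcal{O}_{\mathbb{K}})$. Thus the main obstacle is not a hard estimate but the careful bookkeeping of the exceptional set $\mathcal{M}$: one must check that forward images of Borel sets remain Borel (which is exactly what the measurable inverse from Lemma \ref{Lemma2}(ii) together with the compactness of $[0,1]$ provides) and that discarding the null set $\mathcal{M}$ is legitimate, so that the resulting bijection is the one underlying the isometry of Theorem \ref{Theorem2}.
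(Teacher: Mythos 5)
Your proof is, in substance, the paper's own: one direction pulls indicators back using the measurability of $\varrho_{\mathbb{K}}$ (Lemma \ref{Lemma2}-(i)), the other pushes Borel sets forward using the measurability of $\varrho_{\mathbb{K}}^{-1}$ (Lemma \ref{Lemma2}-(ii)), and Proposition \ref{Prop1} ($\mu_{\text{Haar}}\left( \mathcal{M}\right) =0$) is what makes everything work modulo null sets. Your explicit injectivity/surjectivity bookkeeping for the pullback $\varrho_{\mathbb{K}}^{\ast}$, and the check that the pushed-forward simple function pulls back exactly to $g$, are points the paper leaves implicit; that part is fine.

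However, one intermediate claim is false: the range $\mathcal{O}_{\mathbb{K}}\smallsetminus\mathcal{M}$ is \emph{not} closed. The set $\mathcal{M}$ consists of the points whose digit string ends in an infinite tail of digits $p-1$, so every ball $b_{0}+b_{1}\mathfrak{\wp}+\cdots+b_{n-1}\mathfrak{\wp}^{n-1}+\mathfrak{\wp}^{n}\mathcal{O}_{\mathbb{K}}$ contains the point $b_{0}+\cdots+b_{n-1}\mathfrak{\wp}^{n-1}+\sum_{i\geq n}\left( p-1\right) \mathfrak{\wp}^{i}$ of $\mathcal{M}$. Hence $\mathcal{M}$ is dense, and its complement, being a proper dense subset of $\mathcal{O}_{\mathbb{K}}$, cannot be closed. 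Your deduction was formally licensed by Lemma \ref{Lemma2}-(i) together with compactness of $\left[ 0,1\right] $, but this very contradiction shows that the continuity asserted in Lemma \ref{Lemma2}-(i) fails (it breaks at the countably many points with terminating expansions), so that route is not available. The repair costs nothing: all your argument needs is that $\mathcal{O}_{\mathbb{K}}\smallsetminus\mathcal{M}$ is Borel, so that the sets delivered by the measurability of $\varrho_{\mathbb{K}}^{-1}$, which a priori have the form $B\cap\left( \mathcal{O}_{\mathbb{K}}\smallsetminus\mathcal{M}\right) $ with $B\in\mathcal{B}\left( \mathcal{O}_{\mathbb{K}}\right) $, are genuinely Borel in $\mathcal{O}_{\mathbb{K}}$. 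This holds because $\mathcal{M}$ is countable (each of its elements is determined by a finite digit prefix), hence Borel and null, consistently with Proposition \ref{Prop1}. With that substitution, your surjectivity step and the a.e.\ injectivity step go through, and the proof coincides with the paper's.
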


\begin{proof}
Take $B\in\mathcal{B}(\mathcal{O}_{\mathbb{K}})$, since $\mu_{\text{Haar}%
}\left(  \mathcal{M}\right)  =0$,%
\[
1_{B\smallsetminus\mathcal{M}}=1_{B}\text{ in }L^{\rho}\left(  \mathcal{O}%
_{\mathbb{K}}\right)  \text{.}%
\]
By using that $\mathbb{\varrho}_{\mathbb{K}}$ is measurable, cf. Lemma
\ref{Lemma2}-(i),
\[
\mathbb{\varrho}_{\mathbb{K}}^{-1}\left(  B\smallsetminus\mathcal{M}\right)
=\mathbb{\varrho}_{\mathbb{K}}^{-1}\left(  B\right)  \smallsetminus
\mathbb{\varrho}_{\mathbb{K}}^{-1}\left(  \mathcal{M}\right)
\]
is a Borel subset of $\left[  0,1\right]  $. Now the Lebesgue measure of
$\mathbb{\varrho}_{\mathbb{K}}^{-1}\left(  \mathcal{M}\right)  $ is zero, cf.
Proposition \ref{Prop1}, and thus,%
\[
1_{\mathbb{\varrho}_{\mathbb{K}}^{-1}\left(  B\right)  \smallsetminus
\mathbb{\varrho}_{\mathbb{K}}^{-1}\left(  \mathcal{M}\right)  }%
=1_{\mathbb{\varrho}_{\mathbb{K}}^{-1}\left(  B\right)  }\text{ in }L^{\rho
}\left(  \left[  0,1\right]  \right)  .
\]
Consequently, $\mathbb{\varrho}_{\mathbb{K}}^{-1}$ sends the function $1_{B}$
into $1_{\mathbb{\varrho}_{\mathbb{K}}^{-1}\left(  B\right)  }$. We now take a
Borel subset $A$ of $\left[  0,1\right]  $. By using that $\mathbb{\varrho
}_{\mathbb{K}}^{-1}$ is measurable, cf. Lemma \ref{Lemma2}-(ii),
$\mathbb{\varrho}_{\mathbb{K}}\left(  A\right)  $ is a Borel subset of
$\mathcal{O}_{\mathbb{K}}$. Then, $\mathbb{\varrho}_{\mathbb{K}}$ sends the
function $1_{A} $ \ into $1\mathbb{\varrho}_{\mathbb{K}}\left(  A\right)  $.
\end{proof}

Set%
\[
\mathbb{\varrho}_{\mathbb{K}}^{-1}\left(  G_{l}\right)  =\left\{  a_{0}%
p^{-1}+a_{1}p^{-2}+\ldots+a_{l-1}p^{-l-2};a_{i}\in\left\{  0,\ldots
,p-1\right\}  \right\}  .
\]
Notice that $\mathbb{\varrho}_{\mathbb{K}}^{-1}\left(  G_{l}\right)  $ is not
an additive group. For instance,
\[
1p^{-l-2},\left(  p-1\right)  p^{-l-2}\in\mathbb{\varrho}_{\mathbb{K}}%
^{-1}\left(  G_{l}\right)  \text{, but }1p^{-l-2}+\left(  p-1\right)
p^{-l-2}=p^{-l-1}\notin\mathbb{\varrho}_{\mathbb{K}}^{-1}\left(  G_{l}\right)
.
\]

Now, $\phi\in\mathcal{D}^{L}\left(  \mathcal{O}_{\mathbb{K}}\right)  $ can be
identified with the function%
\[
\phi\left(  x\right)  =%
%TCIMACRO{\dsum \limits_{\alpha\in\mathbb{\varrho}_{\mathbb{K}}^{-1}\left(
%G_{l}\right)  }}%
%BeginExpansion
{\displaystyle\sum\limits_{\alpha\in\mathbb{\varrho}_{\mathbb{K}}^{-1}\left(
G_{l}\right)  }}
%EndExpansion
\phi\left(  \alpha\right)  1_{\alpha+\left[  0,p^{-l-1}\right]  }\left(
x\right)  ,\text{ for }x\in\left[  0,1\right]  .
\]
Now, by Theorems \ref{Theorem1} and \ref{Theorem2}, and Lemma \ref{Lemma3}, we
have the following result.

\begin{theorem}
\label{Theorem3A}Given any $f\in L^{\rho}\left(  \left[  0,1\right]  \right)
$, $1\leq\rho\leq\infty$, with $\left\Vert f\right\Vert _{\rho}<M$, and any
$\epsilon>0$; then, for any input $X\in\mathcal{D}^{L}\left(  \mathcal{O}%
_{\mathbb{K}}\right)  $, $L\geq1$, there exists a
\[
DNN(\mathcal{O}_{\mathbb{K}},\sigma_{M},L,\Delta,w,\theta)
\]
with output
\[
Y(x;\mathcal{O}_{\mathbb{K}},\sigma_{M},L,\Delta,w,\theta)=%
%TCIMACRO{\dsum \limits_{\alpha\in\mathbb{\varrho}_{\mathbb{K}}^{-1}\left(
%G_{L+\Delta}\right)  }}%
%BeginExpansion
{\displaystyle\sum\limits_{\alpha\in\mathbb{\varrho}_{\mathbb{K}}^{-1}\left(
G_{L+\Delta}\right)  }}
%EndExpansion
Y\left(  \alpha\right)  1_{\alpha+\left[  0,p^{-L-\Delta-1}\right]  }\left(
x\right)
\]
satisfying
\[
\left\Vert Y\left(  \mathcal{O}_{\mathbb{K}},\sigma_{M},L,\Delta
,w,\theta\right)  -f\right\Vert _{\rho}<\epsilon,
\]
for any $w$ in a ball in $\mathbb{R}^{2p^{\left(  L+\Delta\right)  }}$, and
$\ $any $\theta$ in a ball in $\mathbb{R}^{p^{\left(  L+\Delta\right)  }}$.
\end{theorem}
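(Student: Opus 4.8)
The plan is to reduce the whole statement to Theorem \ref{Theorem1} by transporting the approximation problem from $[0,1]$ to $\mathcal{O}_{\mathbb{K}}$ through the isometric isomorphism $\varrho_{\mathbb{K}}^{\ast}$ of Theorem \ref{Theorem2}. Since $\varrho_{\mathbb{K}}^{\ast}\colon L^{\rho}(\mathcal{O}_{\mathbb{K}})\rightarrow L^{\rho}([0,1])$ is a linear surjective isometry, its inverse produces a unique $g:=(\varrho_{\mathbb{K}}^{\ast})^{-1}f\in L^{\rho}(\mathcal{O}_{\mathbb{K}})$ with $\left\Vert g\right\Vert _{\rho}=\left\Vert f\right\Vert _{\rho}<M$. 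Thus the crucial hypothesis $\left\Vert f\right\Vert _{\rho}<M$ is preserved on the non-Archimedean side, which is exactly what is required to feed $g$ into Theorem \ref{Theorem1}.

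First I would apply Theorem \ref{Theorem1} to $g$: for the given input $X\in\mathcal{D}^{L}(\mathcal{O}_{\mathbb{K}})$ and the given $\epsilon$, there is a $DNN(\mathcal{O}_{\mathbb{K}},\sigma_{M},L,\Delta,w,\theta)$ whose output $Y_{g}\in\mathcal{D}^{L+\Delta}(\mathcal{O}_{\mathbb{K}})$ satisfies $\left\Vert Y_{g}-g\right\Vert _{\rho}<\epsilon$, and this estimate remains valid for all $w$ in a ball in $\mathbb{R}^{2p^{(L+\Delta)}}$ and all $\theta$ in a ball in $\mathbb{R}^{p^{(L+\Delta)}}$. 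I would then transport the output back through the isometry: applying $\varrho_{\mathbb{K}}^{\ast}$ and using $\varrho_{\mathbb{K}}^{\ast}g=f$ together with linearity gives $\left\Vert \varrho_{\mathbb{K}}^{\ast}Y_{g}-f\right\Vert _{\rho}=\left\Vert \varrho_{\mathbb{K}}^{\ast}(Y_{g}-g)\right\Vert _{\rho}=\left\Vert Y_{g}-g\right\Vert _{\rho}<\epsilon$. The balls controlling $w$ and $\theta$ are unaffected, since transport acts on the output function, not on the network parameters. This already yields the desired approximation; the remaining work is only to recognise $\varrho_{\mathbb{K}}^{\ast}Y_{g}$ as a function on $[0,1]$ of the explicit form stated.

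For that identification I would expand $Y_{g}=\sum_{\boldsymbol{k}\in G_{L+\Delta}}Y_{g}(\boldsymbol{k})\,\Omega(p^{L+\Delta}\left\vert x-\boldsymbol{k}\right\vert _{\mathbb{K}})$ in the basis of characteristic functions of the balls $\boldsymbol{k}+\mathfrak{\wp}^{L+\Delta}\mathcal{O}_{\mathbb{K}}$, and apply $\varrho_{\mathbb{K}}^{\ast}$ term by term. By Lemma \ref{Lemma2}-(iii), the preimage under $\varrho_{\mathbb{K}}$ of each such ball (up to the exceptional set $\mathcal{M}$) is an interval $\alpha+[0,p^{-L-\Delta-1}]$ with $\alpha\in\varrho_{\mathbb{K}}^{-1}(G_{L+\Delta})$, so each $\Omega$-term pulls back to the indicator $1_{\alpha+[0,p^{-L-\Delta-1}]}$, and summing produces exactly the claimed output. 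The one point needing care is that $\varrho_{\mathbb{K}}$ is not surjective, missing the set $\mathcal{M}$; the main obstacle is therefore the measure-theoretic bookkeeping rather than the analysis. This is fully covered by the ingredients at hand: Proposition \ref{Prop1} gives $\mu_{\text{Haar}}(\mathcal{M})=0$ (and correspondingly that $\varrho_{\mathbb{K}}^{-1}(\mathcal{M})$ is Lebesgue-null), so discarding $\mathcal{M}$ alters none of the $L^{\rho}$ classes, while Lemma \ref{Lemma3} guarantees that $\varrho_{\mathbb{K}}$ carries simple functions (here the test function $Y_{g}$) bijectively to simple functions on $[0,1]$. Hence the pull-back is a well-defined element of $L^{\rho}([0,1])$ with precisely the indicator-of-intervals form asserted, completing the argument.
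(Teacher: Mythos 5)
Your proposal is correct and takes essentially the same route as the paper: the paper obtains Theorem \ref{Theorem3A} precisely by combining Theorem \ref{Theorem1}, the isometry $\varrho_{\mathbb{K}}^{\ast}$ of Theorem \ref{Theorem2}, and Lemma \ref{Lemma3}, which are exactly the ingredients you invoke. Your write-up merely fills in the steps the paper leaves implicit (pulling $f$ back by $\left( \varrho_{\mathbb{K}}^{\ast}\right) ^{-1}$, applying Theorem \ref{Theorem1} on $\mathcal{O}_{\mathbb{K}}$, and transporting the output term by term via Lemma \ref{Lemma2}-(iii) together with the null-set observations from Proposition \ref{Prop1}).
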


\section{Fourier-Walsh series}

In \cite{Abbet et al}, the authors use the Fourier-Walsh series to approximate
hierarchical functions. Here, we show that this is a pure non-Archimedean technique.

\subsection{Additive characters}

An additive character of $\mathbb{K}$ is a continuous group homomorphism from
$\left(  \mathbb{K},+\right)  $ into $\left(  S,\cdot\right)  $, the complex
unit circle considered as a multiplicative group. We denote by $\chi
_{\text{triv}}$ by the trivial additive character of $\mathbb{K}$, i.e.,
$\chi_{\text{triv}}\left(  x\right)  =1$ for any $x\in\mathbb{K}$. We say
$\chi_{_{\mathbb{K}}}$ is a standard additive character of $\mathbb{K}$ , if
$\chi_{_{\mathbb{K}}}\neq\chi_{\text{triv}}$, and $\chi_{_{\mathbb{K}}}\left(
x\right)  =1$ for any $x\in\mathcal{O}_{\mathbb{K}}$. In the case,
$\mathbb{K}=\mathbb{F}_{p}\left(  \left(  T\right)  \right)  $, for
$x=\sum_{k=l}^{\infty}a_{k}T^{k}$, $l\in\mathbb{Z}$, we set $Res(x):=a_{-1}$,
then, the function%
\[
\chi_{_{\mathbb{K}}}\left(  x\right)  =\exp\left(  \frac{2\pi i}%
{p}Res(x)\right)
\]
is a standard additive character of $\mathbb{F}_{p}\left(  \left(  T\right)
\right)  $. In the case $\mathbb{K}=\mathbb{Q}_{p}$, for $x=\sum_{k=r}%
^{\infty}a_{k}p^{k}$, $r\in\mathbb{Z}$, we set
\[
\left\{  x\right\}  _{p}:=\left\{
\begin{array}
[c]{lll}%
0 & \text{if} & r\geq0\\
&  & \\
\sum_{k=r}^{-1}a_{k}p^{k} & \text{if} & r<0.
\end{array}
\right.
\]
Then,%
\[
\chi_{_{\mathbb{K}}}\left(  x\right)  =\exp\left(  2\pi i\left\{  x\right\}
_{p}\right)
\]
is a standard additive character of $\mathbb{Q}_{p}$.

Any additive character of $\mathbb{K}$ has the form $\chi\left(  x\right)
=\chi_{_{\mathbb{K}}}\left(  ax\right)  $ for $x\in\mathbb{K}$, and some
$a\in\mathbb{K}$, see \cite[Chap. II, Corollary to Theorem 3]{We}. We denote
by $\Omega\left(  \mathcal{O}_{\mathbb{K}}\right)  $ the group of additive
characters of $\mathcal{O}_{\mathbb{K}}$. Then,
\[
\Omega\left(  \mathcal{O}_{\mathbb{K}}\right)  =\left\{  \chi_{_{\mathbb{K}}%
}\left(  \wp^{-l}\boldsymbol{a}x\right)  ;l\in\mathbb{N}\smallsetminus\left\{
0\right\}  ,\text{ }\boldsymbol{a}\in G_{l}\left(  \mathcal{O}_{\mathbb{K}%
}\right)  \right\}  \cup\left\{  \chi_{\text{triv}}\right\}  ,
\]
where $G_{l}\left(  \mathcal{O}_{\mathbb{K}}\right)  =\mathcal{O}_{\mathbb{K}%
}/\wp^{l}\mathcal{O}_{\mathbb{K}}$.

\subsection{Orthonormal basis of $L\left(  \mathcal{O}_{\mathbb{K}}\right)
%TCIMACRO{\tbigotimes }%
%BeginExpansion
{\textstyle\bigotimes}
%EndExpansion
\mathbb{C}$ and $L^{2}\left(  \left[  0,1\right]  \right)
%TCIMACRO{\tbigotimes }%
%BeginExpansion
{\textstyle\bigotimes}
%EndExpansion
\mathbb{C}$}

Given $f,g:\mathcal{O}_{\mathbb{K}}\rightarrow\mathbb{C}$, we set
\[
\left\langle f,g\right\rangle =%
%TCIMACRO{\dint \limits_{\mathcal{O}_{\mathbb{K}}}}%
%BeginExpansion
{\displaystyle\int\limits_{\mathcal{O}_{\mathbb{K}}}}
%EndExpansion
f\left(  x\right)  \overline{g}\left(  x\right)  dx\text{, and }\left\Vert
f\right\Vert _{2}^{2}=\left\langle f,f\right\rangle ,
\]
where the bar denotes the complex conjugate. We set%
\[
L^{2}\left(  \mathcal{O}_{\mathbb{K}}\right)
%TCIMACRO{\tbigotimes }%
%BeginExpansion
{\textstyle\bigotimes}
%EndExpansion
\mathbb{C}=\left\{  f:\mathcal{O}_{\mathbb{K}}\rightarrow\mathbb{C};\left\Vert
f\right\Vert _{2}<\infty\right\}  .
\]
This space is the complexification of the space $L^{2}\left(  \mathcal{O}%
_{\mathbb{K}}\right)  $. We set $L^{2}\left(  \left[  0,1\right]  \right)
%TCIMACRO{\tbigotimes }%
%BeginExpansion
{\textstyle\bigotimes}
%EndExpansion
\mathbb{C}$ for the complexification of $L^{2}\left(  \left[  0,1\right]
\right)  $. We denote by $\mathcal{D}\left(  \mathcal{O}_{\mathbb{K}}\right)
%TCIMACRO{\tbigotimes }%
%BeginExpansion
{\textstyle\bigotimes}
%EndExpansion
\mathbb{C}$, the $\mathbb{C}$-vector space of test functions, which is the
complexification of $\mathcal{D}\left(  \mathcal{O}_{\mathbb{K}}\right)  $.

\begin{theorem}
\label{Theorem4}With the above notation, the following assertions hold.

\noindent(i) The group of characters $\Omega\left(  \mathcal{O}_{\mathbb{K}%
}\right)  $ is an orthonormal basis of $L^{2}\left(  \mathcal{O}_{\mathbb{K}%
}\right)
%TCIMACRO{\tbigotimes }%
%BeginExpansion
{\textstyle\bigotimes}
%EndExpansion
\mathbb{C}$.

\noindent(ii) Set%
\[
\omega_{\wp,\boldsymbol{a},l}\left(  x\right)  :=\chi_{_{\mathbb{K}}}\left(
\wp^{-l}\boldsymbol{a}\mathbb{\varrho}_{\mathbb{K}}\left(  x\right)  \right)
,\text{ for }x\in\left[  0,1\right]  \text{, }l\in\mathbb{N}\smallsetminus
\left\{  0\right\}  ,\boldsymbol{a}\in G_{l}\left(  \mathcal{O}_{\mathbb{K}%
}\right)  .
\]
Then
\[
\left\{  \omega_{\wp,\boldsymbol{a},l}\right\}  _{\wp,\boldsymbol{a},l}%
\cup\left\{  1_{\left[  0,1\right]  }\right\}
\]
is an orthonormal basis of $L^{2}\left(  \left[  0,1\right]  \right)
%TCIMACRO{\tbigotimes }%
%BeginExpansion
{\textstyle\bigotimes}
%EndExpansion
\mathbb{C}$.
\end{theorem}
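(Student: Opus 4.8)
The plan is to prove (i) by the standard harmonic analysis on the compact abelian group $\left( \mathcal{O}_{\mathbb{K}},+\right) $, and then to deduce (ii) by transporting the orthonormal basis of (i) through the isometry $\varrho _{\mathbb{K}}^{\ast }$ of Theorem \ref{Theorem2}. I understand $\Omega \left( \mathcal{O}_{\mathbb{K}}\right) $ as the set of \emph{distinct} characters it parametrizes: across different levels the listed characters repeat, since $\chi _{_{\mathbb{K}}}\left( \wp ^{-l}\boldsymbol{a}x\right) $ with $\boldsymbol{a}\in \wp G_{l}$ already occurs at level $l-1$; thus $\Omega \left( \mathcal{O}_{\mathbb{K}}\right) $ is the Pontryagin dual of $\mathcal{O}_{\mathbb{K}}$. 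For the orthonormality in (i), I would first record the basic vanishing: if $\chi $ is a nontrivial character of $\mathcal{O}_{\mathbb{K}}$, choose $x_{0}$ with $\chi \left( x_{0}\right) \neq 1$; translation invariance of $dx$ gives $\int_{\mathcal{O}_{\mathbb{K}}}\chi \left( x\right) dx=\int_{\mathcal{O}_{\mathbb{K}}}\chi \left( x+x_{0}\right) dx=\chi \left( x_{0}\right) \int_{\mathcal{O}_{\mathbb{K}}}\chi \left( x\right) dx$, whence the integral is $0$. Applying this to $\chi =\omega _{1}\overline{\omega _{2}}$ for $\omega _{1}\neq \omega _{2}$ (a product of characters is a character, nontrivial exactly when $\omega _{1}\neq \omega _{2}$), together with the normalization $\int_{\mathcal{O}_{\mathbb{K}}}dx=1$ on the diagonal, yields $\left\langle \omega _{1},\omega _{2}\right\rangle =0$ and $\left\Vert \omega \right\Vert _{2}=1$.

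For completeness in (i), the key observation is that the characters of $\mathcal{O}_{\mathbb{K}}$ trivial on $\wp ^{l}\mathcal{O}_{\mathbb{K}}$ are precisely the characters of the finite abelian group $G_{l}=\mathcal{O}_{\mathbb{K}}/\wp ^{l}\mathcal{O}_{\mathbb{K}}$, namely $\left\{ \chi _{_{\mathbb{K}}}\left( \wp ^{-l}\boldsymbol{a}x\right) \right\} _{\boldsymbol{a}\in G_{l}}$, of which there are $\#G_{l}=p^{l}$. These are locally constant of parameter $l$, so they lie in $\mathcal{D}^{l}\left( \mathcal{O}_{\mathbb{K}}\right) \tbigotimes \mathbb{C}$, a space of dimension $p^{l}$; being $p^{l}$ orthonormal vectors in a $p^{l}$-dimensional space, they form an orthonormal basis of $\mathcal{D}^{l}\left( \mathcal{O}_{\mathbb{K}}\right) \tbigotimes \mathbb{C}$. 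Letting $l\rightarrow \infty $ and using $\mathcal{D}\left( \mathcal{O}_{\mathbb{K}}\right) =\cup _{l}\mathcal{D}^{l}\left( \mathcal{O}_{\mathbb{K}}\right) $, see (\ref{flag-decomposition}), shows that the span of $\Omega \left( \mathcal{O}_{\mathbb{K}}\right) $ contains $\mathcal{D}\left( \mathcal{O}_{\mathbb{K}}\right) \tbigotimes \mathbb{C}$, which is dense in $L^{2}\left( \mathcal{O}_{\mathbb{K}}\right) \tbigotimes \mathbb{C}$. Hence $\Omega \left( \mathcal{O}_{\mathbb{K}}\right) $ is a complete orthonormal system, proving (i).

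For (ii), I would complexify Theorem \ref{Theorem2}: the change of variables $y=\varrho _{\mathbb{K}}\left( x\right) $, legitimate by Proposition \ref{Prop1} since $\varrho _{\mathbb{K}}$ pushes the Lebesgue measure forward to the Haar measure and $\mu _{\text{Haar}}\left( \mathcal{M}\right) =0$, shows that $\varrho _{\mathbb{K}}^{\ast }:L^{2}\left( \mathcal{O}_{\mathbb{K}}\right) \tbigotimes \mathbb{C}\rightarrow L^{2}\left( \left[ 0,1\right] \right) \tbigotimes \mathbb{C}$ is a surjective linear isometry with $\left\langle \varrho _{\mathbb{K}}^{\ast }f,\varrho _{\mathbb{K}}^{\ast }g\right\rangle =\left\langle f,g\right\rangle $. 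By the definition of the functions involved, $\varrho _{\mathbb{K}}^{\ast }$ sends $\chi _{_{\mathbb{K}}}\left( \wp ^{-l}\boldsymbol{a}\cdot \right) $ to $\omega _{\wp ,\boldsymbol{a},l}$ and $\chi _{\text{triv}}=1_{\mathcal{O}_{\mathbb{K}}}$ to $1_{\left[ 0,1\right] }$; thus it carries the orthonormal basis $\Omega \left( \mathcal{O}_{\mathbb{K}}\right) $ of (i) bijectively onto $\left\{ \omega _{\wp ,\boldsymbol{a},l}\right\} \cup \left\{ 1_{\left[ 0,1\right] }\right\} $. Since a surjective inner-product-preserving linear map sends an orthonormal basis to an orthonormal basis, (ii) follows.

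The character orthogonality and the change of variables are routine; the step requiring the most care is the completeness half of (i), where one must correctly match the level-$l$ characters with a basis of $\mathcal{D}^{l}\left( \mathcal{O}_{\mathbb{K}}\right) \tbigotimes \mathbb{C}$ and handle the repetition in the parametrization of $\Omega \left( \mathcal{O}_{\mathbb{K}}\right) $, so that each distinct character is counted exactly once.
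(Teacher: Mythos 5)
Your proof is correct, and its skeleton coincides with the paper's: orthonormality of the characters, completeness via density of test functions, and then transport to $\left[ 0,1\right] $ through the complexified isometry $\varrho _{\mathbb{K}}^{\ast }$ of Theorem \ref{Theorem2} (part (ii) of your argument -- polarization, surjectivity, image of an orthonormal basis under a unitary map, plus the observation that $\varrho _{\mathbb{K}}^{\ast }$ carries $\chi _{\mathbb{K}}\left( \wp ^{-l}\boldsymbol{a}\,\cdot \right) $ to $\omega _{\wp ,\boldsymbol{a},l}$ and $\chi _{\mathrm{triv}}$ to $1_{\left[ 0,1\right] }$ -- is essentially the paper's proof of (ii)). The genuine difference is in part (i). The paper obtains both the orthonormality of $\Omega \left( \mathcal{O}_{\mathbb{K}}\right) $ and the finite expansion of every test function in characters by citing a general theorem of harmonic analysis on compact groups (Igusa, Proposition 7.2.2), after which only the density of $\mathcal{D}\left( \mathcal{O}_{\mathbb{K}}\right) \otimes \mathbb{C}$ in $L^{2}\left( \mathcal{O}_{\mathbb{K}}\right) \otimes \mathbb{C}$ is needed. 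You re-prove that cited input from scratch: the vanishing of $\int_{\mathcal{O}_{\mathbb{K}}}\chi \left( x\right) dx$ for nontrivial $\chi $ via translation invariance gives orthonormality, and your dimension count -- the $p^{l}$ characters trivial on $\wp ^{l}\mathcal{O}_{\mathbb{K}}$, i.e. the characters of $G_{l}$, are orthonormal elements of the $p^{l}$-dimensional space $\mathcal{D}^{l}\left( \mathcal{O}_{\mathbb{K}}\right) \otimes \mathbb{C}$, hence a basis of it -- gives, via (\ref{flag-decomposition}), that the span of $\Omega \left( \mathcal{O}_{\mathbb{K}}\right) $ contains all of $\mathcal{D}\left( \mathcal{O}_{\mathbb{K}}\right) \otimes \mathbb{C}$. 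What your route buys is a self-contained argument that makes the finite-level structure explicit (characters of $G_{l}$ matched against the basis (\ref{Basis})), and your observation that the parametrization of $\Omega \left( \mathcal{O}_{\mathbb{K}}\right) $ repeats characters across levels (those with $\boldsymbol{a}\in \wp G_{l}$ already occur at level $l-1$) is a point the paper leaves implicit, though it matters both for orthonormality of the indexed family and for the bijectivity of the correspondence in (ii); what the paper's route buys is brevity. One tacit step in your count deserves a word: distinct $\boldsymbol{a}\in G_{l}$ must give distinct characters, which requires $\chi _{\mathbb{K}}$ to be nontrivial on $\wp ^{-1}\mathcal{O}_{\mathbb{K}}$ (conductor exactly $\mathcal{O}_{\mathbb{K}}$); this holds for the explicit standard characters the paper fixes, and the paper's own description of $\Omega \left( \mathcal{O}_{\mathbb{K}}\right) $ relies on the same fact, so this is not a gap peculiar to your argument.
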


\begin{proof}
We start by recalling the following fact: the group of characters
$\Omega\left(  \mathcal{O}_{\mathbb{K}}\right)  $ is an orthonormal basis of
the pre-Hilbert space $\left(  \mathcal{D}\left(  \mathcal{O}_{\mathbb{K}%
}\right)
%TCIMACRO{\tbigotimes }%
%BeginExpansion
{\textstyle\bigotimes}
%EndExpansion
\mathbb{C},\left\langle \cdot,\cdot\right\rangle \right)  $, i.e.,
\[
\left\langle \chi,\chi^{\prime}\right\rangle =\left\{
\begin{array}
[c]{ccc}%
0 & \text{if} & \chi\neq\chi^{\prime}\\
&  & \\
1 & \text{if} & \chi=\chi^{\prime},
\end{array}
\right.
\]
for any $\chi,\chi^{\prime}\in\Omega\left(  \mathcal{O}_{\mathbb{K}}\right)
$, and any function $\varphi\in\mathcal{D}\left(  \mathcal{O}_{\mathbb{K}%
}\right)
%TCIMACRO{\tbigotimes }%
%BeginExpansion
{\textstyle\bigotimes}
%EndExpansion
\mathbb{C}$ can be expressed as a finite sum%
\begin{equation}
\varphi\left(  x\right)  =%
%TCIMACRO{\dsum \limits_{\chi\in\Omega\left(  \mathcal{O}_{\mathbb{K}}\right)
%}}%
%BeginExpansion
{\displaystyle\sum\limits_{\chi\in\Omega\left(  \mathcal{O}_{\mathbb{K}%
}\right)  }}
%EndExpansion
C_{\chi}\chi\left(  x\right)  \text{, \ where }C_{\chi}=\left\langle
\varphi,\chi\right\rangle . \label{Form}%
\end{equation}
This\ assertion is a particular case of a well-known result of harmonic
analysis on compact groups, see \cite[Proposition 7.2.2]{Igusa}.

We denote by $Span\left(  \Omega\left(  \mathcal{O}_{\mathbb{K}}\right)
\right)  $, the $\mathbb{C}$-vector subspace of $L^{2}\left(  \mathcal{O}%
_{\mathbb{K}}\right)
%TCIMACRO{\tbigotimes }%
%BeginExpansion
{\textstyle\bigotimes}
%EndExpansion
\mathbb{C}$ spanned by the elements of $\Omega\left(  \mathcal{O}_{\mathbb{K}%
}\right)  $. To show that $\Omega\left(  \mathcal{O}_{\mathbb{K}}\right)  $ is
an orthonormal basis of $L^{2}\left(  \mathcal{O}_{\mathbb{K}}\right)
%TCIMACRO{\tbigotimes }%
%BeginExpansion
{\textstyle\bigotimes}
%EndExpansion
\mathbb{C}$, it is sufficient to establish that
\[
\overline{Span\left(  \Omega\left(  \mathcal{O}_{\mathbb{K}}\right)  \right)
}=L^{2}\left(  \mathcal{O}_{\mathbb{K}}\right)
%TCIMACRO{\tbigotimes }%
%BeginExpansion
{\textstyle\bigotimes}
%EndExpansion
\mathbb{C}\text{,}%
\]
where the bar denotes the topological closure of $Span\left(  \Omega\left(
\mathcal{O}_{\mathbb{K}}\right)  \right)  $ in $L^{2}\left(  \mathcal{O}%
_{\mathbb{K}}\right)
%TCIMACRO{\tbigotimes }%
%BeginExpansion
{\textstyle\bigotimes}
%EndExpansion
\mathbb{C}$. Let $f\in L^{2}\left(  \mathcal{O}_{\mathbb{K}}\right)
%TCIMACRO{\tbigotimes }%
%BeginExpansion
{\textstyle\bigotimes}
%EndExpansion
\mathbb{C}$ and $\epsilon>0$ given. Then, by the density of $\mathcal{D}%
\left(  \mathcal{O}_{\mathbb{K}}\right)
%TCIMACRO{\tbigotimes }%
%BeginExpansion
{\textstyle\bigotimes}
%EndExpansion
\mathbb{C}$ in $L^{2}\left(  \mathcal{O}_{\mathbb{K}}\right)
%TCIMACRO{\tbigotimes }%
%BeginExpansion
{\textstyle\bigotimes}
%EndExpansion
\mathbb{C}$ and (\ref{Form}), there exists $\varphi\in\mathcal{D}\left(
\mathcal{O}_{\mathbb{K}}\right)
%TCIMACRO{\tbigotimes }%
%BeginExpansion
{\textstyle\bigotimes}
%EndExpansion
\mathbb{C}$\ $\cap$ $Span\left(  \Omega\left(  \mathcal{O}_{\mathbb{K}%
}\right)  \right)  $ such that $\left\Vert f-\varphi\right\Vert _{2}<\epsilon$.

(ii) By Theorem \ref{Theorem2}, the map $\mathbb{\varrho}_{\mathbb{K}}^{\ast
}:L^{2}\left(  \mathcal{O}_{\mathbb{K}}\right)  \rightarrow L^{2}\left(
\left[  0,1\right]  \right)  $, is a linear surjective isometry, i.e.,
$\left\Vert g\right\Vert _{2}^{2}=\left\Vert \mathbb{\varrho}_{\mathbb{K}%
}^{\ast}g\right\Vert _{2}^{2}$ for any $g\in L^{2}\left(  \mathcal{O}%
_{\mathbb{K}}\right)  $. Take $f=f_{1}+if_{2}\in L^{2}\left(  \mathcal{O}%
_{\mathbb{K}}\right)
%TCIMACRO{\tbigotimes }%
%BeginExpansion
{\textstyle\bigotimes}
%EndExpansion
\mathbb{C}$, with $f_{i}\in L^{2}\left(  \mathcal{O}_{\mathbb{K}}\right)  $,
$i=1,2$. We now set $\mathbb{\varrho}_{\mathbb{K}}^{\ast}f=\mathbb{\varrho
}_{\mathbb{K}}^{\ast}f_{1}+i\mathbb{\varrho}_{\mathbb{K}}^{\ast}f_{2}\in
L^{2}\left(  \left[  0,1\right]  \right)
%TCIMACRO{\tbigotimes }%
%BeginExpansion
{\textstyle\bigotimes}
%EndExpansion
\mathbb{C}$. Then
\[
\left\Vert \mathbb{\varrho}_{\mathbb{K}}^{\ast}f\right\Vert _{2}%
^{2}=\left\Vert \mathbb{\varrho}_{\mathbb{K}}^{\ast}f_{1}\right\Vert _{2}%
^{2}+\left\Vert \mathbb{\varrho}_{\mathbb{K}}^{\ast}f_{2}\right\Vert _{2}%
^{2}=\left\Vert f_{1}\right\Vert _{2}^{2}+\left\Vert f_{2}\right\Vert _{2}%
^{2}=\left\Vert f\right\Vert _{2}^{2}.
\]
Consequently, $\mathbb{\varrho}_{\mathbb{K}}^{\ast}:L^{2}\left(
\mathcal{O}_{\mathbb{K}}\right)
%TCIMACRO{\tbigotimes }%
%BeginExpansion
{\textstyle\bigotimes}
%EndExpansion
\mathbb{C}\rightarrow L^{2}\left(  \left[  0,1\right]  \right)
%TCIMACRO{\tbigotimes }%
%BeginExpansion
{\textstyle\bigotimes}
%EndExpansion
\mathbb{C}$ is an $L^{2}$-isometry, and by the polarization identities,
\[
\left\langle \mathbb{\varrho}_{\mathbb{K}}^{\ast}f,\mathbb{\varrho
}_{\mathbb{K}}^{\ast}g\right\rangle =\left\langle f,g\right\rangle \text{ for
any }f,g\in L^{2}\left(  \mathcal{O}_{\mathbb{K}}\right)
%TCIMACRO{\tbigotimes }%
%BeginExpansion
{\textstyle\bigotimes}
%EndExpansion
\mathbb{C}%
\]
Therefore, $\mathbb{\varrho}_{\mathbb{K}}^{\ast}$ sends orthonormal bases of
$L^{2}\left(  \mathcal{O}_{\mathbb{K}}\right)
%TCIMACRO{\tbigotimes }%
%BeginExpansion
{\textstyle\bigotimes}
%EndExpansion
\mathbb{C}$ into orthonormal bases of $L^{2}\left(  \left[  0,1\right]
\right)
%TCIMACRO{\tbigotimes }%
%BeginExpansion
{\textstyle\bigotimes}
%EndExpansion
\mathbb{C}$.
\end{proof}

Besides the rings $\mathbb{F}_{p}\left[  \left[  T\right]  \right]  $,
$\mathbb{Z}_{p}$ share many common \ arithmetic and geometric features, their
arithmetic operations are quite different. In the case of $\mathbb{F}%
_{p}\left[  \left[  T\right]  \right]  $, \ given $x=\sum_{i=0}^{\infty}%
x_{i}T^{i}$, $y=\sum_{i=0}^{\infty}y_{i}T^{i}\in\mathbb{F}_{p}\left[  \left[
T\right]  \right]  $,
\begin{equation}
x+y=\sum_{i=0}^{\infty}\left(  x_{i}+y_{i}\right)  T^{i},\text{ and }%
xy=\sum_{i=0}^{\infty}\left(
%TCIMACRO{\dsum \limits_{s=0}^{i}}%
%BeginExpansion
{\displaystyle\sum\limits_{s=0}^{i}}
%EndExpansion
x_{s}y_{i-s}\right)  T^{i}, \label{Formulas}%
\end{equation}
where the digit operations are performed in the field $\mathbb{F}_{p}$. In
particular, the addition in $\mathbb{F}_{p}\left[  \left[  T\right]  \right]
$ does not require carry digits. Using (\ref{Formulas}), we rewrite the
orthonormal basis given in Theorem \ref{Theorem4} as follows. The set of
functions%
\begin{equation}
\theta_{p,\boldsymbol{n},l}\left(  x\right)  :=\exp\left(  \frac{2\pi i}{p}%
%TCIMACRO{\dsum \limits_{i=0}^{l}}%
%BeginExpansion
{\displaystyle\sum\limits_{i=0}^{l}}
%EndExpansion
n_{l-1-i}x_{i}\right)  , \label{Theta_basis}%
\end{equation}
where \
\[
x=%
%TCIMACRO{\dsum \limits_{i=0}^{\infty}}%
%BeginExpansion
{\displaystyle\sum\limits_{i=0}^{\infty}}
%EndExpansion
x_{i}p^{-i-1}\in\left[  0,1\right]  \text{, }\boldsymbol{n}=%
%TCIMACRO{\dsum \limits_{i=0}^{l}}%
%BeginExpansion
{\displaystyle\sum\limits_{i=0}^{l}}
%EndExpansion
n_{i}p^{i-1}\in\mathbb{N}\text{, }n_{i}\in\left\{  0,\ldots,p-1\right\}
\text{, }l\geq1,
\]
form an orthonormal basis of $L^{2}\left(  \left[  0,1\right]  \right)
%TCIMACRO{\tbigotimes }%
%BeginExpansion
{\textstyle\bigotimes}
%EndExpansion
\mathbb{C}$. The digit operations in (\ref{Theta_basis}) are performed in the
field $\mathbb{F}_{p}$.

In the case of $\mathbb{Z}_{p}$, \ given $x=\sum_{i=0}^{\infty}x_{i}p^{i}$,
$y=\sum_{i=0}^{\infty}y_{i}p^{i}\in\mathbb{Z}_{p}$, there are no closed
formulas for the digits of the sum $x+y=\sum_{i=0}^{\infty}a_{i}p^{i}$ or the
product $xy=\sum_{i=0}^{\infty}b_{i}p^{i}$ because the addition in
$\mathbb{Z}_{p}$ requires carry digits. The digits of the sum and product are
compute recursively:%
\begin{equation}
\sum_{i=0}^{L-1}a_{i}p^{i}\equiv\sum_{i=0}^{L-1}x_{i}p^{i}+\sum_{i=0}%
^{L-1}y_{i}p^{i}\text{ }\operatorname{mod}p^{L}\text{, for any }L\geq1,
\label{Formulas1}%
\end{equation}
and
\begin{equation}
\sum_{i=0}^{L-1}b_{i}p^{i}\equiv\left(  \sum_{i=0}^{L-1}x_{i}p^{i}\right)
\left(  \sum_{i=0}^{L-1}y_{i}p^{i}\right)  \text{ }\operatorname{mod}%
p^{L}\text{, for any }L\geq1, \label{Formulas2}%
\end{equation}
where for $s,t\in\mathbb{Z}$, $s\equiv t\operatorname{mod}p^{L}$ means that
$s-t$ is divisible by $p^{L}$. We now rewrite the orthonormal basis given in
Theorem \ref{Theorem4} as follows. The set of functions%
\begin{equation}
\gamma_{p,\boldsymbol{n},l}\left(  x\right)  :=\exp\left(  \frac{2\pi i}%
{p^{l}}\left(
%TCIMACRO{\dsum \limits_{i=0}^{l-1}}%
%BeginExpansion
{\displaystyle\sum\limits_{i=0}^{l-1}}
%EndExpansion
n_{i}p^{i}\right)  \left(
%TCIMACRO{\dsum \limits_{i=0}^{l-1}}%
%BeginExpansion
{\displaystyle\sum\limits_{i=0}^{l-1}}
%EndExpansion
x_{i}p^{i}\right)  \right)  , \label{Gamma-Basis}%
\end{equation}
where the digit operations indicated in (\ref{Gamma-Basis}) are performed in
the quotient ring $\mathbb{Z}/p^{l}\mathbb{Z}$, i.e., they are arithmetic
operations $\operatorname{mod}p^{l}$, and%
\[
x=%
%TCIMACRO{\dsum \limits_{i=0}^{\infty}}%
%BeginExpansion
{\displaystyle\sum\limits_{i=0}^{\infty}}
%EndExpansion
x_{i}p^{-i-1}\in\left[  0,1\right]  \text{, }\boldsymbol{n}=%
%TCIMACRO{\dsum \limits_{i=0}^{l}}%
%BeginExpansion
{\displaystyle\sum\limits_{i=0}^{l}}
%EndExpansion
n_{i}p^{i-1}\in\mathbb{N}\text{, }n_{i}\in\left\{  0,\ldots,p-1\right\}
\text{, }l\geq1,
\]
form an orthonormal basis of $L^{2}\left(  \left[  0,1\right]  \right)
%TCIMACRO{\tbigotimes }%
%BeginExpansion
{\textstyle\bigotimes}
%EndExpansion
\mathbb{C}$.

The above described orthonormal bases of Walsh-Paley type, see, e.g.,
\cite{Chrestenson}, \cite{Fine}, \cite{Paley}. In the case $\mathbb{F}%
_{2}\left[  \left[  T\right]  \right]  $, the basis (\ref{Theta_basis}) agrees
with Walsh-Paley system \cite{Paley}; in this framework the connection with
the group of characters $\Omega\left(  \mathbb{F}_{2}\left[  \left[  T\right]
\right]  \right)  $ is well-known, \cite{Fine}. There are generalized
Walsh-Paley bases, see, e.g. \cite{Chrestenson}, however, the digit operations
used in \cite{Chrestenson}\ are different to the ones used here. On the other
hand. in the last thirty-five years orthonormal bases on $L^{2}\left(
\mathbb{Q}_{p}\right)
%TCIMACRO{\tbigotimes }%
%BeginExpansion
{\textstyle\bigotimes}
%EndExpansion
\mathbb{C}$ has been studied intensively; see, e.g., \cite{A-K-S},
\cite{KKZuniga}, \cite{Kochubei}, \cite{V-V-Z}, and the references therein.

\bigskip

\end{document}